\def\th@remark{%
  \thm@headfont{\bfseries}%
  \normalfont %
  \thm@preskip\topsep \divide\thm@preskip\tw@
  \thm@postskip\thm@preskip
}
\crefname{section}{Sec.}{Sec.}
\crefname{appendix}{App.}{App.}
\crefname{algorithm}{Alg.}{Alg.}
\crefname{figure}{Fig.}{Fig.}
\crefname{definition}{Def.}{Def.}
\crefname{proposition}{Prop.}{Prop.}
\crefname{theorem}{Thm.}{Thm.}
\newcommand{\vheader}{\vspace*{-.125cm}}
\newcommand{\sigmacond}[1]{\sigma}%
\def \x{\mathbf{x}}
\def\eqref#1{equation~\ref{#1}}
\def\1{\bm{1}}
\def\rvc{{\mathbf{c}}}
\def\rve{{\mathbf{e}}}
\def\rvu{{\mathbf{i}}}
\def\rvq{{\mathbf{q}}}
\def\rvr{{\mathbf{r}}}
\def\rvu{{\mathbf{u}}}
\def\rvv{{\mathbf{v}}}
\def\rvx{{\mathbf{x}}}
\def\rvy{{\mathbf{y}}}
\def\rvz{{\mathbf{z}}}
\def\rmD{{\mathbf{D}}}
\def\vmu{{\bm{\mu}}}
\DeclareMathAlphabet{\mathsfit}{\encodingdefault}{\sfdefault}{m}{sl}
\SetMathAlphabet{\mathsfit}{bold}{\encodingdefault}{\sfdefault}{bx}{n}
\def\gD{{\mathcal{D}}}
\def\gL{{\mathcal{L}}}
\def\gN{{\mathcal{N}}}
\def\gS{{\mathcal{S}}}
\def\gZ{{\mathcal{Z}}}
\newcommand{\pdata}{p_{\rm{data}}}
\newcommand{\ptarget}{p_{\rm{target}}}
\newcommand{\E}{\mathbb{E}}
\DeclareMathOperator*{\softmax}{\operatorname{softmax}}
\newcommand{\dkl}[3]{\mathbb{D}_{\texttt{KL}}^{#1}\left(#2 \, \|\, #3\right)}
\newcommand{\dis}[3]{\mathbb{D}_{\texttt{IS}}^{#1}\left(#2 \, \|\, #3\right)}
\newcommand{\bregd}[4]{\mathcal{D}_{#1}^{#2}\left(#3 \, , \, #4\right)}
\newcommand{\ddd}[3]{\mathbb{D}_{#1}\left(#2 \, \|\, #3\right)}
\newcommand{\probar}{p^{\texttt{AR}}}
\newcommand{\texteight}{{\small \textsc{Text8}}\xspace}
\newcommand{\owt}{{\small \textsc{OpenWebText}}\xspace}
\newcommand{\mask}{{\small \textsf{M}}\xspace}
\newcommand{\tcsmscore}{{$\mathcal{L}_{\texttt{score}}$}\xspace}
\newcommand{\smoltcsmscore}{{$\ell_{\texttt{score}}$}\xspace}
\newcommand{\tcsmdistrib}{{$\mathcal{L}_{\texttt{distrib}}$}\xspace}
\newcommand{\inlineeq}[1]{{\small \(#1 \)}}
\theoremstyle{plain}
\newtheorem{theorem}{Theorem}[section]
\newtheorem{lemma}[theorem]{Lemma}
\newtheorem{definition}[theorem]{Definition}
\theoremstyle{definition}
\theoremstyle{remark}
\newcommand{\tcsm}{{\texttt{TCSM}}\xspace}
\icmltitlerunning{Target Concrete Score Matching: A Holistic Framework for Discrete Diffusion}
\begin{document}
\doparttoc %
\faketableofcontents

\onecolumn
\icmltitle{Target Concrete Score Matching: A Holistic Framework for Discrete Diffusion}

\icmlsetsymbol{first}{*}
\icmlsetsymbol{senior}{**}

\begin{icmlauthorlist}
  \icmlauthor{Ruixiang Zhang}{}
  \icmlauthor{Shuangfei Zhai}{}
  \icmlauthor{Yizhe Zhang}{}
  \icmlauthor{James Thornton}{}
  \icmlauthor{Zijing Ou}{}
  \icmlauthor{Joshua Susskind}{}
  \icmlauthor{Navdeep Jaitly}{}

\vspace*{0.20cm}
{\large\textsc{Apple}}
  
  \end{icmlauthorlist}

  \icmlcorrespondingauthor{Ruixiang Zhang}{ruixiangz@apple.com}

\icmlkeywords{Machine Learning, ICML}

\printAffiliationsAndNotice{} %

\vspace{1.0cm}
\begin{abstract}
Discrete diffusion is a promising framework for modeling and generating discrete data.
In this work, we present Target Concrete Score Matching (\tcsm), a novel and versatile objective for training and fine-tuning discrete diffusion models.
\tcsm provides a general framework with broad applicability. It supports pre-training discrete diffusion models directly from data samples, and many existing discrete diffusion approaches naturally emerge as special cases of our more general \tcsm framework.
Furthermore, the same \tcsm objective extends to post-training of discrete diffusion models, including fine-tuning using reward functions or preference data, and distillation of knowledge from pre-trained autoregressive models.
These new capabilities stem from the core idea of \tcsm, estimating the concrete score of the target distribution, which resides in the original ~(clean) data space. This allows seamless integration with reward functions and pre-trained models, which inherently only operate in the clean data space rather than the noisy intermediate spaces of diffusion processes.
Our experiments on language modeling tasks demonstrate that \tcsm matches or surpasses current methods. Additionally, \tcsm is versatile, applicable to both pre-training and post-training scenarios, offering greater flexibility and sample efficiency.
\end{abstract}

\section{Introduction}
\label{sec:intro}
Discrete diffusion models have emerged as a transformative paradigm in generative modeling, achieving remarkable success across diverse domains.
Despite their advancements in closing the performance gap with autoregressive~(AR) models through innovative training techniques, these models still face fundamental limitations that impede their broader adoption and practical use.

The current landscape of discrete diffusion models reveals two critical shortcomings. First, existing approaches are fragmented in their theoretical foundations and training methodologies. Methods such as SEDD~\citep{sedd} employ denoising score entropy, while CTMC~\citep{campbell_ctmc} derives objectives from continuous-time Markov chains, and approaches like those in \citep{shi2024simplified,mdlm,xu2024energy} specialize in absorbing state diffusion models with specific assumptions. This fragmentation creates a barrier to developing unified and theoretically grounded approaches.

Second, and perhaps more significantly, current discrete diffusion models predominantly focus on pre-training, largely neglecting the crucial post-training phase that has proven essential for downstream task optimization in autoregressive models. While AR models benefit from well-established post-training techniques such as reinforcement learning with human feedback~\citep{ziegler2019fine,ouyang2022training,bai2022training}, direct preference optimization~\citep{rafailov2024direct}, and knowledge distillation~\citep{gu2024minillm}, discrete diffusion models lack comparable capabilities. This limitation significantly restricts their practical applicability and prevents them from achieving performance parity with AR counterparts in many real-world scenarios.

\textbf{Contributions}~
We introduce Target Concrete Score Matching (\tcsm), a novel framework for discrete diffusion models based on the concrete score~\citep{meng2022concrete}. By operating in the clean data space, \tcsm seamlessly integrates reward functions and pre-trained models while integrating pre-training and post-training. Our key contributions are:

\begin{itemize}
  \item We develop the general \tcsm framework for discrete diffusion models (\cref{sec:tcsm}), which provides flexibility across various diffusion formulations and model parameterization.

    \item We showcase the effectiveness of \tcsm in pre-training contexts (\cref{sec:tcsm_pre_training}). This includes the development of efficient Monte Carlo estimation techniques for training discrete diffusion models directly from data samples (\cref{sec:tcsm_learning_from_data}), methods to expedite training through the use of parametric target distribution models (\cref{sec:tcsm_pretraining_from_p1}), and offers a perspective for contextualizing several existing discrete diffusion methods within our framework.

    \item We explore the application of \tcsm in various post-training scenarios (\cref{sec:tcsm_post_training}). This encompasses reward-guided fine-tuning for optimizing downstream tasks (\cref{sec:post_training_reward}), preference-based fine-tuning (\cref{sec:post_training_dpo}), and the distillation of knowledge from pre-trained autoregressive models (\cref{sec:tcsm_distillation}).
\end{itemize}

\section{Preliminaries}
\label{sec:preliminaries}

\textbf{Notation}
Let {\small \(\mathcal{S} = \mathcal{X}^L\)} be our discrete state space, where {\small \(\mathcal{X} = \{1,\ldots,V\}\)} is the vocabulary, and {\small \(L\)} is the sequence length.
{\small \(\mathbf{x} \coloneqq \left[x^1, \ldots, x^L\right] \in \mathcal{S}\)}, where {\small \(x^i \in \mathcal{X}\)} is the {\small \(i\)}-th token in the sequence.
The notation {\small \(\rvx^{\neq i}\)} is used to indicate all tokens in the sequence except for the one at position {\small \(i\)}. When referring to a sequence with a specific token {\small \(y_i\)} at position {\small \(i\)}, we write {\small \([y^i, \rvx^{\neq i}] = [x^1, \ldots, x^{i-1}, y^i, x^{i+1}, \ldots, x^L]\)}.
For any token {\small \(x \in \mathcal{X}\)}, we denote its one-hot vector representation as {\small \(\rve_x \in \mathbb{R}^V\)}.
The function {\small \(\delta(x, y)\)} returns {\small \(1\)} if {\small \(x = y\)} and {\small \(0\)} otherwise.
Additionally, we designate a special mask token {\small \(\mask \in \mathcal{X}\)} to serve as an absorbing state in the discrete diffusion model.

\textbf{Continuous Time Markov Chains Model}~
The Continuous Time Markov Chain~(CTMC) model is an {\small \(\mathcal{S}\)}-valued time-dependent family of random variables {\small \((\rvx_t)_{t\in[0,1]}\)} that form a Markov chain characterized by the {probability transition kernel}
{\small \(
p_{t+\Delta t|t}(\rvy|\rvx) =\delta(\rvy,\rvx)+u_t(\rvy,\rvx)\Delta t + o(\Delta t)
\)}
with the initial distribution of the process at time {\small \(t = 0\)} as {\small \(p_0(\rvx_0)\)}.
{\small \(u_t(\rvy, \rvx): \mathcal{S} \times \mathcal{S} \to \mathbb{R}\)} is called the {velocity} or the {rate matrix}, which indicate the speed at which the probability transitions between states.
To make sure the transition probabilities {\small \(p_{t+\Delta t|t}(\rvy|\rvx)\)} are normalized, {\small \(u_t(\rvy, \rvx)\)} need to satisfy {\small \( u_t(\rvy, \rvx) \geq 0 \text{ for all } \rvy \neq \rvx \)} and {\small \(\sum_{\rvy} u_t(\rvy, \rvx) = 0\)}.

\textbf{Discrete Flow Matching}~
We use the discrete flow matching~\citep{campbell2024generative,discrete_flow_matching} as a general framework to introduce the discrete diffusion models.
Our goal is to transfer samples {\small \(\rvx_0 \sim p_0(\rvx_0)\)} from a \textit{source} distribution {\small \(p_0\)} to samples {\small \(\rvx_1 \sim p_1(\rvx_1)\)} from a \textit{target} distribution {\small \(p_1\)}.
Source and target samples can be related by means of the independent coupling {\small \((\rvx_0, \rvx_1) \sim p_0(\rvx_0)p_1(\rvx_1)\)}, or associate by means of a general coupling {\small \(\pi_{0,1}(\rvx_0, \rvx_1)\)}.
For independent coupling, common choices for the source distribution is either {\small \(p_0^{\text{unif}}(\rvx_0) = \prod_{i=1}^L \frac{1}{V}\)}, a uniform distribution over {\small \(\mathcal{S}\)}; and (ii) {\small \(p_0^{\text{mask}}(\rvx_0) = \prod_{i=1}^L \delta \{ \mask, x_0^i \}\)}, a delta measure concentrated on the absorbing state {\small \(\mask\)}.

Similar to the continuous flow matching model~\citep{lipman2022flow,liu2022flow}, we construct a probability path {\small \(p_t(\rvx_t)\)} interpolating between {\small \(p_0\)} and {\small \(p_1\)}.
By conditioning on {\small \(\rvx_1\)}, we build a probability path {\small \(p_{t}(\rvx_t) = \E_{p_1(\rvx_1)} p_{t|1}(\rvx_t|\rvx_1)\)}.
The marginal velocity {\small \(u_t(\rvy, \rvx)\)} generating probability path {\small \(p_{t}(x_t)\)} can be computed by
{\small \(
\label{eq:marginal_velocity_to_posterior}
u_t(\rvy_t, \rvx_t) = \E_{p_{1|t}(\rvx_1 | \rvx_t)} u_t(\rvy_t, \rvx_t | \rvx_1),
\)}
where {\small \(p_{1|t}(\rvx_1 | \rvx_t) = \frac{p_{1}(\rvx_1)p_{t|1}(\rvx_t|\rvx_1)}{p_t(\rvx_t)}\)} is the true conditional distribution predicting
clean data {\small \(\rvx_1\)} from noisy data {\small \(\rvx_t\)}, and {\small \(u_t(\rvy_t, \rvx_t | \rvx_1)\)} is the conditional velocity generating {\small \(p_{t|1}(\rvx_t|\rvx_1)\)}.

\textbf{Training}~
The goal is to approximate the velocity {\small \(u_t(\rvy, \rvx)\)} using a neural network.
We can parameterize the velocity {\small \(u_t^\theta(\rvy, \rvx)\)} directly, and optimize the conditional flow matching loss
{\small \(
  \label{eq:cfm_vel_loss}
   \mathcal{L}_{\text{CFM}}^{\texttt{vel}} = \E_{\omega(t) p_1(\rvx_1) p_{t|1}(\rvx_t | \rvx_1)} \bregd{F}{}{u_t(\rvy_t, \rvx_t)}{u_t^\theta(\rvy_t, \rvx_t)},
\)}
where we sample time {\small \(t\)} from distribution {\small \(\omega(t)\)}, and {\small \(\bregd{F}{}{\rvu}{\rvv} = F(\rvu) - F(\rvv) - \langle \nabla F(\rvv), \rvu - \rvv \rangle\)} is the Bregman divergence with respect to the strictly convex function {\(F\)}.
We also need to make sure that {\small \(u_t^\theta(\rvy_t, \rvx_t)\)} satisfies the rate conditions.

As shown above, the velocity is governed by the true denoising distribution {\small \(p_{1|t}(\rvx_1|\rvx_t)\)}, so instead of parameterizing the velocity directly, we can use a model {\small \(p_{1|t}^\theta(\rvx_1|\rvx_t)\)} to approximate {\small \(p_{1|t}(\rvx_1|\rvx_t)\)} by minimizing the loss

{
\begin{equation}
\label{eq:cfm_data_loss}
 \hspace{-0.25cm}\mathcal{L}_{\text{CFM}}^{\texttt{d}} = \E_{\omega(t) p_1(\rvx_1) p_{t|1}(\rvx_t | \rvx_1)} \ddd{}{p_{1|t}(\rvx_1 | \rvx_t)}{p_{1|t}^\theta(\rvx_1 | \rvx_t)},
\end{equation}
}
where {\small \(\ddd{}{}{\cdot}{\cdot}\)} is some statistical divergence.
For example~\citep{campbell2024generative} uses the KL divergence which gives rise to the cross-entropy loss
{\small \(
\E_{t,\rvx_1,\rvx_t} -\log p_{1|t}^\theta(\rvx_1|\rvx_t)
\)}
, which has been shown to be a upper bound on the negative model log-likelihood of the target data distribution.
\(\mathcal{L}_{\text{CFM}}^{\texttt{d}}\) is often called the \textit{data-prediction} loss, as the model \(p_{1|t}^\theta(\rvx_1|\rvx_t)\) is trained to predicts the clean data \(\rvx_1\) from the noisy data \(\rvx_t\) by aligning to the true denoising distribution \(p_{1|t}(\rvx_1|\rvx_t)\).

\begin{table}[t]
  \centering  
  \label{tab:tcsm_vs_dsm_revised} %
  \small %
  \begin{tabular}{@{}llll@{}}
  \toprule
  \textbf{Domain} & \textbf{Approach}          & \textbf{Target Object} & \textbf{Target Quantity} \\
  \midrule
  Discrete   & \textit{Target} CSM (Ours) & Concrete Score of $p_1$                & $\left[ \frac{p_1(\rvy_1)}{p_1(\rvx_1)} \right]_{\rvy_1 \neq\rvx_1}$ \\
  \cmidrule(lr){2-4} %
  Discrete   & Denoising CSM\citep{sedd,meng2022concrete}          & Concrete Score of $p_{t|1}(\cdot|\rvx_1)$   & $\left[ \frac{p_{t|1}(\rvy_t|\rvx_1)}{p_{t|1}(\rvx_t|\rvx_1)} \right]_{\rvy_t \neq \rvx_t }$ \\
  \midrule
  Continuous & \textit{Target} SM~\citep{tsm}          & Score of $p_1$                & $\nabla_{\rvx_1} \log p_1(\rvx_1)$ \\
  \cmidrule(lr){2-4} %
  Continuous & Denoising SM~\citep{vincent2011connection,song2020scoresde}           & Score of $p_{t|1}(\cdot|\rvx_1)$   & $\nabla_{\rvx_t} \log p_{t|1}(\rvx_t|\rvx_1)$ \\
  \bottomrule
  \end{tabular}
  \caption{Comparison of score matching objectives across continuous and discrete domains. The key distinction lies in whether the target quantity is derived from the clean data distribution ($p_1$) or the forward noising kernel ($p_{t|1}$). SM = Score Matching, CSM = Concrete Score Matching.}
  \end{table}

\section{Target Concrete Score Matching}
\label{sec:tcsm}

In this section, we introduce Target Concrete Score Matching (\tcsm), a novel framework for training discrete diffusion models. We first present the general formulation before exploring specific instantiations in subsequent sections.

At the heart of our approach lies the concrete score~\citep{meng2022concrete}, which serves as a discrete analog to the continuous score function {\small \(\nabla_{\rvx} \log p(\rvx)\)} used in continuous diffusion models.

\begin{definition}[Concrete Score~\citep{meng2022concrete}]
\label{def:concrete_score}
Let {\small \( p(\mathbf{x}) \)} be any discrete distribution over {\small \(\mathcal{S}\)}.
We denote {\small \(\mathcal{N} : \mathcal{S} \to \mathcal{S}^{K_{\rvx}}\)} as the function mapping each example {\small \(\mathbf{x} \in \mathcal{S}\)} to a (multi)set of neighbors, such that {\small \(\mathcal{N}(\mathbf{x}) = \{\mathbf{x}_{n_1}, \ldots, \mathbf{x}_{n_k}\}\)} and {\small \(K_{\rvx} = |\mathcal{N}(\mathbf{x})|\)}.
The neighborhood-induced graph {\small \(G\)} is the directed graph which results from adding a directed edge from {\small \(\mathbf{x}\)} to each node in its neighborhood set {\small \(\mathbf{x}_n \in \mathcal{N}(\mathbf{x})\)}, for all {\small \(\mathbf{x} \in \text{supp}(p(\mathbf{x}))\)}.
The concrete score for a given distribution {\small \(p(\mathbf{x})\)} evaluated at {\small \(\mathbf{x}\)} is
{\small \(
\left[ \frac{p(\mathbf{x}_{n_1})}{p(\mathbf{x})} - 1, \ldots, \frac{p(\mathbf{x}_{n_k})}{p(\mathbf{x})} - 1 \right]^{\top}.
\)}
We define {\small \(\rvc_{p}(\mathbf{x}; \mathcal{N}) : \mathcal{S} \rightarrow \mathbb{R}^{|\mathcal{N}(\mathbf{x})|}\)} by a constant shift of {\small \(\mathbf{1}\)}, for notational convenience.

{
\small
\begin{equation}
\rvc_{p}(\mathbf{x}; \mathcal{N}) \coloneqq \left[ \frac{p(\mathbf{x}_{n_1})}{p(\mathbf{x})}, \ldots, \frac{p(\mathbf{x}_{n_k})}{p(\mathbf{x})} \right]^{\top}.
\end{equation}
}
\end{definition}

Our approach builds upon the discrete flow matching framework~\citep{campbell2024generative,discrete_flow_matching} by adopting the \textit{data-prediction} objective in~\cref{eq:cfm_data_loss}.
This objective offers crucial flexibility, remaining valid for various model architectures and naturally supporting different probability paths without structural changes.

\textbf{Target Concrete Score Matching}~
We now introduce the target concrete score matching~(\tcsm) objective, which aims to align our model denoising distribution {\small \(p_{1|t}^{\theta}(\rvx_1 | \rvx_t)\)} with the true denoising distribution {\small \(p_{1|t}(\rvx_1 | \rvx_t)\)}, by matching their respective concrete scores, {\small \(\rvc_{p^{\theta}_{1|t}}(\rvx_1; \mathcal{N} | \rvx_t)\)} and {\small \(\rvc_{p_{1|t}}(\rvx_1; \mathcal{N} | \rvx_t)\)}.
The general \tcsm objective function is given by:
\begin{small}
\begin{equation}
\label{eq:tcsm_objective}
\mathcal{L}_{\textsf{TCSM}}\left(\theta;\gN,\gD,h\right) = \E_{\omega(t)p(\rvx_t)h(\rvx_1|\rvx_t)} \bregd{}{}{\rvc_{p_{1|t}}}{\rvc_{p^{\theta}_{1|t}}},
\end{equation}
\end{small}
where {\small \(h(\rvx_1 | \rvx_t)\)} serves as a proposal distribution - a probability mass function that ensures {\small \(\text{supp}(p_{1|t}(\rvx_1 | \rvx_t)) \subseteq \text{supp}(h(\rvx_1 | \rvx_t))\)}.
The term {\small \(\mathcal{D}\)} represents a general divergence measure that quantifies the discrepancy between the concrete scores.

\begin{restatable}[]{prop}{tcsmoptimum} \label{thm:tcsm_optimum}
Let {\small \(\mathcal{N}\)} define a neighborhood structure that induces a weakly connected graph {\small \(G\)} over the support of {\small \(p_{1|t}(\cdot|\rvx_t)\)}. Assuming mild regularity conditions on the divergence measure {\small \(\mathcal{D}\)}, the global minimum of the \tcsm objective {\small \(\mathcal{L}_{\textsf{TCSM}}\)} in~\cref{eq:tcsm_objective} guarantees that {\small \(p_{1|t}^{\theta}(\cdot|\rvx_t)\)} equals {\small \(p_{1|t}(\cdot|\rvx_t)\)} almost everywhere with respect to {\small \(p(\rvx_t)\)}.
\end{restatable}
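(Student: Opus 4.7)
My strategy is to reduce the statement to a pointwise matching of concrete scores and then show that matching ratios on a weakly connected graph, together with normalization, pins down the distribution uniquely. I would proceed in four steps.

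First, I would invoke the defining property of the divergence $\mathcal{D}$ (which under the mild regularity assumption is nonnegative and vanishes iff its two arguments coincide, as holds for any Bregman divergence generated by a strictly convex $F$). Because the TCSM loss in \cref{eq:tcsm_objective} is an expectation of such a nonnegative divergence, its global minimum is attained precisely when
$$\rvc_{p^{\theta}_{1|t}}(\rvx_1;\mathcal{N}|\rvx_t) \;=\; \rvc_{p_{1|t}}(\rvx_1;\mathcal{N}|\rvx_t)$$
for $\omega(t)\,p(\rvx_t)\,h(\rvx_1|\rvx_t)$-almost every $(t,\rvx_t,\rvx_1)$. The support hypothesis $\mathrm{supp}(p_{1|t}(\cdot|\rvx_t))\subseteq\mathrm{supp}(h(\cdot|\rvx_t))$ then guarantees that this equality holds for every $\rvx_1\in\mathrm{supp}(p_{1|t}(\cdot|\rvx_t))$ and for $p(\rvx_t)$-almost every $\rvx_t$.

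Second, I would unpack what equality of concrete scores means componentwise: for every such $\rvx_1$ and every neighbor $\rvy_1\in\mathcal{N}(\rvx_1)$,
$$\frac{p^{\theta}_{1|t}(\rvy_1|\rvx_t)}{p^{\theta}_{1|t}(\rvx_1|\rvx_t)} \;=\; \frac{p_{1|t}(\rvy_1|\rvx_t)}{p_{1|t}(\rvx_1|\rvx_t)}.$$
Since $G$ is weakly connected on $\mathrm{supp}(p_{1|t}(\cdot|\rvx_t))$, any two support points $\rvx_1,\rvy_1$ are joined by an undirected path $\rvx_1=\rvz_0,\rvz_1,\ldots,\rvz_m=\rvy_1$ in $G$; telescoping the above ratio along this path (inverting the equation whenever the edge is traversed backwards, which is legitimate as all relevant values are strictly positive on the support) yields the same global ratio for $p^\theta_{1|t}$ and $p_{1|t}$. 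Therefore $p^{\theta}_{1|t}(\cdot|\rvx_t) = c(\rvx_t)\,p_{1|t}(\cdot|\rvx_t)$ on the support, for some $c(\rvx_t)>0$.

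Third, I would close the argument with two observations. For any $\rvx_1$ in the support and any $\rvy_1\in\mathcal{N}(\rvx_1)$ lying outside the support, the matched concrete-score entry is $0$, which forces $p^{\theta}_{1|t}(\rvy_1|\rvx_t)=0$; together with weak connectedness this shows $p^{\theta}_{1|t}(\cdot|\rvx_t)$ and $p_{1|t}(\cdot|\rvx_t)$ share the same support. Since both are normalized probability mass functions on $\mathcal{S}$, summing $p^{\theta}_{1|t}(\cdot|\rvx_t)=c(\rvx_t)\,p_{1|t}(\cdot|\rvx_t)$ over the support gives $c(\rvx_t)=1$, proving $p^{\theta}_{1|t}(\cdot|\rvx_t)=p_{1|t}(\cdot|\rvx_t)$ for $p(\rvx_t)$-a.e.\ $\rvx_t$.

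The hard part will be the ratio-propagation step, specifically handling the fact that $G$ is directed while only weakly connected: one must carefully track orientations along a connecting path and justify that each ratio used in the telescoping is well defined (i.e.\ denominators are strictly positive). This is exactly where the hypothesis that weak connectedness holds on the support of $p_{1|t}(\cdot|\rvx_t)$ does the work, and it is the place where a careful formal proof would need the most attention.
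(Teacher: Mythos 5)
Your proposal is correct and shares the paper's skeleton (global minimum $\Rightarrow$ the Bregman/divergence term vanishes $h$-a.e.\ $\Rightarrow$ concrete scores coincide on $\mathrm{supp}(p_{1|t}(\cdot|\rvx_t))$ $\Rightarrow$ distributions coincide), but it diverges at the last step: the paper delegates the identifiability of a distribution from its concrete score entirely to a completeness lemma imported from \citet{meng2022concrete}, whereas you prove that step from scratch by telescoping the matched ratios along undirected paths of the weakly connected graph and then invoking normalization to fix the constant $c(\rvx_t)$. Your route is more self-contained and makes explicit exactly where weak connectedness and strict positivity on the support are used, which the paper's citation hides. Two points to tighten. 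First, your claim that $p^{\theta}_{1|t}$ and $p_{1|t}$ ``share the same support'' only follows for states that are \emph{neighbors} of support points; weak connectedness is assumed only over $\mathrm{supp}(p_{1|t}(\cdot|\rvx_t))$, so $p^\theta_{1|t}$ could a priori place mass on states not adjacent to that support, in which case summing $p^{\theta}_{1|t} = c(\rvx_t)\,p_{1|t}$ over the support only yields $c(\rvx_t)\le 1$ rather than $c(\rvx_t)=1$; you need an extra assumption (e.g.\ the model's support is contained in the union of the support and its neighborhood, or the completeness is taken over all of $\mathcal{S}$ as in the cited lemma) to close this. Second, your opening sentence characterizes the global minimum as the locus where the divergence vanishes, which tacitly assumes the value zero is attainable; the paper's ($\Leftarrow$) direction supplies exactly this realizability argument and should be stated rather than left implicit.
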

\begin{proof}
Please refer to~\cref{ap:proof:tcsm_optimum}.
\end{proof}

The effectiveness of our approach fundamentally relies on the connectivity of the graph {\small \(G\)} induced by the neighborhood definition {\small \(\mathcal{N}\)}. To satisfy this requirement while offering flexible levels of granularity, we introduce a family of neighborhood structures based on Hamming distance.

\begin{definition}[$k$-Hamming Neighborhood]
\label{def:k_hamming_neighborhood}
For any sequence {\small \(\rvx \in \mathcal{S}\)} and integer {\small \(k \geq 1\)}, the {\small \(k\)}-Hamming neighborhood is defined as
{\small \(
\mathcal{N}^k(\rvx) \coloneqq \{\rvy \in \mathcal{S} \mid \text{Hamming-distance}(\rvx, \rvy) \leq k\},
\)}
comprising all sequences that differ from {\small \(\rvx\)} in at most {\small \(k\)} positions.
\end{definition}

This family of neighborhood structures provides a flexible framework for \tcsm, as {\small \(\mathcal{N}^k\)} induces a weakly connected graph for any {\small \(1 \leq k \leq L\)}.
By varying {\small \(k\)}, we can create a spectrum of \tcsm objectives that balance local and global perspectives.
The smallest neighborhood {\small \(\mathcal{N}^1\)} focuses on immediate neighbors with single token differences, while {\small \(\mathcal{N}^{\text{full}} \coloneqq \mathcal{N}^L\)} encompasses the entire sequence space.

\textbf{\tcsm with \(1\)-Hamming Neighborhood}~
When applying the \tcsm framework to the {\small \(1\)}-Hamming neighborhood - where sequences differ by at most one token - we can represent the concrete score {\small \(c_{p}(\rvx; \mathcal{N}^1 | \rvx_t)\)} as a {\small \(V \times L\)} matrix by replicating the original sequence {\small \(\rvx\)} {\small \(L\)} times, with each column {\small \(i\)} defined as:
{
\footnotesize
\(
\left[\frac{p(x_1^i=j, \rvx^{\neq i} \mid \rvx_t)}{p(\rvx \mid \rvx_t)}\right]_{1\le j \le V}^\top
\)
}.
By decomposing the \tcsm objective in~\cref{eq:tcsm_objective} into {\small \(L\)} groups based on their sequence positions, the \tcsm objective can be expressed as:

{
\small
\begin{align}
\label{eq:score_based_tcsm}
&\hspace{-0.1cm}\mathcal{L}_{\texttt{score}}\left(\theta; \gN^1, \gD, h\right) \!=\! \E_{\omega(t) p(\rvx_t) h(\rvx_1 | \rvx_t)} \sum_{i=1}^L \ell^i_{\texttt{score}}, \\[-0.5em]
&\hspace{-0.1cm}\ell^i_{\texttt{score}} = \mathcal{D}\left( \left[{\small \frac{p_{1|t}(y_1^i, \rvx_1^{\neq i} | \rvx_t)}{p_{1|t}(x_1^i, \rvx_1^{\neq i} | \rvx_t)}}\right]_{y_1^i=1}^V, \left[{\small \frac{p^{\theta}_{1|t}(y_1^i, \rvx_1^{\neq i} | \rvx_t)}{p^{\theta}_{1|t}(x_1^i, \rvx_1^{\neq i} | \rvx_t)}}\right]_{y_1^i=1}^V \right). \nonumber
\end{align}
}
This objective is termed the \textit{score-based} \tcsm (\tcsmscore) as it directly operates on concrete scores.
Alongside the score-based objective, we propose another objective centered on distribution matching:
\begin{small}
\begin{align}
&\hspace{-0.3cm}\mathcal{L}_{\texttt{distrib}}(\theta; \gN^1, \gD, h) = \E_{\omega(t) p(\rvx_t)} \sum_{i=1}^L \E_{h(\rvx_1^{\neq i} | \rvx_t)} \ell^i_{\texttt{distrib}}, \label{eq:distribution_based_tcsm} \\[-0.4em]
&\hspace{-0.3cm}\ell^i_{\texttt{distrib}} = \ddd{}{\small {p_{1|t}(x_1^i | \rvx_1^{\neq i}, \rvx_t)}}{\small {p^{\theta}_{1|t}(x_1^i | \rvx_1^{\neq i}, \rvx_t)}} \nonumber
\end{align}
\end{small}

The \tcsmdistrib objective transitions from matching joint distributions {\small \(\rvc_{p_{1|t}}(\rvx_1|\rvx_t)\)} via concrete score matching to aligning conditional distributions {\small \(p_{1|t}(\cdot|\rvx_1^{\neq i}, \rvx_t)\)}.
This objective uses a statistical divergence {\small \(\ddd{}{\cdot}{\cdot}\)} to quantify differences in probability distribution space, setting it apart from the score-based method.

The following theorem demonstrates that both \tcsmscore and \tcsmdistrib are effective for aligning the concrete score between the true distribution and the model distribution.
\begin{restatable}[]{prop}{tcsmscoredistribequiv} \label{thm:tcsm_score_distrib_equiv}
Assuming the divergence measures $\mathcal{D}$ used in \cref{eq:score_based_tcsm} and $\mathbb{D}$ used in \cref{eq:distribution_based_tcsm} are strictly proper, the score-based objective \tcsmscore~\cref{eq:score_based_tcsm} achieves its global minimum if and only if the distribution-based objective \tcsmdistrib~\cref{eq:distribution_based_tcsm} achieves its global minimum. Both minima correspond to the condition where the general \tcsm objective~\cref{eq:tcsm_objective} is minimized, implying $p^{\theta}_{1|t}(\cdot | \rvx_t) = p_{1|t}(\cdot | \rvx_t)$ almost everywhere w.r.t. $p(\rvx_t)$.
\end{restatable}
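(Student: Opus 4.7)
The plan is to reduce both $\tcsmscore$ and $\tcsmdistrib$ to the same pointwise optimality condition --- matching of all single-position full conditionals $p_{1|t}(x_1^i \mid \rvx_1^{\neq i}, \rvx_t)$ against $p^\theta_{1|t}(x_1^i \mid \rvx_1^{\neq i}, \rvx_t)$ --- and then invoke \cref{thm:tcsm_optimum} specialized to $\gN = \gN^1$, which induces a weakly connected graph (as noted right after \cref{def:k_hamming_neighborhood}), to identify this condition with the joint equality $p^\theta_{1|t}(\cdot\mid\rvx_t) = p_{1|t}(\cdot\mid\rvx_t)$ almost everywhere w.r.t.\ $p(\rvx_t)$.

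For the score-based side, I would first observe that \cref{eq:score_based_tcsm} is literally the general objective \cref{eq:tcsm_objective} evaluated at $\gN = \gN^1$, only rewritten by collecting the $L\cdot(V-1)$ neighbors of $\rvx_1$ into $L$ position-indexed groups; the concrete-score matrix from \cref{def:concrete_score} then decomposes column-by-column into the per-position terms $\ell^i_{\texttt{score}}$. Hence \cref{thm:tcsm_optimum} already delivers the optimality condition $p^\theta_{1|t}(\cdot\mid\rvx_t) = p_{1|t}(\cdot\mid\rvx_t)$ a.e. Moreover, strict propriety of $\gD$ forces each per-position ratio vector $[p_{1|t}(y_1^i,\rvx_1^{\neq i}\mid\rvx_t)/p_{1|t}(x_1^i,\rvx_1^{\neq i}\mid\rvx_t)]_{y_1^i}$ to coincide with its model counterpart at the minimum; since normalizing these ratios yields exactly the full conditional $p_{1|t}(\cdot \mid \rvx_1^{\neq i}, \rvx_t)$, the zero set of $\tcsmscore$ is equivalently characterized as the set where all single-position conditionals agree.

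For the distribution-based side, strict propriety of $\mathbb{D}$ together with nonnegativity of each integrand makes $\ell^i_{\texttt{distrib}}$ vanish iff the two conditionals $p_{1|t}(\cdot \mid \rvx_1^{\neq i}, \rvx_t)$ and $p^\theta_{1|t}(\cdot \mid \rvx_1^{\neq i}, \rvx_t)$ coincide; hence $\tcsmdistrib$ is globally minimized iff these conditionals match almost everywhere w.r.t.\ $p(\rvx_t)h(\rvx_1^{\neq i}\mid\rvx_t)$ for every position $i$ --- the same characterization derived on the score side, yielding the desired iff. To return from matching conditionals to the joint equality asserted in the theorem, I would note that the conditionals determine all ratios $p(y^i,\rvx^{\neq i}\mid\rvx_t)/p(x^i,\rvx^{\neq i}\mid\rvx_t)$, which are precisely the $\gN^1$ concrete scores, and re-apply \cref{thm:tcsm_optimum} to conclude joint equality.

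The step I expect to be hardest is the measure-theoretic bookkeeping on the distribution-based direction: the expectation in $\tcsmdistrib$ is taken against the marginal $h(\rvx_1^{\neq i}\mid\rvx_t)$, so strict propriety alone only pins down the conditionals on its support. Lifting this to equality of joints over the full support of $p_{1|t}(\cdot\mid\rvx_t)$ requires the support-covering hypothesis $\mathrm{supp}(p_{1|t}(\cdot\mid\rvx_t)) \subseteq \mathrm{supp}(h(\cdot\mid\rvx_t))$ already built into the general $\tcsm$ definition, combined with a ratio-chasing / Gibbs-style walk along Hamming-paths in the weakly connected neighborhood graph $G$ --- the same mechanism that underlies \cref{thm:tcsm_optimum} and that I would cite rather than re-derive.
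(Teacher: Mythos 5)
Your proposal is correct and follows essentially the same route as the paper: both arguments identify the $i$-th column of the $\gN^1$ concrete score with the (unnormalized) full conditional $p_{1|t}(\cdot\mid\rvx_1^{\neq i},\rvx_t)$ --- the paper via the softmax-of-log-score identity in \cref{eq:tcsm_distrib_target}, you via normalization of the ratio vector --- so that strict propriety makes both objectives vanish exactly when all single-position conditionals agree, and \cref{thm:tcsm_optimum} with the weakly connected $\gN^1$ graph lifts this to joint equality. Your explicit attention to the support-covering condition on $h$ is a point the paper's proof leaves implicit, but it does not change the argument.
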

\begin{proof}
Please refer to~\cref{ap:proof:tcsm_score_distrib_equiv}.
\end{proof}

Practical implementation of \tcsmscore and \tcsmdistrib requires choosing two essential elements: the divergence metrics {\small \(\bregd{}{}{\cdot}{\cdot}\)} (or {\small \(\ddd{}{\cdot}{\cdot}\)}) and the proposal distribution {\small \(h(\rvx_1 | \rvx_t)\)}. We'll explore a specific example of these choices to better understand how the score-based and distribution-based objectives are implemented and connected.

\textbf{Example: \tcsm with Gen KL}~
Let us employ the generalized KL divergence, a specific instance of the Bregman divergence {\small \(\bregd{F}{}{\cdot}{\cdot}\)} with function {\small \(F(\rvu) = \sum_j u_j \log u_j\)}, which takes the form
{\small \(
\bregd{F}{}{\rvu}{\rvv} = \sum_j u_j \log \frac{u_j}{v_j} - u_j + v_j
\)}.
To streamline our notation, let us define the ratio of conditional probabilities as {\small \( w^i_{1|t}(y) \coloneqq \nicefrac{p_{1|t}(x_1^i=y, \rvx_1^{\neq i} | \rvx_t)}{p_{1|t}(x_1^i, \rvx_1^{\neq i} | \rvx_t)} \)} and {\small \( w^{i,\theta}_{1|t}(y) \coloneqq \nicefrac{p_{1|t}^\theta(x_1^i=y, \rvx_1^{\neq i} | \rvx_t)}{p_{1|t}^\theta(x_1^i, \rvx_1^{\neq i} | \rvx_t)} \)}. Using this notation, we can express the objective \smoltcsmscore in~\cref{eq:score_based_tcsm} as:
\begin{small}
\begin{equation}
\hspace{-0.4cm}\ell_{\texttt{score}}^i = \sum_{y}\left( {\small w^i_{1|t}(y) \left[\log \frac{w^i_{1|t}(y)}{w^{i,\theta}_{1|t}(y)} \right] - w^i_{1|t}(y) + w^{i,\theta}_{1|t}(y)} \right)
\end{equation}
\end{small}

\begin{restatable}[]{prop}{tcsmscoregkl} \label{prop:tcsmscoregkl}
Under the proposal distribution {\small \(h(\rvx_1 | \rvx_t) = p_{1|t}(\rvx_1 | \rvx_t)\)}, the score-based objective with generalized KL divergence is equivalent to the distribution-based objective with a weighted combination of forward KL and Itakura-Saito (IS) divergences:
\begin{small}
\begin{align*}
&\mathcal{L}_{\texttt{score}}(\theta; h = p_{1|t}, \mathcal{D} = \bregd{\texttt{GKL}}{}{}{}) \equiv \\
&\mathcal{L}_{\texttt{distrib}}(\theta; h = p_{1|t}, \mathbb{D} = V\mathbb{D}_{\texttt{KL}} + \mathbb{D}_{\texttt{IS}})
\end{align*}
\end{small}
where {\small \(\mathbb{D}_{\texttt{KL}}\)} represents the forward KL divergence, and {\small \(\mathbb{D}_{\texttt{IS}}\)} denotes the Itakura-Saito divergence.
\end{restatable}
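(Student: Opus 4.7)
The plan is to reduce the score-based loss term-by-term to the distribution-based form by rewriting $w^i_{1|t}(y)$ and $w^{i,\theta}_{1|t}(y)$ in terms of the one-position conditionals, then collapsing the inner expectation over $x_1^i$ using $h = p_{1|t}$. Write $q(y) \coloneqq p_{1|t}(x_1^i = y \mid \rvx_1^{\neq i}, \rvx_t)$ and $q^\theta(y) \coloneqq p^\theta_{1|t}(x_1^i = y \mid \rvx_1^{\neq i}, \rvx_t)$. By the chain rule, $p_{1|t}(x_1^i = y, \rvx_1^{\neq i} \mid \rvx_t) = q(y)\, p_{1|t}(\rvx_1^{\neq i} \mid \rvx_t)$, so the ratios simplify to $w^i_{1|t}(y) = q(y)/q(x_1^i)$ and $w^{i,\theta}_{1|t}(y) = q^\theta(y)/q^\theta(x_1^i)$.

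Substituting into the given expression for $\ell^i_{\texttt{score}}$ and factoring $h$ as $p_{1|t}(\rvx_1^{\neq i}\mid\rvx_t)\cdot q(x_1^i)$, the outer expectation over $\rvx_1^{\neq i}$ matches the $\sum_i \E_{h(\rvx_1^{\neq i}\mid\rvx_t)}$ in $\mathcal{L}_{\texttt{distrib}}$; hence it suffices to show that $\E_{q(x_1^i)}\ell^i_{\texttt{score}} = V\,\mathbb{D}_{\texttt{KL}}(q\Vert q^\theta) + \mathbb{D}_{\texttt{IS}}(q\Vert q^\theta)$. Multiplying the three terms of $\ell^i_{\texttt{score}}$ by $q(x_1^i)$ and summing over $x_1^i \in \mathcal{X}$, the first term $\sum_{x_1^i}\sum_y q(y)\bigl[\log(q(y)/q^\theta(y)) + \log(q^\theta(x_1^i)/q(x_1^i))\bigr]$ splits, using $\sum_{x_1^i}1 = V$ and $\sum_y q(y) = 1$, into $V\,\mathbb{D}_{\texttt{KL}}(q\Vert q^\theta) - \sum_{x_1^i}\log\bigl(q(x_1^i)/q^\theta(x_1^i)\bigr)$. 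The second term contributes $-\sum_{x_1^i}\sum_y q(y) = -V$, and the third term contributes $\sum_{x_1^i}\sum_y q(x_1^i)\,q^\theta(y)/q^\theta(x_1^i) = \sum_{x_1^i}q(x_1^i)/q^\theta(x_1^i)$ since $\sum_y q^\theta(y) = 1$.

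Collecting the three residual contributions, $\sum_{x_1^i}\bigl[q(x_1^i)/q^\theta(x_1^i) - \log(q(x_1^i)/q^\theta(x_1^i)) - 1\bigr]$ is exactly $\mathbb{D}_{\texttt{IS}}(q\Vert q^\theta)$, which together with $V\,\mathbb{D}_{\texttt{KL}}(q\Vert q^\theta)$ gives the desired identity. Summing over positions $i$ and taking the outer expectation over $\omega(t)\,p(\rvx_t)\,p_{1|t}(\rvx_1^{\neq i}\mid\rvx_t)$ completes the proof.

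The only real obstacle is careful index bookkeeping: one must distinguish the "neighbor" index $y$ (ranging over $\mathcal{X}$ at position $i$) from the current-token index $x_1^i$ (also ranging over $\mathcal{X}$, but integrated against $q$), and in particular recognize that the extra factor of $V$ in front of the KL term arises precisely because the $\log$ of the ratio $q(y)q^\theta(x_1^i)/(q^\theta(y)q(x_1^i))$ splits into two logs, one of which is independent of $x_1^i$ and so accumulates a $\sum_{x_1^i}1=V$ factor, while the other leaves behind exactly the pieces needed to reconstruct $\mathbb{D}_{\texttt{IS}}$.
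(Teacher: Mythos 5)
Your proof is correct and follows essentially the same route as the paper's: rewrite the joint-probability ratios as conditional ratios $q(y)/q(x_1^i)$, factor the proposal $h=p_{1|t}$ into $p_{1|t}(\rvx_1^{\neq i}\mid\rvx_t)\,q(x_1^i)$, and evaluate the inner sum over $x_1^i$ to extract the factor-of-$V$ KL term plus the Itakura--Saito remainder. The bookkeeping, including the origin of the $V$ prefactor from the $x_1^i$-independent piece of the split logarithm, matches the paper's derivation exactly.
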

\begin{proof}
  Please refer to~\cref{ap:proof:tcsmscoregkl}.
\end{proof}

This equivalence demonstrates that the score-based and distribution-based approaches yield identical optimization objective when using the true conditional distribution as the proposal and appropriate divergence measures.

\textbf{Target Concrete Score}~
To gain more insights into the \tcsmscore and \tcsmdistrib objectives, we examine their respective targets: the concrete score ratio {\small \(\left[ \frac{p_{1|t}(\rvy_1|\rvx_t)}{p_{1|t}(\rvx_1|\rvx_t)} \right]\)} and the conditional distribution {\small \(p_{1|t}(\cdot|\rvx_1^{\neq i}, \rvx_t)\)}.

For the score-based objective, we can decompose the target as
\inlineeq{
\left[ \frac{p_{1|t}(\rvy_1|\rvx_t)}{p_{1|t}(\rvx_1|\rvx_t)} = \frac{p_1(\rvy_1)}{p_1(\rvx_1)} \frac{p_{t|1}(\rvx_t|\rvy_1)}{p_{t|1}(\rvx_t|\rvx_1)} \right]
}.
This shows that {\small \(p_{1|t}(\rvx_1|\rvx_t)\)}'s concrete score is a weighted version of {\small \(p_1(\rvx_1)\)}'s concrete score, with weights from the probability path {\small \(p_{t|1}(\rvx_t|\rvx_1)\)}:
{
\small
\begin{equation}
  \left[\rvc_{p_{1|t}}(\rvx_1|\rvx_t)\right]_{\rvy_1} = \left[\rvc_{p_1}(\rvx_1)\right]_{\rvy_1} \frac{p_{t|1}(\rvx_t|\rvy_1)}{p_{t|1}(\rvx_t|\rvx_1)}
\end{equation}
}
Here, {\footnotesize \(\left[\rvc\right]_{\rvy_1}\)} indexes the concrete score \(\rvc\) at position \(\rvy_1\).
The distribution-based objective reveals an analogous relationship:
\begin{small}
\begin{align}
\label{eq:tcsm_distrib_target}
  &p_{1|t}(x_1^i | \rvx_1^{\neq i}, \rvx_t) \propto p_1(x_1^i|\rvx_1^{\neq i}) p_{t|1}(\rvx_t|\rvx_1) \\
  &p_{1|t}(x_1^i | \rvx_1^{\neq i}, \rvx_t) = \texttt{Cat} \left(x_1^i;\softmax \left( \log \rvc_{p_{1|t}}(x_1^i|\rvx_1^{\neq i}, \rvx_t) \right) \right) \nonumber
\end{align}
\end{small}
Thus {\small \(p_{1|t}(\cdot | \rvx_1^{\neq i}, \rvx_t)\)} constitutes a weighted transformation of {\small \(p_1(\cdot |\rvx_1^{\neq i})\)} within the target distribution space.
The conditional distribution {\small \(p_{1|t}(\cdot | \rvx_1^{\neq i}, \rvx_t)\)} can be interpreted as a probability-normalized instance of the concrete score {\small \(\rvc_{p_{1|t}}\)}.

These highlight a crucial distinction between our \textit{target} concrete score matching (\tcsm) framework and traditional denoising score matching approaches~\citep{song2020scoresde,sedd}.
Unlike denoising score matching, which operates through the lens of the noising process {\small \(p_{t|1}(\rvx_t|\rvx_1)\)}, \tcsm directly engages with the clean data distribution {\small \(p_1\)}.
\tcsm aligns with established methodologies in continuous diffusion models~\citep{tsm}.
We summarize the relationships and the contrast with conventional denoising score matching objectives across both discrete and continuous domains in~\cref{tab:tcsm_vs_dsm}.

\begin{table}[t]
  \small
  \centering
  \begin{tabular}{lcccc}
  \hline
  \textbf{Type} & \textbf{Source} & \textbf{Div.} & \textbf{Param.} & \textbf{Model} \\
  \hline
  \tcsmdistrib & M & KL & Fact.+ & MD4/MDLM \\
  \tcsmdistrib & M/U & KL & Fact. & DFM \\
  \tcsmdistrib & M & $f$-div & EBM & EDLM \\
  \hline
  \end{tabular}
  \caption{\small Existing discrete diffusion models under the \tcsm framework with different choices of source distribution (M=Mask, U=Uniform), divergence measure, proposal (\(p_{1|t}(\rvx_1 | \rvx_t)\) for all), and parameterization (Fact.=Factorized, Fact.+=Factorized with carry-over, EBM=Energy-Based Model).}
  \label{tab:tcsm_comparison}
\end{table}

\section{Pre-training with \tcsm}
\label{sec:tcsm_pre_training}
Building upon the general \tcsm framework in~\Cref{sec:tcsm}, we present two approaches for pre-training discrete diffusion models. First, in~\Cref{sec:tcsm_learning_from_data}, we develop Monte Carlo estimation methods for the \tcsmscore and \tcsmdistrib objectives using only empirical data samples from the target distribution {\small \(p_1\)}. Second, in~\Cref{sec:tcsm_pretraining_from_p1}, we demonstrate how \tcsm allows one to incorporate parametric models of {\small \(p_1\)} to significantly accelerate the training of discrete diffusion models.

\subsection{\tcsm with Data Samples \(\rvx_1 \sim p_1\)}
\label{sec:tcsm_learning_from_data}

\textbf{Problem setting}
The target distribution is the true data distribution {\small \(p_1(\rvx_1) \coloneqq \pdata(\rvx_1)\)}, and we only have an empirical dataset sampled from {\small \(\pdata(\rvx_1)\)}.
We want to match {\small \(p_{1|t}^{\theta}(\rvx_1 | \rvx_t)\)} to {\small \(p_{1|t}(\rvx_1 | \rvx_t)\)} with the \tcsm objective.

\textbf{Score based \tcsm}~
We begin with the score-based \tcsmscore objective introduced in~\cref{eq:score_based_tcsm}.

\begin{restatable}[]{prop}{tcsmtrainobjective} \label{thm:tcsm_train_objective}
When using forward generalized KL divergence as the discrepancy measure and setting the proposal distribution to the true conditional distribution {\small \(p_{1|t}(\rvx_1 | \rvx_t)\)}, the score-based \tcsmscore objective in~\cref{eq:score_based_tcsm} can be expressed as:
\begin{small}
\begin{align}
\label{eq:tcsm_score_mc}
&\ell_{\texttt{score}}^i = [\ell^i_{\text{pseudo}} + \ell^i_{\text{entropy}}] + C \notag \\[-0.5em]
&\ell^i_{\text{pseudo}} = \left( -\log p_{1|t}^{\theta}(x_1^i | \rvx_1^{\neq i}, \rvx_t) + \frac{1}{V p_{1|t}^{\theta}(x_1^i | \rvx_1^{\neq i}, \rvx_t)} \right) \nonumber \\[-0.5em]
&\ell^i_{\text{entropy}} = \sum_{y_1^i} \frac{1}{V} \log p_{1|t}^{\theta}(y_1^i | \rvx_1^{\neq i}, \rvx_t) \nonumber
\end{align}
\end{small}
\end{restatable}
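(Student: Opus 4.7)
The plan is to reduce the score-based objective to the stated form by plugging in the generalized KL as the Bregman divergence, integrating out $x_1^i$ against the chosen proposal $p_{1|t}(x_1^i \mid \rvx_1^{\neq i}, \rvx_t)$, and then regrouping. First I would substitute $F(u)=\sum_j u_j\log u_j$ into $\bregd{F}{}{\cdot}{\cdot}$, which expands into four summands per index $y$: $w^i_{1|t}(y)\log w^i_{1|t}(y)$, $-w^i_{1|t}(y)\log w^{i,\theta}_{1|t}(y)$, $-w^i_{1|t}(y)$, and $+w^{i,\theta}_{1|t}(y)$. The self-entropy piece $w\log w$ is independent of $\theta$ and goes into the constant $C$, leaving three $\theta$-dependent terms.

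The key step is a self-normalization identity that exploits the specific choice of proposal. Writing $w^i_{1|t}(y) = p_{1|t}(y\mid\rvx_1^{\neq i},\rvx_t)/p_{1|t}(x_1^i\mid\rvx_1^{\neq i},\rvx_t)$, one checks that for any function $f$,
\[
\E_{p_{1|t}(x_1^i\mid\rvx_1^{\neq i},\rvx_t)}\!\left[w^i_{1|t}(y)\,f(x_1^i)\right] \;=\; p_{1|t}(y\mid\rvx_1^{\neq i},\rvx_t)\sum_{x_1^i} f(x_1^i),
\]
because the proposal mass exactly cancels the unknown denominator of $w^i_{1|t}(y)$. An analogous identity (now with $p^\theta$ in the denominator of $w^{i,\theta}$) handles the $+w^{i,\theta}$ summand. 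Applied with $f$ equal to $\log p^\theta_{1|t}(y\mid\cdot)$, $\log p^\theta_{1|t}(x_1^i\mid\cdot)$, or a constant, this erases every remaining unknown $p_{1|t}(x_1^i\mid\cdot)$ factor and replaces it with either a sampled expectation or a full-vocabulary sum of $p^\theta$.

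Applying the identity term by term, I would obtain: from $-w\log w^\theta$, after splitting $\log w^{i,\theta}(y) = \log p^\theta_{1|t}(y\mid\cdot) - \log p^\theta_{1|t}(x_1^i\mid\cdot)$, a cross-entropy equal to $V\cdot\E_{p_{1|t}(x_1^i\mid\cdot)}[-\log p^\theta_{1|t}(x_1^i\mid\rvx_1^{\neq i},\rvx_t)]$ plus an orphan full-vocabulary sum $\sum_{x_1^i}\log p^\theta_{1|t}(x_1^i\mid\cdot)$; from $-w$, the constant $-V$; from $+w^\theta$, using $\sum_y p^\theta_{1|t}(y\mid\cdot)=1$, the reciprocal expectation $\E_{p_{1|t}(x_1^i\mid\cdot)}[1/p^\theta_{1|t}(x_1^i\mid\rvx_1^{\neq i},\rvx_t)]$.

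Finally, I would regroup these contributions: up to an overall factor of $V$, the cross-entropy and reciprocal pieces combine per-sample into $\ell^i_{\text{pseudo}} = -\log p^\theta_{1|t}(x_1^i\mid\rvx_1^{\neq i},\rvx_t) + 1/[V\,p^\theta_{1|t}(x_1^i\mid\rvx_1^{\neq i},\rvx_t)]$, the orphan sum divided by $V$ is exactly $\ell^i_{\text{entropy}} = \sum_{y_1^i}\frac{1}{V}\log p^\theta_{1|t}(y_1^i\mid\rvx_1^{\neq i},\rvx_t)$, and all $\theta$-independent residuals (the $w\log w$ entropy of the target and the $-V$) are absorbed into $C$. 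I expect the main obstacle to be the bookkeeping of these $V$ factors: the cross-entropy arises as a Monte Carlo expectation over a $V$-way outcome, whereas the entropy piece is a uniform sum over all $V$ symbols that cannot reduce to a single sample, and aligning both with the stated per-sample estimators---rather than any single analytical step---is what makes the identity delicate.
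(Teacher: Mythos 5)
Your derivation is correct, and the algebra checks out: expanding the generalized KL Bregman divergence, using the fact that the proposal $p_{1|t}(x_1^i\mid\rvx_1^{\neq i},\rvx_t)$ exactly cancels the denominator of $w^i_{1|t}(y)$, and summing $w^{i,\theta}$ over $y$ to get the reciprocal $1/p^\theta_{1|t}(x_1^i\mid\cdot)$ reproduces, after division by $V$, exactly $\ell^i_{\text{pseudo}}+\ell^i_{\text{entropy}}+C$. The route is, however, organized differently from the paper's. The paper does not re-derive anything at this point: its proof of this proposition simply invokes \cref{prop:tcsmscoregkl} (whose proof contains the same cancellation identity and the same split into a $V\cdot\mathbb{D}_{\texttt{KL}}$ piece and an Itakura--Saito piece), asserts that the stated per-sample form is the Monte Carlo estimator of that expected loss, and then verifies consistency by checking that $\ell^i_{\text{pseudo}}+\ell^i_{\text{entropy}}$ recovers the factorized objective of \cref{eq:tcsm_score_factorized}. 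Your version is a direct, self-contained term-by-term derivation, and it is more explicit on the two points the paper glosses over: the overall factor of $V$ separating the expected GKL score loss from the stated per-sample form (harmless for optimization, but a genuine rescaling), and the fact that $\ell^i_{\text{entropy}}$ is a full-vocabulary sum rather than a single-sample estimate while the cross-entropy and reciprocal pieces are single-sample Monte Carlo terms. The paper's approach buys brevity by reusing \cref{prop:tcsmscoregkl}; yours buys transparency about exactly which terms land in $C$ and where each $V$ comes from. No gap.
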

\begin{proof}
Please refer to~\cref{ap:proof:tcsm_train_objective}.
\end{proof}

\colorbox{gray!20}{{Analysis of the Objective}~}
The objective consists of two additive terms that serve distinct purposes.
The first term, {\small \(\ell_{\text{pseudo}}\)}, maximizes the pseudo-likelihood of the denoising model {\small \(p_{1|t}^{\theta}(\rvx_1 | \rvx_t)\)} with respect to the data distribution.
The second term, {\small \(\ell_{\text{entropy}}^i = -\mathbb{H}(\text{Uniform}(\cdot), p_{1|t}^{\theta}(\cdot | \rvx_1^{\neq i}, \rvx_t))\)}, guides the denoising model toward making more precise and confident predictions through cross-entropy maximization for {\small \(p_{1|t}^{\theta}(\cdot | \rvx_1^{\neq i}, \rvx_t)\)}.
This objective provides a practical optimization objective that relies solely on samples from the joint distribution {\small \(p(\rvx_1, \rvx_t)\)}.

\textbf{Distribution based \tcsm}~
For the distribution-based \tcsmdistrib objective in~\cref{eq:distribution_based_tcsm}, it is straightforward to derive a simple objective when using forward KL divergence and {\small \(p_{1|t}\)} as the proposal distribution. After dropping constant terms, this yields a cross-entropy based objective:
\begin{small}
\begin{equation}
\label{eq:tcsm_distrib}
  \ell_{\texttt{distrib}}^i = -\E_{p_{1|t}}\log p_{1|t}^{\theta}(x_1^i | \rvx_1^{\neq i}, \rvx_t) + C,
\end{equation}
\end{small}
where \(C\) is a constant term.
In contrast to the objective in~\cref{{eq:cfm_data_loss}}, which maximizes the conditional joint data likelihood {\small \(\log p_{1|t}^{\theta}(\rvx_1 | \rvx_t)\)}, our approach maximizes the \textit{pseudo-likelihood} of the denoising model {\small \(\sum_i \log p_{1|t}^{\theta}(x_1^i | \rvx_1^{\neq i}, \rvx_t)\)}.

\textbf{Flexible Model Parameterization}~
The \tcsmscore and \tcsmdistrib objectives are versatile and can be applied regardless of the specific parameterization of {\small \(p_{1|t}^{\theta}(\rvx_1 | \rvx_t)\)}. The only requirement is the efficient estimation of the conditional distribution {\small \(p_{1|t}^{\theta}(x_1^i | \rvx_1^{\neq i}, \rvx_t)\)} during training.

\colorbox{gray!20}{Factorized Parameterization}~
Following established discrete diffusion models~\citep{discrete_flow_matching,sedd,shi2024simplified,mdlm}, we can further simplify our objectives by adopting a factorized parameterization: {\small \(p_{1|t}^{\theta}(\rvx_1 | \rvx_t) = \prod_{i=1}^L p_{1|t}^{\theta}(x_1^i | \rvx_t)\)}. This leads to the following simplified \tcsmscore objective:

{
\small
\begin{equation}
\label{eq:tcsm_score_factorized}
\ell_{\texttt{score}}^i = \left( -\log p_{1|t}^{\theta}(x_1^i | \rvx_t) + \frac{1}{V p_{1|t}^{\theta}(x_1^i |\rvx_t)}  \right) + \frac{1}{V}  \sum_{y} \log p_{1|t}^{\theta}(y |\rvx_t).
\end{equation}
}
The distribution-based \tcsm objective also simplifies to:
{
\small
\(
\ell_{\texttt{distrib}}^i = -\E_{p_{1|t}}\log p_{1|t}^{\theta}(x_1^i | \rvx_t) + C
\)
}.

\colorbox{gray!20}{Joint Parameterization}~
In~\Cref{sec:tcsm_learning_from_dre}, we demonstrate example of applying our framework to models that parameterize the joint distribution without factorization assumption.

The \tcsm framework offers a unifying perspective, allowing several existing discrete diffusion methods, including MD4~\citep{shi2024simplified}, MDLM~\citep{mdlm}, and DFM~\citep{discrete_flow_matching}, to be viewed through the lens of target concrete score estimation under specific configurations (e.g., choices of divergence, model parameterization, and probability path). This viewpoint highlights common principles while acknowledging the unique aspects of each method. We summarize these relationships and differing choices in~\Cref{tab:tcsm_comparison}.

\begin{table}[t]
  \centering
  \footnotesize
  {
  \begin{tabular}{llrrrr}
   & Method & LAMBADA & PTB & WikiText & 1BW \\
  \midrule
  AR & GPT-2 (WebText)$^*$ & 45.04 & 138.43 & 41.60 & 75.20 \\
  & D3PM & $\le$ 93.47 & $\le$ 200.82 & $\le$ 75.16 & $\le$ 138.92\\
  CD & Plaid & $\le$ 57.28 & $\le$ 142.60 & $\le$ 50.86 & $\le$ 91.12\\
  \midrule
  DD-U & SEDD~\citep{sedd} & $\le$ 65.40 & $\le$ 140.12 & $\le$ 49.60 & $\le$ 101.37 \\
  \rowcolor{gray!20} DD-U & \tcsm \xspace \tcsmscore (~\cref{sec:tcsm_pretraining_from_p1}) & $\le$ 63.84 & $\le$ 138.95 & $\le$ 50.73 & $\le$ 100.46 \\
  \rowcolor{gray!20} DD-U & \tcsm \xspace \tcsmdistrib (~\cref{sec:tcsm_pretraining_from_p1}) & $\le$ 65.29 & $\le$ 133.67 & $\le$ 46.91 & $\le$ 98.52 \\
  \midrule
  DD-M & SEDD~\citep{sedd} & $\le$ 50.92 & $\le$ 114.24 & $\le$ 40.62 & $\le$ 79.29 \\
  DD-M & MD4~\citep{shi2024simplified} & $\le$ 48.43 & $\le$ 102.26 & $\le$ 35.90 & $\le$ 68.10 \\
  DD-M & MDLM~\citep{mdlm} & $\le$ 47.52 & $\le$ 95.26 & $\le$ 32.83 & $\le$ 67.01 \\
  \rowcolor{gray!20} DD-M & \tcsm \xspace \tcsmdistrib (~\cref{sec:tcsm_pretraining_from_p1}) & $\le$ 48.37 & $\le$ 101.85 & $\le$ 34.92 & $\le$ 68.43 \\
  \rowcolor{gray!20} DD-M & \tcsm \xspace \tcsmdistrib (~\cref{sec:tcsm_learning_from_dre}) & $\le$ 47.29 & $\le$ 96.71 & $\le$ 31.56 & $\le$ 65.82 \\
  \bottomrule
  \end{tabular}
  }
  \caption{\footnotesize Zero-shot unconditional perplexity~(\(\downarrow\)) of model trained on \owt dataset. $^*$The GPT-2 numbers are reported for the GPT-2 checkpoint pretrained on WebText instead of \owt.}
  \label{tab:owt_zero_shot_ppl}
\end{table}

\begin{figure*}[t]
  \centering
  \begin{minipage}[b]{0.44\linewidth}
    \centering
    \vspace{0pt}
    {
      \footnotesize
      \begin{tabular}{llr}
      Type & Method &  BPC ($\downarrow$) \\ \midrule
      CD & Plaid \citep{plaid} & $\le$ 1.48 \\
      CD & BFN \citep{graves2023bfn} & $\le$ 1.41\\
      \midrule
      AO-AR & MAC \citep{shih2022training} & $\le$ 1.40\\
      \midrule
      AR & Transformer AR~\citep{d3pm} & \textbf{1.23}\\
      \midrule
      DD & D3PM Uniform \citep{d3pm} & $\le$ 1.61\\
      DD & SEDD Uniform \citep{sedd} & $\le$ 1.47 \\
      \rowcolor{gray!20} DD & \texttt{TCSM} Uniform \tcsmscore (\cref{sec:tcsm_pretraining_from_p1}) & $\le$ 1.47 \\
      \rowcolor{gray!20} DD & \texttt{TCSM} Uniform \tcsmdistrib (\cref{sec:tcsm_pretraining_from_p1}) & $\le$ 1.45 \\
      \midrule
      DD & SEDD Absorb \citep{sedd} & $\le$ 1.39 \\ %
      DD & MD4~\citep{shi2024simplified} & $\le$ 1.37 \\
      DD & EDLM~\citep{xu2024energy} & $\le$ 1.24 \\
      \rowcolor{gray!20} DD & \tcsm Absorb \tcsmscore (\cref{sec:tcsm_pretraining_from_p1}) & $\le$ 1.38 \\
      \rowcolor{gray!20} DD & \tcsm Absorb \tcsmdistrib (\cref{sec:tcsm_pretraining_from_p1}) & $\le$ 1.37 \\
      \rowcolor{gray!20} DD & \tcsm Absorb \tcsmdistrib (\cref{sec:tcsm_learning_from_dre}) & $\le$ 1.25 \\
      \bottomrule
      \end{tabular}
      \captionof{table}{\small Bits Per Character (BPC) on \texteight test set. CD=Continuous Diffusion, DD=Discrete Diffusion, AR=Autoregressive, AO=Any-Order.}
      \label{tab:text8main}
    }
  \end{minipage}%
  \hfill
  \begin{minipage}[b]{0.52\linewidth}
    \centering
    \includegraphics[width=\linewidth]{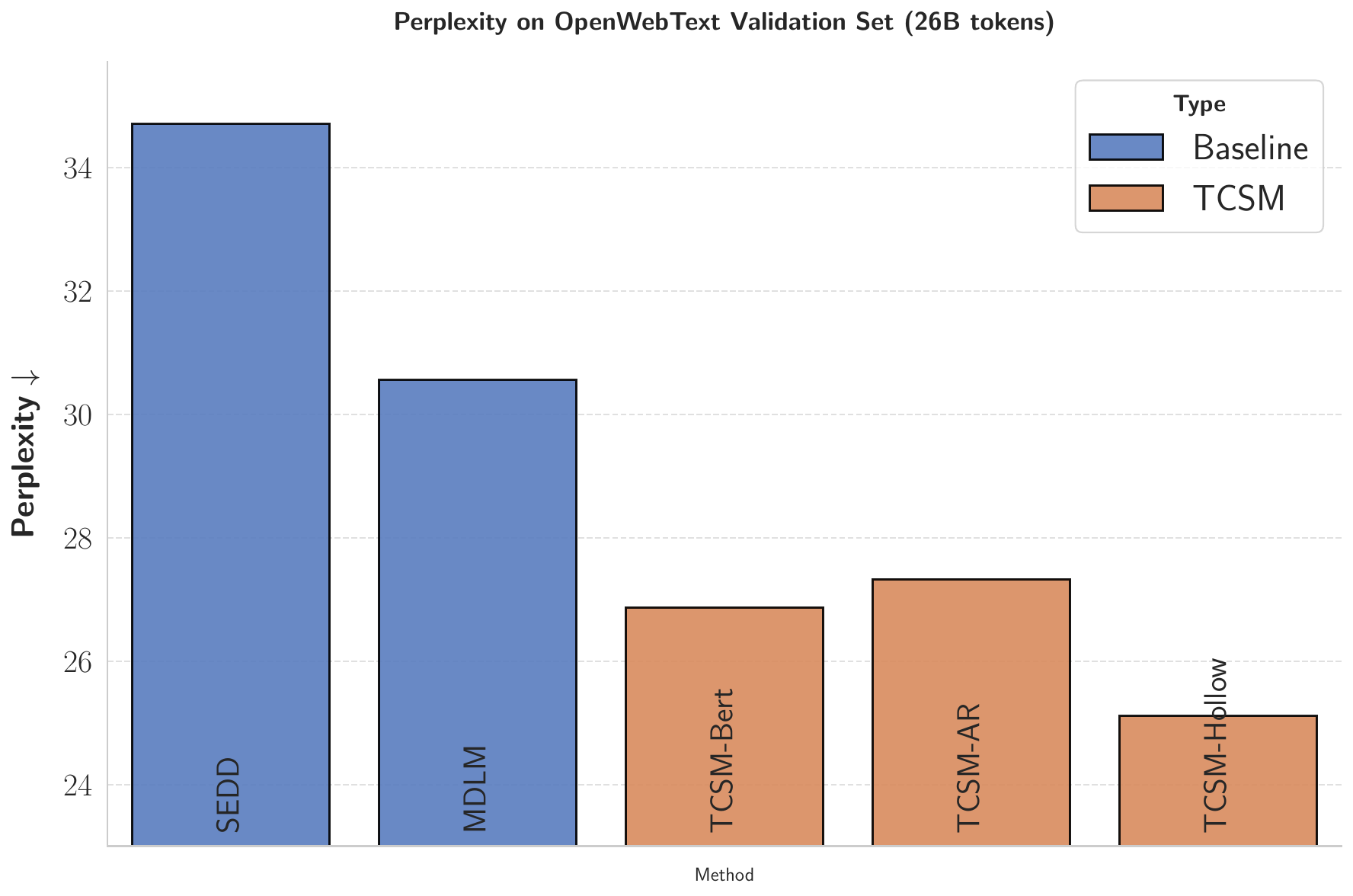}
    \caption{\footnotesize Comparison of perplexity on the \owt validation set after training for 26B tokens: \tcsm vs. baseline models.}
    \label{fig:owt_ppl_26b}
  \end{minipage}
\end{figure*}

\textbf{Experiments}~
We now empirically validate the effectiveness of using \tcsm for pre-training discrete diffusion models on language modeling tasks.
We measure both perplexity.
We use the same transformer-based model architecture as in~\citep{sedd} for all experiments.
See~\cref{ap:pretrain_mc} for more experimental details.

\colorbox{gray!20}{\texteight}~
We conduct experiments on \texteight character level language modeling tasks.
We adopt a factorized model parameterization for all experiments.
We explored using both \tcsmscore~\cref{eq:tcsm_score_factorized} and \tcsmdistrib~\cref{eq:tcsm_distrib} objectives for pre-training; as well as both uniform and absorbing source distribution for pre-training.
We show the results in~\cref{tab:text8main}.

\colorbox{gray!20}{\owt}~
We also conduct experiments on larger scale \owt dataset.
We pre-train the model with factorized parameterization using \tcsmscore and \tcsmdistrib objectives.
Following previous works~\citep{sedd,shi2024simplified}, we evaluate the zero-shot perplexity of trained models and show the results in~\cref{tab:owt_zero_shot_ppl}.

\subsection{\tcsm with Parametric Model \(p_1\)}
\label{sec:tcsm_pretraining_from_p1}

Discrete diffusion models often encounter challenges such as slow convergence and reduced sample efficiency compared to autoregressive models.
We show that \tcsm can help to mitigate these issues by employing parametric modeling of the target distribution \(p_1(\rvx_1)\).

\textbf{Parametric Estimation of Target Score}~
Building on the observation in~\cref{eq:tcsm_distrib_target} that learning {\small \(p_{1|t}(\cdot | \rvx_1^{\neq i}, \rvx_t)\)} effectively reduces to learning {\small \(p_1(\cdot | \rvx_1^{\neq i})\)} in the target distribution space, we can employ a dedicated neural network to parameterize {\small \(p_1(x_1^i|\rvx_1^{\neq i})\)}, providing an efficient estimation of {\small \(p_{1|t}(\cdot | \rvx_1^{\neq i}, \rvx_t)\)}.
We explore following strategies for parametric estimation of {\small \(p_1(x_1^i|\rvx_1^{\neq i})\)}:
Importantly, the learned parametric target estimation remains invariant to the choice of probability path, making it reusable across different diffusion transition kernels.

\colorbox{gray!20}{Pre-trained BERT/AR Models}~
Unlike previous approaches operating in noisy data spaces {\small \(\rvx_t\)}, our method focuses exclusively on clean data at {\(t=1\)}.
This perspective creates a valuable connection between \tcsm diffusion models and other models trained on \textit{clean} data.
We can leverage existing pre-trained models like BERT~\citep{devlin2018bert} or autoregressive language models to estimate {\small \(p_1(x_1^i|\rvx_1^{\neq i})\)}. While BERT directly provides this distribution through masked token prediction, autoregressive models require marginalizing over the vocabulary: {\small \(p_1(x_1^i|\rvx_1^{\neq i}) = \nicefrac{p_1(\rvx_1)}{\sum_{y_1^i} p_1(y_1^i,\rvx_1^{\neq i})}\)}.
See~\Cref{sec:tcsm_distillation} dedicated to distilling autoregressive models.

\colorbox{gray!20}{Hollow Transformer}~
As introduced in~\citep{sun2022score}, the hollow transformer employs two autoregressive Transformers per layer, one operating left-to-right and the other right-to-left.
In the final layer, the representations {\small \(f(\rvx_1^{<i})\)} and {\small \(f(\rvx_1^{>i})\)} are combined via attention to form {\small \(f(\rvx_1^{\neq i})\)}, which is used to predict the missing token {\small \(x_1^i\)}.
This architecture allows for efficient estimation of {\small \(p_1(x_1^i|\rvx_1^{\neq i})\)} for all {\small \(1 \le i \le L\)} in a single forward pass.

\textbf{Experiments}~
To validate the effectiveness of parametric target estimation in accelerating discrete diffusion model training, we conducted experiments on language modeling.
We explore three variants of parametric models of \(p_1\):
(i) pre-trained transformer autoregressive model, denoted as \texttt{TCSM-AR};
(ii) pre-trained BERT model, denoted as \texttt{TCSM-Bert};
(iii) pre-trained hollow transformer model, denoted as \texttt{TCSM-Hollow}.
We train the model for \(26\) billion tokens on \owt dataset and report the perplexity on validation set in~\cref{fig:owt_ppl_26b}.
We also plot validation NLL loss curves in~\cref{fig:owt_loss_curves}.
We can see that with the help of parametric \(p_1\) model, the training process of discrete diffusion model is consistently faster.

\begin{table*}[t]
  \scriptsize
  \centering
  \resizebox{\textwidth}{!}{
  \begin{tabular}{lcc}
      \toprule
      $F(r)$  in objective~\cref{eq:tcsm_dre_objective}& (i) Parameterize ratio \(r_{1|t}^{\theta}\) by model \(p_{1|t}^{\theta}\) & (ii) Parameterize model \(p_{1|t}^{\theta}\) by ratio \(r_{1|t}^{\theta} = \exp(f_\theta)\) \\
      \midrule
      \textsc{LSIF} $\nicefrac{(r - 1)^2}{2}$ & $\E_{p_{1|t}^{\text{ref}}} \left( \nicefrac{1}{2}\left(\nicefrac{p_{1|t}^\theta}{p_{1|t}^{\text{ref}}}\right)^2 \right) - \E_{p_{1|t}} \left(\nicefrac{p_{1|t}^\theta}{p_{1|t}^{\text{ref}}}\right)$ & $\E_{p_{1|t}^{\text{ref}}} \left( \nicefrac{\exp(2f_\theta)}{2} \right) - \E_{p_{1|t}} \exp(f_\theta)$ \\
      \textsc{BCE} $r \log r - (r + 1) \log (r + 1)$ & $\E_{p_{1|t}^{\text{ref}}} \log(1 - \sigma(\log\nicefrac{p_{1|t}^\theta}{p_{1|t}^{\text{ref}}})) + \E_{p_{1|t}} \log(\sigma(\log\nicefrac{p_{1|t}^\theta}{p_{1|t}^{\text{ref}}}))$ & $\E_{p_{1|t}^{\text{ref}}} \log(1 - \sigma(f_\theta)) + \E_{p_{1|t}} \log(\sigma(f_\theta))$ \\
      \textsc{Gen. KL} $r \log r - r$ & $\E_{p_{1|t}^{\text{ref}}} \left(\nicefrac{p_{1|t}^\theta}{p_{1|t}^{\text{ref}}}\right) - \E_{p_{1|t}} \log\nicefrac{p_{1|t}^\theta}{p_{1|t}^{\text{ref}}}$ & $\E_{p_{1|t}^{\text{ref}}} \exp(f_\theta) - \E_{p_{1|t}} f_\theta$ \\
      \bottomrule
  \end{tabular}
  }
  \caption{\small Objective functions for various density ratio parameterizations and choices of $F$ as in \cref{sec:tcsm_learning_from_dre}. $\sigma(x)$ is the sigmoid function.}
  \label{tab:density_ratio_objectives}
\end{table*}

\section{Post-training with \tcsm}
\label{sec:tcsm_post_training}
\tcsm provides a versatile framework that extends beyond pre-training to enable effective adaptation across a wide range of post-training scenarios.
By utilizing the same \tcsm objectives outlined in~\cref{sec:tcsm}, we can effortlessly adapt to diverse post-training scenarios through tailored instantiations of the target distribution, divergence measure, and model parameterization.
We illustrate this adaptability through four key applications: (1) fine-tuning with pre-trained models as parametric estimators of \(p_{1|t}\)~(\cref{sec:tcsm_learning_from_dre}), (2) reward optimization for downstream tasks~(\cref{sec:post_training_reward}), (3) preference-based fine-tuning~(\cref{sec:post_training_dpo}), and (4) knowledge distillation from autoregressive models~(\cref{sec:tcsm_distillation}).

\subsection{\tcsm Fine-tuning with a Parametric Model \(p_{1|t}\)}
\label{sec:tcsm_learning_from_dre}
In a similar spirit to~\cref{sec:tcsm_pretraining_from_p1} where we have a parametric model of \(p_1\), we now consider scenarios where we have a parametric model of \(p_{1|t}\), such as a pre-trained discrete diffusion model.
This is particularly useful for post-training applications such as weak-to-strong fine-tuning~\citep{burns2023weak,chen2024self}, where we can enhance a weaker \(p_{1|t}\) model to a stronger one with expanded capabilities.

\textbf{Problem Setting}~
We consider an unknown target distribution {\small \(\ptarget \coloneqq p_1(\rvx_1)\)} from which we can sample. We assume access to a parametric reference model {\small \(p^{\text{ref}}_{1|t}\)}, such as a pre-trained discrete diffusion model, a smaller version of the same model, or a weaker version from earlier training steps.
The goal is to leverage {\small \(p^{\text{ref}}_{1|t}\)} to learn an improved model {\small \(p^{\theta}_{1|t}\)} that better approximates the true distribution.

\textbf{Density Ratio Estimation}~
Our approach leverages the reference model {\small \(p^{\text{ref}}_{1|t}\)} through density ratio estimation between the true and reference distributions. Building on the \tcsmdistrib objective~\cref{eq:distribution_based_tcsm} with {\small \(\gN^1\)} neighborhood structure, we denote the density ratio as
{\small \(
r_{1|t}(x_1^i | \rvx_1^{\neq i}, \rvx_t) = \frac{p_{1|t}(x_1^i | \rvx_1^{\neq i}, \rvx_t)}{p^{\text{ref}}_{1|t}(x_1^i | \rvx_1^{\neq i}, \rvx_t)}
\)}.
Given the true density ratio {\small \(r(x_1^i|\rvx_1^{\neq i}, \rvx_t)\)}, we minimize the divergence {\small \(\ddd{}{p_{1|t}}{p_{1|t}^{\theta}} = \ddd{f}{r_{1|t}p^{\text{ref}}_{1|t}}{p^{\theta}}\)} to align {\small \(p^\theta_{1|t}\)} with {\small \(p_{1|t}\)}.
The core challenge thus lies in estimating {\small \(r(x_1^i | \rvx_1^{\neq i}, \rvx_t)\)}.
We address this by parameterizing our density ratio model as {\small \(r^{\phi}(x_1^i | \rvx_1^{\neq i}, \rvx_t)\)} and using Bregman divergence~\citep{sugiyama2012density} to estimate it:
\begin{small}
\begin{equation}
\label{eq:tcsm_dre_objective}
\E_{p_{1|t}^{\text{ref}}(x_1^i | \rvx_1^{\neq i}, \rvx_t)} \bregd{F}{}{r(x_1^i | \rvx_1^{\neq i}, \rvx_t)}{r^{\phi}(x_1^i | \rvx_1^{\neq i}, \rvx_t)} \\[-0.3cm]
\end{equation}
\end{small}

\textbf{Density Ratio Parameterization}~
A straightforward method involves independently parameterizing both the density ratio model {\small \(r_{1|t}^{\phi}(x_1^i | \rvx_1^{\neq i}, \rvx_t)\)} and the denoising model {\small \(p_{1|t}^{\theta}(x_1^i | \rvx_1^{\neq i}, \rvx_t)\)}. Once the density ratio model is optimized using Bregman divergence minimization, resulting in the optimal model {\small \(r^{\star}(x_1^i | \rvx_1^{\neq i}, \rvx_t)\)}, we face the task of solving the optimization problem {\small \(\min_{\theta} \mathcal{D}(r^{\star} p^{\text{ref}}, p^{\theta})\)} to align {\small \(p^\theta\)} with {\small \(p\)}.
However, this two-stage process, alternating between density ratio estimation and divergence minimization can be adversarial, not stable and is difficult to converge, we discuss more in~\cref{ap:tcsm_learning_from_dre}.
Instead, we propose alternative strategies with \textit{implicit} parameterization:
(i) Parameterizing the density ratio model in terms of the denoising model as {\small \(r_{1|t}^{\phi \coloneqq \theta}(x_1^i | \rvx_1^{\neq i}, \rvx_t) = \frac{p_{1|t}^{\theta}(x_1^i | \rvx_1^{\neq i}, \rvx_t)}{p_{1|t}^{\text{ref}}(x_1^i | \rvx_1^{\neq i}, \rvx_t)}\)}; or
(ii) Parameterizing the denoising model in terms of the density ratio model as {\small \(p_{1|t}^{\theta}(\rvx_1 | \rvx_t) = p_{1|t}^{\text{ref}}(\rvx_1 | \rvx_t) r_{1|t}^{\phi \coloneqq \theta}(\rvx_1 | \rvx_t)\)}.
The equality holds when the density ratio model is optimal where {\small \(p^{\text{ref}} r^{\star}\)} is self-normalized.
To ensure that {\small \(p_{1|t}^{\theta}\)} is always properly normalized in practice, we define {\small \(p_{1|t}^{\theta}(\rvx_1 | \rvx_t) = \nicefrac{p_{1|t}^{\text{ref}}(\rvx_1 | \rvx_t) r_{1|t}^{\theta}(\rvx_1 | \rvx_t)}{\sum_{\rvx_1} p_{1|t}^{\text{ref}}(\rvx_1 | \rvx_t) r_{1|t}^{\theta}(\rvx_1 | \rvx_t)}\)}.
The specific objectives resulting from these parameterizations under common Bregman divergences are summarized in \cref{tab:density_ratio_objectives}.

\textbf{Reference Models}~
With the density ratio model parameterized, we consider two specific reference models {\small \(p^{\text{ref}}\)}.

\colorbox{gray!20}{{Weak model as reference}}
At each optimization step {\small \(k\)}, we can set the reference distribution to be the previous step denoising distribution {\small \(p^{\text{ref}} =  p_{1|t}^{\theta_{k-1}}\)}.
The density ratio model is parameterized as {\small \(r_{1|t}^{\theta}(x_1^i | \rvx_1^{\neq i}, \rvx_t) = \frac{p_{1|t}^{\theta}(x_1^i | \rvx_1^{\neq i}, \rvx_t)}{p_{1|t}^{\theta_{k-1}}(x_1^i | \rvx_1^{\neq i}, \rvx_t)}\)}.
This will give us a procedure similar to~\citep{chen2024self}.
Also, we can use the exponential moving average of the denoising distribution as the reference distribution, {\small \(p^{\text{ref}} =  p_{1|t}^{\theta_{\text{ema}}}\)}.

\colorbox{gray!20}{{Pre-trained model as reference}}
We can also set the reference distribution to be a pre-trained discrete diffusion model {\small \(p^{\text{ref}}_{1|t}(\rvx_1 | \rvx_t) \coloneqq p_{1|t}^{{\text{pre}}}(\rvx_1 | \rvx_t)\)}.
We use the (ii) parameterization strategy \inlineeq{p_{1|t}^{\theta}(\rvx_1 | \rvx_t) \propto p_{1|t}^{\text{pre}}(\rvx_1 | \rvx_t) r_{1|t}^{\theta}(\rvx_1 | \rvx_t)}.

\textbf{Experiments}~
We evaluate our \tcsm post-training density ratio estimator on language modeling, focusing on parameterization strategy (ii), which uses density ratios to characterize the denoising model (strategy (i) is explored in~\cref{sec:post_training_dpo}).
Using pre-trained models with \tcsmdistrib (see~\cref{sec:tcsm_learning_from_data}), we train density ratio model with three estimators~(LSIF, BCE, Generalized KL), as detailed in \cref{alg:density_ratio_estimation}.
We utilize pre-trained models from~\cref{sec:tcsm_learning_from_data} on the \texteight and \owt datasets, and enhance them by applying the proposed density ratio estimation post-training methods. The results are presented in~\cref{tab:text8main,tab:owt_zero_shot_ppl}.
The results presented in~\cref{tab:text8main,tab:owt_zero_shot_ppl} and summarized for different Bregman divergences in \cref{tab:tcsm_bregman_dre} consistently improve over the baseline across all configurations, showing robustness to divergence choice.
See~\cref{ap:tcsm_learning_from_dre} for further analysis and implementation details.

\begin{figure*}[t]
  \centering  
  \begin{minipage}[t]{0.32\linewidth}
    \centering
    \vspace{0pt}  %
    \centering
    \vspace{0pt}  %
    \begin{figure}[H]
      \centering
      \includegraphics[width=0.99\linewidth]{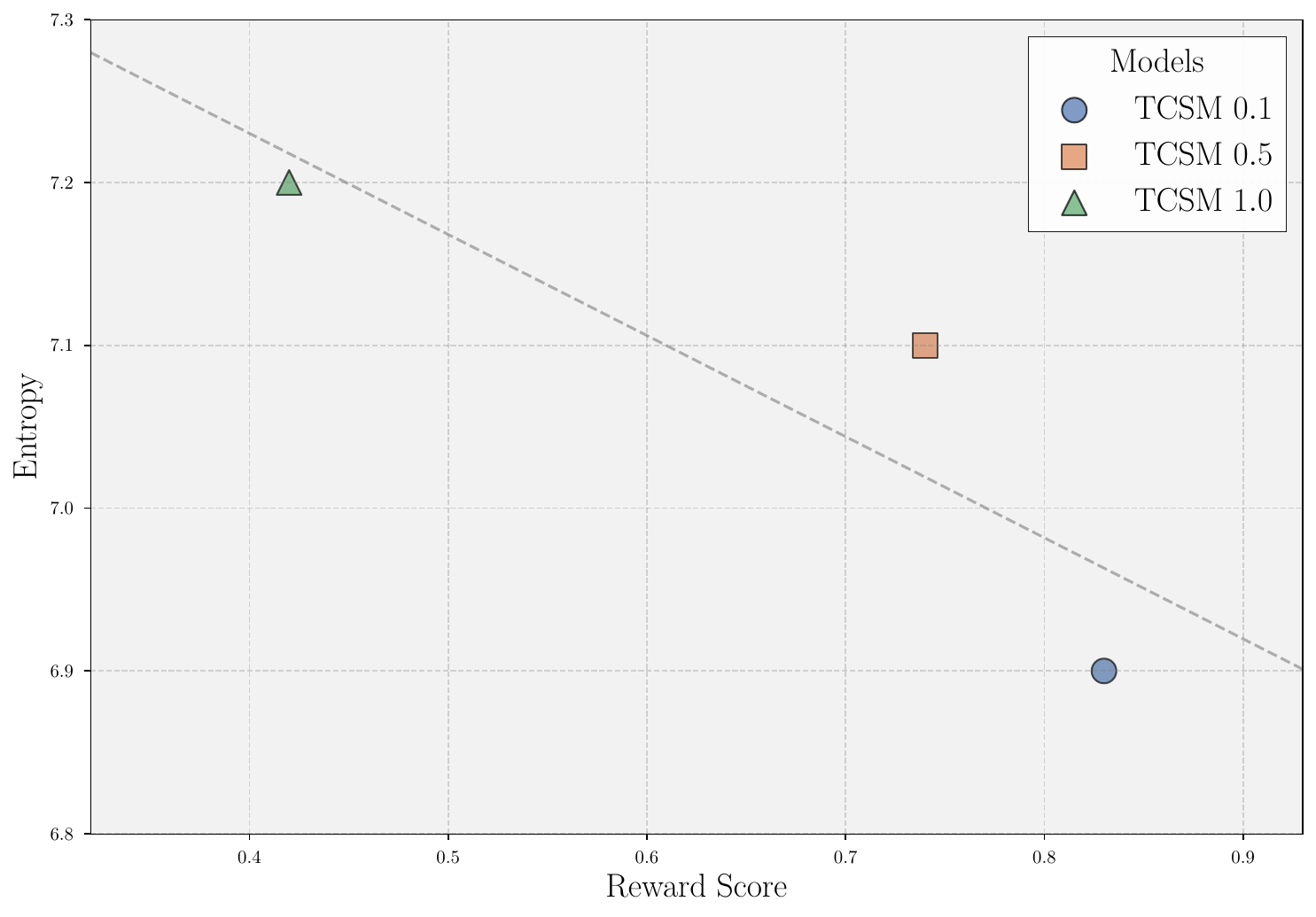}
      \vspace{-14.8pt}
      \caption{\footnotesize \tcsm Reward vs. Entropy in IMDB sentiment fine-tuning (\cref{sec:post_training_dpo}).}
      \label{fig:reward_vs_entropy}
    \end{figure}
  \end{minipage}
  \hfill
  \begin{minipage}[t]{0.33\linewidth}
    \centering
    \vspace{0pt}  %
    \begin{figure}[H]
      \centering
      \raisebox{-\height}{\includegraphics[width=0.95\linewidth]{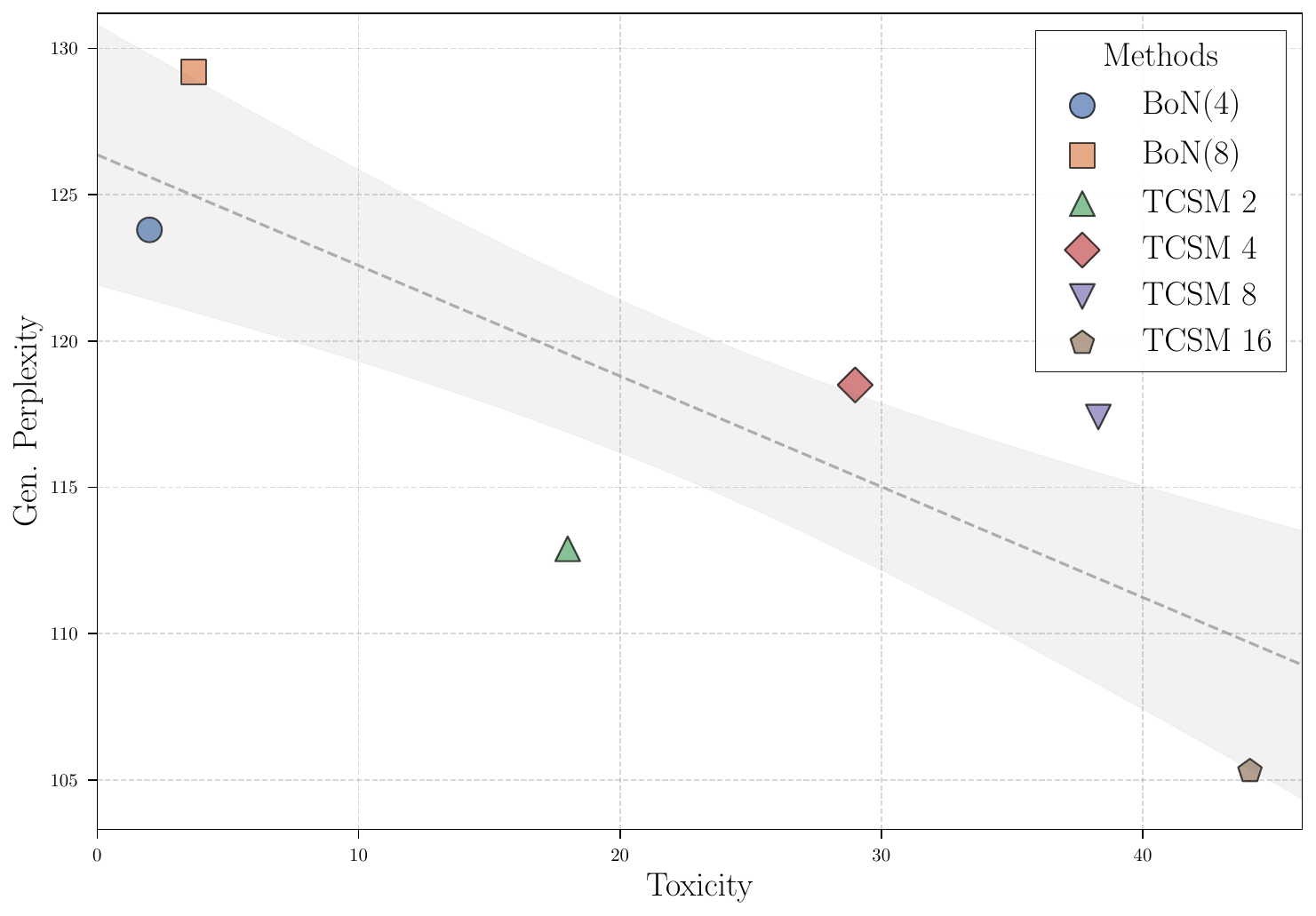}}
      \caption{\footnotesize \tcsm toxicity vs. generative perplexity in~\cref{sec:post_training_reward}.}
      \label{fig:toxicity_perplexity}
    \end{figure}
  \end{minipage}
  \hfill
  \begin{minipage}[t]{0.34\linewidth}
    \centering
    \vspace{11pt}  %
    \raisebox{-\height}{\includegraphics[width=0.94\linewidth]{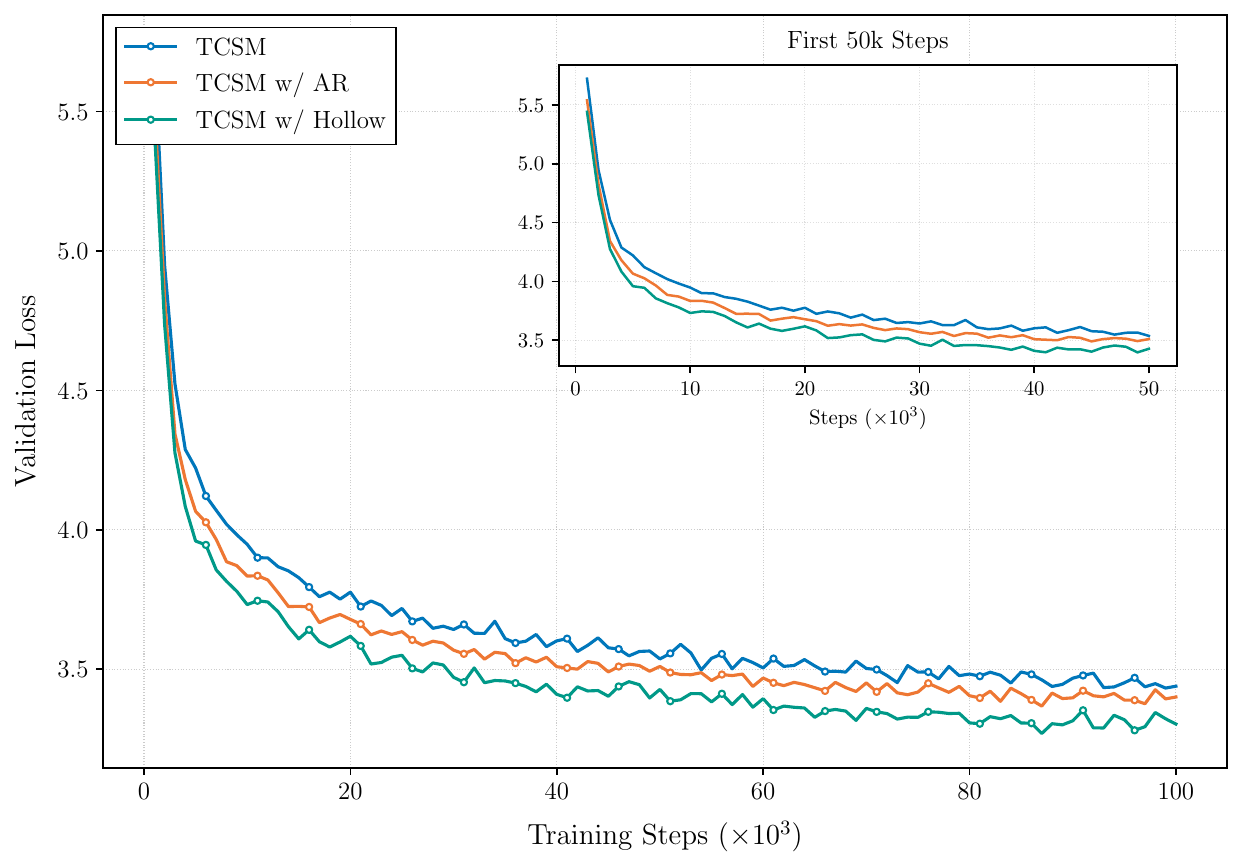}}
    \vspace{-.2pt}
    \caption{\footnotesize Validation loss curves comparing different \tcsm variants on OpenWebText. Lower is better.}
    \label{fig:owt_loss_curves}
  \end{minipage}
\end{figure*}

\begin{table}[h]
  \centering
  \small
  \begin{tabular}{lc}
    \toprule
    \textbf{Model} & \textbf{Perplexity} ($\downarrow$) \\
    \midrule
    MDLM~\citep{mdlm} & 23.83 \\
    EDLM~NCE~\citep{xu2024energy} & 21.52 \\
    \rowcolor{gray!20} \tcsm~BCE~(Reimpl.) & 21.87 \\
    \rowcolor{gray!20} \tcsm~LSIF & 22.10 \\
    \rowcolor{gray!20} \tcsm~Gen KL & 21.74 \\
    \bottomrule
  \end{tabular}
  \caption{\small Comparison of perplexity scores across different Bregman divergence formulations in \tcsm framework.}
  \label{tab:tcsm_bregman_dre}
\end{table}

\subsection{\tcsm Fine-tuning with Reward Optimization}
\label{sec:post_training_reward}

\textbf{Problem Setting}~
We address the challenge of fine-tuning pre-trained discrete diffusion models for specific reward functions {\small $R : \mathcal{S} \rightarrow \mathbb{R}$}. While rewards may sometimes require learning from external feedback~\citep{ouyang2022training}, we focus on scenarios where the reward is either explicitly known or has been successfully learned.
Given a pre-trained model {\small $p_1^{\text{pre}}(\rvx_1)$} trained on the true data distribution {\small $p_1(\rvx_1)$}, our objective is to align it with a reward-modulated target distribution:
{\small $
\ptarget \coloneqq p^R_1(\rvx_1) = \frac{p_1(\rvx_1) \exp(R(\rvx_1)/\beta)}{\sum_{\rvx_1} p_1(\rvx_1) \exp(R(\rvx_1)/\beta)}
$},
where {\small $\beta$} controls the trade-off between reward maximization and fidelity to the original distribution.
A fundamental challenge arises from the lack of ground truth samples from {\small $p^R_1(\rvx_1)$}, as we only have access to unnormalized density evaluations through the reward model.

\textbf{Reward-modulated Concrete Score}~
Let us analyze the score of the reward-modulated target distribution which takes the form:
{\small $
p_{1|t}^R(\rvx_1 | \rvx_t) \propto p_{1|t}(\rvx_1 | \rvx_t) \exp(R(\rvx_1)/\beta)
$}.
The score is given by
{\small $
\frac{p_{1|t}^R(\rvy | \rvx_t)}{p_{1|t}^R(\rvx | \rvx_t)} = \frac{p_{1|t}(\rvy | \rvx_t)}{p_{1|t}(\rvx | \rvx_t)} \exp\left(\frac{R(\rvy) - R(\rvx)}{\beta}\right)
$} as the partition function cancels out in the ratio.

This indicates that the score of the reward-modulated target is essentially the original score adjusted by the reward function.
Given that we have a pre-trained model trained to align with the target distribution score {\small $\left[\frac{p_{1|t}(\rvy | \rvx_t)}{p_{1|t}(\rvx | \rvx_t)}\right]$}, we can approximate this using the pre-trained model as follows:
{\small $
\left[\frac{p_{1|t}(\rvy | \rvx_t)}{p_{1|t}(\rvx | \rvx_t)}\right] \approx \left[\frac{p_{1|t}^{\text{pre}}(\rvy | \rvx_t)}{p_{1|t}^{\text{pre}}(\rvx | \rvx_t)}\right]
$}.
Similarly, for the target distribution {\small $p^R_{1|t}(x_1^i | \rvx_1^{\neq i}, \rvx_t)$} within the \tcsmdistrib objective, we have:
{\small $
p^R_{1|t}(x_1^i | \rvx_1^{\neq i}, \rvx_t) \propto p_{1|t}(x_1^i | \rvx_1^{\neq i}, \rvx_t) \exp(R(x_1^i, \rvx_1^{\neq i})/\beta)
$},
which can also be approximated using the pre-trained model as:
{\small $
p^R_{1|t}(x_1^i | \rvx_1^{\neq i}, \rvx_t) \propto p_{1|t}^{\text{pre}}(x_1^i | \rvx_1^{\neq i}, \rvx_t) \exp(R(x_1^i, \rvx_1^{\neq i})/\beta)
$}.

\textbf{Experiments}~
To validate our reward optimization methodology, we conducted experiments on both synthetic and real-world tasks: (1) a synthetic 2D grid experiment demonstrating the model's ability to effectively suppress undesired modes after fine-tuning~\cref{fig:grid_samples} and (2) a toxicity mitigation task for language generation where our approach achieved superior performance compared to existing methods like MDLM with Best-of-N sampling, as shown in~\cref{fig:toxicity_perplexity}. For detailed experimental settings, comprehensive results, and analysis, we refer readers to~\cref{ap:tcsm_reward_tuning_results} in the appendix. The complete algorithm for reward-guided training is provided in~\cref{alg:reward_guided_training}.

\begin{figure}[t]
  \centering
  \begin{subfigure}{\linewidth}
    \raisebox{0.5\height}{\small{(a)}}
    \includegraphics[width=.9\linewidth]{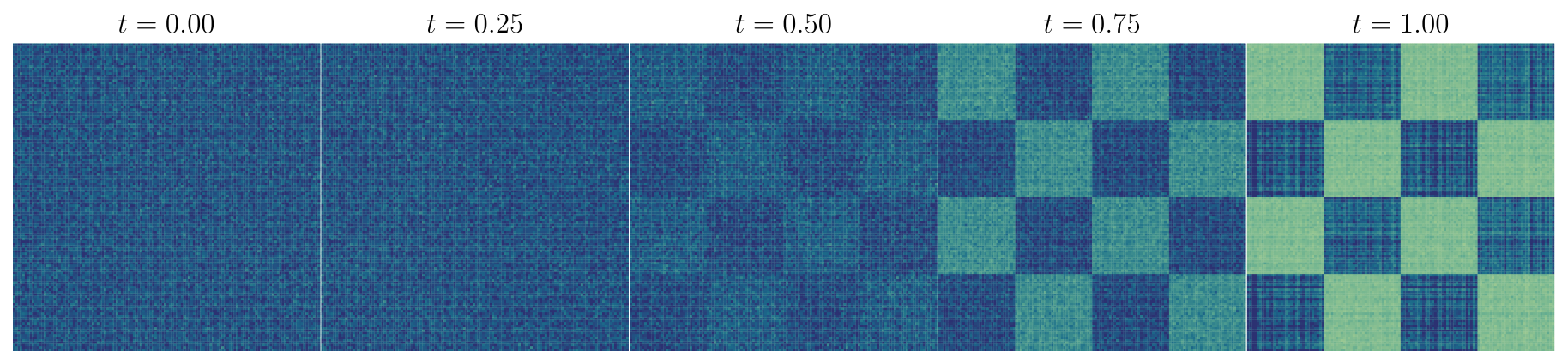}
  \end{subfigure} \\[-0.25em]
  \begin{subfigure}{\linewidth}
    \raisebox{0.5\height}{\small{(b)}}
    \includegraphics[width=.9\linewidth]{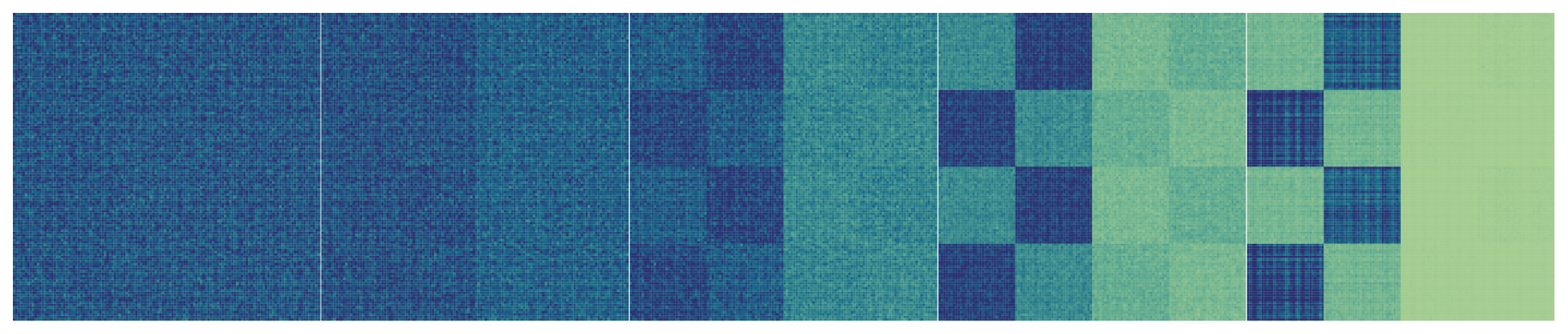}
  \end{subfigure}
  \caption{\small Model generation dynamics: sample distributions at intermediate steps, before and after reward optimization.}
  \label{fig:grid_samples}
\end{figure}

\subsection{Direct Preference Fine-tuning}
\label{sec:post_training_dpo}

\textbf{Problem Setting}~
We present a method for fine-tuning pre-trained diffusion models using pairwise preference data {\small $\{(\rvq, \rvx^w_1, \rvx^l_1)\}$}, where $\rvq$ represents a query~(instruction), and $\rvx^w_1$ and $\rvx^l_1$ denote preferred and non-preferred responses respectively. Our approach directly optimizes for preference alignment without requiring an explicit reward model~\citep{rafailov2024direct}. The target distribution focuses on preferred responses: {\small $\ptarget(\rvx_1 | \rvq) \coloneqq p_1(\rvx_1^\text{w} | \rvq)$}, with a pre-trained diffusion model {\small $p_{1|t}^{\text{pre}}(\rvx_1 | \rvq)$} serving as our reference distribution.

\textbf{Preference Optimization}~
Building on the density ratio estimation framework from~\cref{sec:tcsm_learning_from_dre}, we learn a new diffusion model \inlineeq{p_{1|t}^{\theta}} relative to the pre-trained reference. The density ratio model is defined as:
{\small \(r_{1|t}^{\theta}(x_1^i | \rvx_1^{\neq i}, \rvx_t, \rvq) = \frac{p_{1|t}^{\theta}(x_1^i | \rvx_1^{\neq i}, \rvx_t, \rvq)}{p_{1|t}^{\text{pre}}(x_1^i | \rvx_1^{\neq i}, \rvx_t, \rvq)}\)}.
Optimization follows the objective in~\cref{eq:tcsm_dre_objective}, with Monte Carlo estimates computed using samples {\small $\rvx_1^w, \rvx_1^l$} drawn from the pre-trained model. Implementation details are provided in~\cref{alg:dpo}.

\textbf{Experiments}~
We validate our \tcsm preference optimization approach by fine-tuning a pre-trained model on the IMDB-sentiment dataset using our density ratio estimation framework (\cref{sec:tcsm_learning_from_dre}). As shown in~\cref{fig:reward_vs_entropy}, stronger preference optimization leads to higher mean rewards but reduced sample diversity. The complete training procedure is detailed in~\cref{alg:dpo}, and further experimental details and results are available in the appendix (\cref{ap:dpo_experiments}).

\subsection{AR {\small $\to$} Diffusion distillation}
\label{sec:tcsm_distillation}
\textbf{Problem setting}~
We explore knowledge distillation from a pre-trained autoregressive model (teacher) {\small $p_1^{\texttt{AR}}(\rvx_1)$} to a diffusion model (student), where the target distribution is the teacher model's distribution {\small $\ptarget \coloneqq p_1^{\texttt{AR}}(\rvx_1)$}.

\textbf{Efficient estimation of distillation target}~
As discussed in \Cref{sec:tcsm_pretraining_from_p1}, we can leverage pre-trained autoregressive language models to estimate {\small $p_1(x_1^i|\rvx_1^{\neq i}) = \nicefrac{p_1(\rvx_1)}{\sum_{x_1^i} p_1(x_1^i,\rvx_1^{\neq i})}$}. However, naively computing this requires {\small $O(VL)$} likelihood evaluations of the teacher model for each sequence {\small $\rvy \in \mathcal{N}^1(\rvx)$}. While these evaluations can be parallelized, the computational cost remains prohibitive. We propose two efficient approaches to estimate the target concrete score: \emph{Top-$K$} and \emph{First-order Taylor} estimation. We leave the details to the appendix~\cref{ap:tcsm_distillation}.

\textbf{Experiments}~
We validate our distillation approach on the \owt dataset using a transformer-based AR teacher model and an absorbing discrete diffusion student model, where our method achieves faster convergence and lower perplexity compared to baselines. See ~\cref{ap:tcsm_distillation} for detailed experimental settings and further results and analysis.

\section{Conclusion}
In this work, we introduced Target Concrete Score Matching (\tcsm) as a principled framework for training discrete diffusion models. By estimating the concrete score in the original data space, \tcsm enables effective pre-training and seamless post-training with reward functions, preference data, and pre-trained models.
Empirical results on language modeling tasks show that \tcsm achieves competitive performance with greater flexibility and sample efficiency. 

\section*{Acknowledgment}
We are grateful to Jiatao Gu, Dinghuai Zhang, Richard Bai, Zijin Gu, Huangjie Zheng, Tianrong Chen, Dan Busbridge, and Jason Ramapuram for their valuable insights and discussions throughout this project. We would also like to acknowledge Samy Bengio for his support.

\section*{Impact Statement}
The paper introduces a novel objective for training and fine-tuning discrete diffusion models. While discrete diffusion models have broad applicability, including language modeling and structured data generation, we do not foresee immediate ethical concerns beyond those generally associated with advancements in generative modeling, such as potential misuse for generating harmful or biased content. Responsible use and further research into mitigating such risks remain important considerations.

\newpage

\bibliography{references}
\bibliographystyle{icml2025}

\appendix
\clearpage
\onecolumn
\part{Appendix} %
\parttoc

\section{Extended Preliminaries}
\label{ap:preliminaries}

\textbf{Continuous Time Markov Chains Model}~
The Continuous Time Markov Chain~(CTMC) model is an {\small \(\mathcal{S}\)}-valued time-dependent family of random variables {\small \((\rvx_t)_{t\in[0,1]}\)} that form a Markov chain characterized by the {probability transition kernel}
{\small \(
p_{t+\Delta t|t}(\rvy|\rvx) =\delta(\rvy,\rvx)+u_t(\rvy,\rvx)\Delta t + o(\Delta t)
\)}
with the initial distribution of the process at time {\small \(t = 0\)} as {\small \(p_0(\rvx_0)\)}.
{\small \(u_t(\rvy, \rvx): \mathcal{S} \times \mathcal{S} \to \mathbb{R}\)} is called the {velocity} or the {rate matrix}, which indicate the speed at which the probability transitions between states.
To make sure the transition probabilities {\small \(p_{t+\Delta t|t}(\rvy|\rvx)\)} are normalized, {\small \(u_t(\rvy, \rvx)\)} need to satisfy {\small \( u_t(\rvy, \rvx) \geq 0 \text{ for all } \rvy \neq \rvx \)} and {\small \(\sum_{\rvy} u_t(\rvy, \rvx) = 0\)}.

\textbf{Discrete Flow Matching}~
We use the discrete flow matching~\citep{campbell2024generative,discrete_flow_matching} as a general framework to introduce the discrete diffusion models.
Our goal is to transfer samples {\small \(\rvx_0 \sim p_0(\rvx_0)\)} from a \textit{source} distribution {\small \(p_0\)} to samples {\small \(\rvx_1 \sim p_1(\rvx_1)\)} from a \textit{target} distribution {\small \(p_1\)}.
Source and target samples can be related by means of the independent coupling {\small \((\rvx_0, \rvx_1) \sim p_0(\rvx_0)p_1(\rvx_1)\)}, or associate by means of a general coupling {\small \(\pi_{0,1}(\rvx_0, \rvx_1)\)}.
For independent coupling, common choices for the source distribution is either {\small \(p_0^{\text{unif}}(\rvx_0) = \prod_{i=1}^L \frac{1}{V}\)}, a uniform distribution over {\small \(\mathcal{S}\)}; and (ii) {\small \(p_0^{\text{mask}}(\rvx_0) = \prod_{i=1}^L \delta \{ \mask, x_0^i \}\)}, a delta measure concentrated on the absorbing state {\small \(\mask\)}.

Similar to the continuous flow matching model~\citep{lipman2022flow,liu2022flow}, we construct a probability path {\small \(p_t(\rvx_t)\)} interpolating between {\small \(p_0\)} and {\small \(p_1\)}.
By conditioning on {\small \(\rvx_1\)}, we build a probability path {\small \(p_{t}(\rvx_t) = \E_{p_1(\rvx_1)} p_{t|1}(\rvx_t|\rvx_1)\)}.
The marginal velocity {\small \(u_t(\rvy, \rvx)\)} generating probability path {\small \(p_{t}(x_t)\)} can be computed by
{\small \(
\label{eq:marginal_velocity_to_posterior}
u_t(\rvy_t, \rvx_t) = \E_{p_{1|t}(\rvx_1 | \rvx_t)} u_t(\rvy_t, \rvx_t | \rvx_1),
\)}
where {\small \(p_{1|t}(\rvx_1 | \rvx_t) = \frac{p_{1}(\rvx_1)p_{t|1}(\rvx_t|\rvx_1)}{p_t(\rvx_t)}\)} is the true conditional distribution predicting
clean data {\small \(\rvx_1\)} from noisy data {\small \(\rvx_t\)}, and {\small \(u_t(\rvy_t, \rvx_t | \rvx_1)\)} is the conditional velocity generating {\small \(p_{t|1}(\rvx_t|\rvx_1)\)}.

\textbf{Training}~
The goal is to approximate the velocity {\small \(u_t(\rvy, \rvx)\)} using a neural network.
We can parameterize the velocity {\small \(u_t^\theta(\rvy, \rvx)\)} directly, and optimize the conditional flow matching loss
{\small \(
  \label{eq:cfm_vel_loss}
   \mathcal{L}_{\text{CFM}}^{\texttt{vel}} = \E_{\omega(t) p_1(\rvx_1) p_{t|1}(\rvx_t | \rvx_1)} \bregd{F}{}{u_t(\rvy_t, \rvx_t)}{u_t^\theta(\rvy_t, \rvx_t)},
\)}
where we sample time {\small \(t\)} from distribution {\small \(\omega(t)\)}, and {\small \(\bregd{F}{}{\rvu}{\rvv} = F(\rvu) - F(\rvv) - \langle \nabla F(\rvv), \rvu - \rvv \rangle\)} is the Bregman divergence with respect to the strictly convex function {\(F\)}.
We also need to make sure that {\small \(u_t^\theta(\rvy_t, \rvx_t)\)} satisfies the rate conditions.

As shown above, the velocity is governed by the true denoising distribution {\small \(p_{1|t}(\rvx_1|\rvx_t)\)}, so instead of parameterizing the velocity directly, we can use a model {\small \(p_{t|1}^\theta(\rvx_1|\rvx_t)\)} to approximate {\small \(p_{1|t}(\rvx_1|\rvx_t)\)} by minimizing the loss

\vspace{-0.3cm}
{
\small
\begin{equation}
\label{eq:cfm_data_loss_app}
 \hspace{-0.25cm}\mathcal{L}_{\text{CFM}}^{\texttt{d}} = \E_{\omega(t) p_1(\rvx_1) p_{t|1}(\rvx_t | \rvx_1)} \ddd{}{p_{1|t}(\rvx_1 | \rvx_t)}{p_{1|t}^\theta(\rvx_1 | \rvx_t)}, \\[-0.1cm]
\end{equation}
}
where {\small \(\ddd{}{}{\cdot}{\cdot}\)} is some statistical divergence.
For example~\citep{campbell2024generative} uses the KL divergence which gives rise to the cross-entropy loss
{\small \(
\E_{t,\rvx_1,\rvx_t} -\log p_{1|t}^\theta(\rvx_1|\rvx_t),
\)}
which has been shown to be a upper bound on the negative model log-likelihood of the target data distribution.
\(\mathcal{L}_{\text{CFM}}^{\texttt{d}}\) is often called the \textit{data-prediction} loss, as the model \(p_{1|t}^\theta(\rvx_1|\rvx_t)\) is trained to predicts the clean data \(\rvx_1\) from the noisy data \(\rvx_t\) by aligning to the true denoising distribution \(p_{1|t}(\rvx_1|\rvx_t)\).

\textbf{Factorized Probability Paths}~
The flow formulation and training objective described earlier are applicable to any probability path.
However, parameterizing the velocity in \(\gS \times \gS\) is often impractical.
To address this, we typically construct factorized conditional paths \(p_{t|0,1}(\rvx_t|\rvx_0, \rvx_1) = \prod_{i=1}^L p^i_{t|0,1}(x_t^i|\rvx_0, \rvx_1)\).
A common design~\citep{discrete_flow_matching,shi2024simplified,mdlm} is
\begin{equation}
    p^i_{t|0,1}(x_t^i | \rvx_0, \rvx_1) = \alpha_t \delta(x_t^i, x_1^i) + (1 - \alpha_t) \delta(x_t^i, x_0^i),
\end{equation}
where \(\alpha_t:\mathbb{R}_{[0,1]} \to \mathbb{R}_{[0,1]}\) is the noise schedule function.
A straightforward example is the linear schedule \(\alpha_t = t\).
For each token \(x_t^i\) sampled from \(p^i_{t|0,1}(\cdot | x_0, x_1)\), there is a probability \(\alpha_t\) of it being \(x_1^i\) and a probability \((1-\alpha_t)\) of it being \(x_0^i\).
When \(\alpha_0 = 0\) and \(\alpha_1 = 1\), \(p_t(\rvx_t)\) adheres to the boundary conditions at \(t=0\) and \(t=1\).
By marginalizing out \(\rvx_0\), the conditional distribution \(p^i_{t|1}(x_t^i|\rvx_1)\) have closed form as:
$p_{t|1}^{\text{unif}, i}(x_t^i|\rvx_1) = \text{Cat}(\alpha_t \delta \{x_t^i, x_1^i\} + (1 - \alpha_t) \frac{1}{V})$ for unifrom source, $p_{t|1}^{\text{mask}, i}(x_t^i|\rvx_1) = \text{Cat}(\alpha_t \delta \{x_t^i, x_1^i\} + (1 - \alpha_t) \delta \{\mask, x_t^i\})$ for mask source.
These are known as \textit{forward transition kernel} in score-based diffusion models~\citep{song2020scoresde}, allowing for simulation-free sampling of \(\rvx_t\).
The corresponding velocity is given by
\begin{equation}
  u_t^i(y^i, \rvx_t) = \E_{p_{1|t}^i(x_1^i | x_t^i)} \frac{\dot{\alpha}_t}{1 - \alpha_t} \left[ \delta(y^i, x_1^i) - \delta(y^i, x^i) \right],
\end{equation}
and the marginal velocity $u_t(\rvy_t, \rvx_t)$ can be factorized as
\begin{equation}
u_t(\rvy_t, \rvx_t) = \sum_{i=1}^L \delta(\rvy_t^{\neq i}, \rvx_t^{\neq i}) u_t^i(y_t^i, \rvx_t).
\end{equation}
So we can parameterize the factorized velocity as $u_t^{i, \theta}(y_t^i, \rvx_t)$ and optimize the loss
\begin{equation}
\mathcal{L}_{\text{CFM}}^{\texttt{v}} = \E_{t, \rvx_1, \rvx_t} \sum_{i=1}^L \bregd{F}{}{u_t^i(\rvy_t^i, \rvx_t^i)}{u_t^{i, \theta}(\rvy_t^i, \rvx_t^i)},
\end{equation}
which is also an ELBO on the target data distribution when we choose the generalized KL divergence~\citep{nguyen2010estimating} as the Bregman divergence~\citep{shaul2024flow}.

\vheader
\textbf{Sampling}
Sampling from the target distribution $p_1(\rvx_1)$ is achieved simulating the CTMC with learned velocity field $u_t^\theta(\rvy_t, \rvx_t)$ with Euler methods.

\section{Proofs}

\subsection{Proof of~\Cref{thm:tcsm_optimum}}
\label{ap:proof:tcsm_optimum}
We first establish a key property of the Concrete score through the following lemma.

\begin{lemma}[\citep{meng2022concrete}]
  \label{lem:concrete_score_completeness}
  Let \( p(\mathbf{x}) \) be a discrete probability distribution over \(\mathcal{X}\). For any neighborhood structure \(\mathcal{N}\) that induces a connected graph, the Concrete score mapping \(\rvc_{p}(\mathbf{x}; \mathcal{N})\) is complete. Specifically, for any parameterized distribution \( p^{\theta}(\mathbf{x}) \) with \(\theta \in \Theta\), we have \(\rvc_{p^{\theta}}(\mathbf{x}; \mathcal{N}) = \rvc_{p}(\mathbf{x}; \mathcal{N})\) for all \(\mathbf{x} \in \mathcal{X}\) if and only if \( p^{\theta}(\mathbf{x}) = p(\mathbf{x}) \) almost everywhere.
\end{lemma}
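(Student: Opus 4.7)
The plan is to prove both directions separately, with the reverse implication being immediate and the forward implication relying crucially on the connectivity hypothesis together with the normalization constraint. For the reverse direction ($p^\theta = p$ a.e.\ implies equal concrete scores), I simply substitute into the definition $\rvc_p(\mathbf{x};\mathcal{N}) = [p(\mathbf{x}_{n_1})/p(\mathbf{x}),\ldots,p(\mathbf{x}_{n_k})/p(\mathbf{x})]^\top$ and observe that all ratios coincide wherever $p(\mathbf{x}) > 0$, which is the relevant set for the concrete score to be defined.

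For the forward direction, the key step is to reinterpret the pointwise equality $\rvc_{p^\theta}(\mathbf{x};\mathcal{N}) = \rvc_p(\mathbf{x};\mathcal{N})$ as a collection of cross-multiplicative identities on the edges of the neighborhood-induced graph $G$. Concretely, for every $\mathbf{x} \in \mathrm{supp}(p)$ and every neighbor $\mathbf{x}_n \in \mathcal{N}(\mathbf{x})$, the equation $p^\theta(\mathbf{x}_n)/p^\theta(\mathbf{x}) = p(\mathbf{x}_n)/p(\mathbf{x})$ rearranges to
\begin{equation*}
\frac{p^\theta(\mathbf{x}_n)}{p(\mathbf{x}_n)} = \frac{p^\theta(\mathbf{x})}{p(\mathbf{x})}.
\end{equation*}
Thus the likelihood ratio $\rho(\mathbf{x}) \coloneqq p^\theta(\mathbf{x})/p(\mathbf{x})$ is constant along every edge of $G$ (and one should briefly check the edge case where $p(\mathbf{x}_n) = 0$, which combined with the ratio equation forces $p^\theta(\mathbf{x}_n) = 0$ as well, so the claim is vacuous or consistent there).

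The next step is to propagate this edge-wise constancy using connectivity of $G$: for any two points $\mathbf{x}, \mathbf{x}' \in \mathrm{supp}(p)$, there is a path in $G$ joining them, and applying the previous identity along each successive edge yields $\rho(\mathbf{x}) = \rho(\mathbf{x}')$. Hence there exists a constant $c$ such that $p^\theta(\mathbf{x}) = c\, p(\mathbf{x})$ for all $\mathbf{x} \in \mathrm{supp}(p)$. Summing over $\mathrm{supp}(p)$ and using that both $p$ and $p^\theta$ (restricted to this set) are probability mass functions yields $c = 1$, which gives $p^\theta = p$ almost everywhere with respect to $p$.

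The main subtlety I anticipate is not algebraic but definitional: one must be careful about what happens outside $\mathrm{supp}(p)$, since the concrete score is only defined where the denominators are nonzero, and the lemma's ``almost everywhere'' qualifier is precisely what absorbs this ambiguity. I expect the cleanest way to handle it is to note that the neighborhood-induced graph is defined over $\mathrm{supp}(p)$ as stated in \cref{def:concrete_score}, so the argument above already concludes $p^\theta = p$ on $\mathrm{supp}(p)$, and the complement is $p$-null by definition. No further technical machinery is needed beyond elementary graph connectivity and normalization.
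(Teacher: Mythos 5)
Your proof is correct, but it is worth noting that the paper does not actually prove this lemma at all: its ``proof'' consists of a citation to \citet{meng2022concrete} plus the observation that the constant shift of $\mathbf{1}$ in the paper's definition of $\rvc_p$ is a bijection and therefore preserves completeness. What you have written is essentially the standard argument underlying that cited result --- rewriting the edge-wise score equalities as constancy of the likelihood ratio $\rho = p^\theta/p$ along edges, propagating by connectivity to a global constant $c$ on $\mathrm{supp}(p)$, and pinning $c=1$ by normalization --- so you are supplying the content the paper delegates to the reference rather than taking a genuinely different route. Two small points deserve care. First, since the neighborhood-induced graph in \cref{def:concrete_score} is directed, you should say explicitly that the relation $\rho(\mathbf{x}_n)=\rho(\mathbf{x})$ obtained from a directed edge is symmetric, so weak connectivity (which is all that is assumed elsewhere in the paper, e.g.\ in \cref{thm:tcsm_optimum}) suffices for the propagation. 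Second, your normalization step implicitly assumes $\sum_{\mathbf{x}\in\mathrm{supp}(p)} p^\theta(\mathbf{x}) = 1$; your edge-case analysis only forces $p^\theta$ to vanish on neighbors of $\mathrm{supp}(p)$, not on points of $\mathcal{X}$ that are disconnected from $\mathrm{supp}(p)$, so strictly speaking one needs either connectivity over the union of the supports or an assumption that $\mathrm{supp}(p^\theta)\subseteq\mathrm{supp}(p)$ to conclude $c=1$. This caveat is inherited from the original statement in \citet{meng2022concrete} and is absorbed by the ``almost everywhere'' qualifier in practice, but a careful write-up should flag it rather than assert that both restrictions ``are probability mass functions.''
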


\begin{proof}
The result follows directly from~\citep{meng2022concrete}. We observe that our definition of \(\rvx_p\) differs from the original by a constant shift of \(\mathbf{1}\), which is a bijective transformation and thus preserves the completeness property.
\end{proof}

\tcsmoptimum*

\begin{proof}
We prove the proposition through a bidirectional argument.

\textbf{($\Rightarrow$)} Let us first assume that the \tcsm objective \(\mathcal{L}_{\textsf{TCSM}}\) in~\cref{eq:tcsm_objective} achieves its global minimum. The objective is given by:

\begin{equation}
  \mathcal{L}_{\textsf{TCSM}}\left(\theta;\gN,\gD,h\right) = \E_{\omega(t)p(\rvx_t)h(\rvx_1|\rvx_t)} \bregd{}{}{\rvc_{p_{1|t}}}{\rvc_{p^{\theta}_{1|t}}}
\end{equation}

By construction, the proposal distribution \(h(\rvx_1 | \rvx_t)\) encompasses the support of \(p_{1|t}(\rvx_1 | \rvx_t)\). At the global minimum, we necessarily have:

\[
\forall \rvx_1 \in \text{supp}(p_{1|t}(\rvx_1 | \rvx_t)): \quad \bregd{}{}{\rvc_{p_{1|t}}}{\rvc_{p^{\theta}_{1|t}}} = 0
\]

This implies:

\[
\rvc_{p_{1|t}}\left(\rvx_1; \gN\right) = \rvc_{p^{\theta}_{1|t}}\left(\rvx_1; \gN\right).
\]

Given that \(\mathcal{N}\) induces a weakly connected graph over \(\text{supp}(p_{1|t}(\cdot | \rvx_t))\), we can apply~\cref{lem:concrete_score_completeness} to conclude:

\[
p_{1|t}(\rvx_1 | \rvx_t) = p_{1|t}^{\theta}(\rvx_1 | \rvx_t)
\]

\textbf{($\Leftarrow$)} For the converse, assume \(p_{1|t}(\rvx_1 | \rvx_t) = p_{1|t}^{\theta}(\rvx_1 | \rvx_t)\). Since the Concrete score is a deterministic function of the underlying distribution, this equality immediately implies:

\[
\rvc_{p_{1|t}}\left(\rvx_1; \gN\right) = \rvc_{p^{\theta}_{1|t}}\left(\rvx_1; \gN\right)
\]

Consequently, the Bregman divergence term vanishes, and the \tcsm objective attains its global minimum of zero, completing the proof.
\end{proof}

\subsection{Proof of~\Cref{thm:tcsm_score_distrib_equiv}}
\label{ap:proof:tcsm_score_distrib_equiv}

\tcsmscoredistribequiv*
\begin{proof}
We establish the proposition using a bidirectional approach.

\textbf{($\Rightarrow$)}~
We begin by demonstrating that if the \tcsmscore~\cref{eq:score_based_tcsm} reaches its global minimum, then the \tcsmdistrib~\cref{eq:distribution_based_tcsm} also attains its global minimum.

As indicated in~\cref{eq:tcsm_distrib_target}, the conditional distribution \(p_{1|t}(x_1^i | \rvx_1^{\neq i}, \rvx_t)\) in~\cref{eq:distribution_based_tcsm} can be expressed as:
\begin{align}
p_{1|t}(x_1^i | \rvx_1^{\neq i}, \rvx_t) = \texttt{Cat} \left(x_1^i;\softmax \left( \log \rvc_{p_{1|t}}(x_1^i|\rvx_1^{\neq i}, \rvx_t) \right) \right)
\end{align}

Additionally, we have:
\begin{align}
\label{eq:tcsm_score_distrib_equiv_1}
&\rvc_{p_{1|t}}(x_1^i|\rvx_1^{\neq i}, \rvx_t) \coloneqq  \left[\frac{p_{1|t}(y_1^i|\rvx_1^{\neq i}, \rvx_t)}{p_{1|t}(x_1^i|\rvx_1^{\neq i}, \rvx_t)} \right]_{y_1^i=1}^V = \left[\frac{p_{1|t}(y_1^i, \rvx_1^{\neq i} | \rvx_t)}{p_{1|t}(x_1^i, \rvx_1^{\neq i} | \rvx_t)}\right]_{y_1^i=1}^V
\end{align}

Therefore, when the score-based objective~\cref{eq:score_based_tcsm} achieves its global minimum, according to~\cref{thm:tcsm_optimum}, we have \(\rvc_{p_{1|t}}(\rvx_1|\rvx_t) = \rvc_{p^{\theta}_{1|t}}(\rvx_1|\rvx_t)\). By considering the \(i\)-th column, we obtain:
\begin{align}
\label{eq:tcsm_score_distrib_equiv_2}
\rvc^i_{p_{1|t}}(\cdot|\rvx_t) \coloneqq \left[\frac{p_{1|t}(y_1^i, \rvx_1^{\neq i} | \rvx_t)}{p_{1|t}(x_1^i, \rvx_1^{\neq i} | \rvx_t)}\right]_{y_1^i=1}^V
\end{align}

From the above three equations, it follows that when the score-based objective~\cref{eq:score_based_tcsm} reaches its global minimum, we have \(p_{1|t}(x_1^i|\rvx_1^{\neq i}, \rvx_t) = p^{\theta}_{1|t}(x_1^i|\rvx_1^{\neq i}, \rvx_t)\).

\textbf{($\Leftarrow$)}~
Conversely, by combining~\cref{eq:tcsm_score_distrib_equiv_1} and~\cref{eq:tcsm_score_distrib_equiv_2}, it is evident that when the distribution-based objective~\cref{eq:distribution_based_tcsm} achieves its global minimum, we have \(p_{1|t}(x_1^i|\rvx_1^{\neq i}, \rvx_t) = p^{\theta}_{1|t}(x_1^i|\rvx_1^{\neq i}, \rvx_t)\).

\end{proof}

\subsection{Proof of~\Cref{prop:tcsmscoregkl}}
\label{ap:proof:tcsmscoregkl}

\tcsmscoregkl*

\begin{proof}
Consider the objective function:
{
\begin{align}
&\mathcal{L}_{\texttt{score}}\left(\theta; \gN^1, \gD, h\right) = \E_{\omega(t) p(\rvx_t) h(\rvx_1 | \rvx_t)} \sum_{i=1}^L \ell^i_{\texttt{score}}, \\[-0.5em]
&\ell^i_{\texttt{score}} = \mathcal{D}\left( \left[{\small \frac{p_{1|t}(y_1^i, \rvx_1^{\neq i} | \rvx_t)}{p_{1|t}(x_1^i, \rvx_1^{\neq i} | \rvx_t)}}\right]_{y_1^i=1}^V, \left[{\small \frac{p^{\theta}_{1|t}(y_1^i, \rvx_1^{\neq i} | \rvx_t)}{p^{\theta}_{1|t}(x_1^i, \rvx_1^{\neq i} | \rvx_t)}}\right]_{y_1^i=1}^V \right) \nonumber
\end{align}
}

Utilizing the definition of the generalized KL divergence:
{
\(
\bregd{F}{}{\rvu}{\rvv} = \sum_j u_j \log \frac{u_j}{v_j} - u_j + v_j
\)
},
we substitute this into the objective function to obtain:
  \begin{align}
    \ell^i_{\texttt{score}} &= \mathcal{D}_F\left( \left[\begin{matrix} \frac{p_{1|t}(y_1^i, \rvx_1^{\neq i} | \rvx_t)}{p_{1|t}(x_1^i, \rvx_1^{\neq i} | \rvx_t)} \end{matrix}\right]_{y_1^i=1}^V, \left[\begin{matrix} \frac{p^{\theta}_{1|t}(y_1^i, \rvx_1^{\neq i} | \rvx_t)}{p^{\theta}_{1|t}(x_1^i, \rvx_1^{\neq i} | \rvx_t)} \end{matrix}\right]_{y_1^i=1}^V \right) \\
    &= \sum_{y_1^i} \left( \frac{p_{1|t}(y_1^i, \rvx_1^{\neq i} | \rvx_t)}{p_{1|t}(x_1^i, \rvx_1^{\neq i} | \rvx_t)} \left[\log \frac{p_{1|t}(y_1^i, \rvx_1^{\neq i} | \rvx_t)}{p_{1|t}(x_1^i, \rvx_1^{\neq i} | \rvx_t)} - \log \frac{p_{1|t}^{\theta}(y_1^i, \rvx_1^{\neq i} | \rvx_t)}{p_{1|t}^{\theta}(x_1^i, \rvx_1^{\neq i} | \rvx_t)} \right] - \frac{p_{1|t}(y_1^i, \rvx_1^{\neq i} | \rvx_t)}{p_{1|t}(x_1^i, \rvx_1^{\neq i} | \rvx_t)} + \frac{p^{\theta}_{1|t}(y_1^i, \rvx_1^{\neq i} | \rvx_t)}{p^{\theta}_{1|t}(x_1^i, \rvx_1^{\neq i} | \rvx_t)} \right) \\
    &= \sum_{y_1^i} \left( \frac{p_{1|t}(y_1^i | \rvx_1^{\neq i}, \rvx_t)}{p_{1|t}(x_1^i | \rvx_1^{\neq i}, \rvx_t)} \left[\log \frac{p_{1|t}(y_1^i | \rvx_1^{\neq i}, \rvx_t)}{p_{1|t}(x_1^i | \rvx_1^{\neq i}, \rvx_t)} - \log \frac{p_{1|t}^{\theta}(y_1^i | \rvx_1^{\neq i}, \rvx_t)}{p_{1|t}^{\theta}(x_1^i | \rvx_1^{\neq i}, \rvx_t)} \right] - \frac{p_{1|t}(y_1^i | \rvx_1^{\neq i}, \rvx_t)}{p_{1|t}(x_1^i | \rvx_1^{\neq i}, \rvx_t)} + \frac{p^{\theta}_{1|t}(y_1^i | \rvx_1^{\neq i}, \rvx_t)}{p^{\theta}_{1|t}(x_1^i | \rvx_1^{\neq i}, \rvx_t)} \right)
\end{align}

Given the proposal distribution \(h(\rvx_1 | \rvx_t) = p_{1|t}(\rvx_1 | \rvx_t) = p_{1|t}(\rvx_1^{\neq l} | \rvx_t) p_{1|t}(x_1^l | \rvx_1^{\neq l}, \rvx_t)\), we have:

\begin{align}
  &\E_{p(\rvx_t)p_{1|t}(\rvx_1 | \rvx_t)} \ell^i_{\texttt{score}} \\
  &= \E_{p(\rvx_t)p_{1|t}(\rvx_1^{\neq i} | \rvx_t) p_{1|t}(x_1^i | \rvx_1^{\neq i}, \rvx_t)} \ell^i_{\texttt{score}} \\
  &=\E \sum_{x_1^i, y_1^i} \left(p_{1|t}(y_1^i | \rvx_1^{\neq i}, \rvx_t) \left[\log \frac{p_{1|t}(y_1^i | \rvx_1^{\neq i}, \rvx_t)}{p_{1|t}(x_1^i | \rvx_1^{\neq i}, \rvx_t)} - \log \frac{p_{1|t}^{\theta}(y_1^i | \rvx_1^{\neq i}, \rvx_t)}{p_{1|t}^{\theta}(x_1^i | \rvx_1^{\neq i}, \rvx_t)} \right] - p_{1|t}(y_1^i | \rvx_1^{\neq i}, \rvx_t) + \frac{p_{1|t}(x_1^i | \rvx_1^{\neq i}, \rvx_t)}{p^{\theta}_{1|t}(x_1^i | \rvx_1^{\neq i}, \rvx_t)}p^{\theta}_{1|t}(y_1^i | \rvx_1^{\neq i}, \rvx_t) \right) \\
  &= \E_{p(\rvx_t)p_{1|t}(\rvx_1^{\neq i} | \rvx_t)} \sum_{x_1^i} \underbrace{\dkl{}{p_{1|t}(\cdot | \rvx_1^{\neq i}, \rvx_t)}{p^{\theta}_{1|t}(\cdot | \rvx_1^{\neq i}, \rvx_t)}}_{\dkl{}{\cdot}{\cdot}} \\
  &+ \E_{p(\rvx_t)p_{1|t}(\rvx_1^{\neq i} | \rvx_t)} \underbrace{\sum_{x_1^i} \left(-\log \frac{p_{1|t}(x_1^i | \rvx_1^{\neq i}, \rvx_t)}{p_{1|t}^{\theta}(x_1^i | \rvx_1^{\neq i}, \rvx_t)} - 1 + \frac{p_{1|t}(x_1^i | \rvx_1^{\neq i}, \rvx_t)}{p^{\theta}_{1|t}(x_1^i | \rvx_1^{\neq i}, \rvx_t)} \right)}_{\dis{}{\cdot}{\cdot}} \\
  &= \E_{p(\rvx_t)p_{1|t}(\rvx_1^{\neq i} | \rvx_t)} V\dkl{}{p_{1|t}(\cdot | \rvx_1^{\neq i}, \rvx_t)}{p^{\theta}_{1|t}(\cdot | \rvx_1^{\neq i}, \rvx_t)} + \E_{p(\rvx_t)p_{1|t}(\rvx_1^{\neq i} | \rvx_t)} \dis{}{p_{1|t}(\cdot | \rvx_1^{\neq i}, \rvx_t)}{p^{\theta}_{1|t}(\cdot | \rvx_1^{\neq i}, \rvx_t)}
\end{align}

Thus, the original objective is to minimize the KL divergence and IS divergence between \(p_{1|t}(\cdot | \rvx_1^{\neq l}, \rvx_t)\) and \(p^{\theta}_{1|t}(\cdot | \rvx_1^{\neq l}, \rvx_t)\):

\begin{align}
  \mathcal{L}_{\texttt{score}}(\theta; h = p_{1|t}, \mathcal{D} = \bregd{\texttt{GKL}}{}{}{}) \equiv \mathcal{L}_{\texttt{distrib}}(\theta; h = p_{1|t}, \mathbb{D} = V\mathbb{D}_{\texttt{KL}} + \mathbb{D}_{\texttt{IS}})
\end{align}

When we select the proposal distribution \(h(\rvx_1 | \rvx_t) = p_{1|t}\) and \(\mathcal{D} = \bregd{\texttt{GKL}}{}{}{}\) in the score-based objective, it is equivalent to the distribution-based objective with \(\ddd{}{}{} = V\mathbb{D}_{\texttt{KL}} + \mathbb{D}_{\texttt{IS}}\).
\end{proof}

\subsection{Proof of~\Cref{thm:tcsm_train_objective}}
\label{ap:proof:tcsm_train_objective}

\tcsmtrainobjective*

\begin{proof}
The score-based Target Concrete Score Matching (\tcsmscore) objective, as defined in \cref{eq:score_based_tcsm}, aims to minimize the divergence between the concrete score of the true denoising distribution \(p_{1|t}(\rvx_1 | \rvx_t)\) and the model's denoising distribution \(p_{1|t}^{\theta}(\rvx_1 | \rvx_t)\). Proposition~\ref{prop:tcsmscoregkl} establishes that when using the generalized KL divergence (\(\bregd{\texttt{GKL}}{}{}{}\)) as the discrepancy measure \(\mathcal{D}\) and the true conditional distribution \(p_{1|t}(\rvx_1 | \rvx_t)\) as the proposal distribution \(h(\rvx_1 | \rvx_t)\), the \emph{expected} value of the \tcsmscore objective over the data distribution is equivalent to minimizing a weighted sum of the expected forward KL divergence and the Itakura-Saito (IS) divergence between the true conditional \(p_{1|t}(x_1^i | \rvx_1^{\neq i}, \rvx_t)\) and the model conditional \(p^{\theta}_{1|t}(x_1^i | \rvx_1^{\neq i}, \rvx_t)\):

\begin{align}
  \E_{\omega(t) p(\rvx_t) p_{1|t}(\rvx_1 | \rvx_t)} \sum_{i=1}^L \ell^i_{\texttt{score}}[\bregd{\texttt{GKL}}{}{}{}] &= \E_{\omega(t) p(\rvx_t) p_{1|t}(\rvx_1^{\neq i} | \rvx_t)} \sum_{i=1}^L \Big( V\dkl{}{p_{1|t}(\cdot | \dots)}{p^{\theta}_{1|t}(\cdot | \dots)} \nonumber\\
  &\quad + \dis{}{p_{1|t}(\cdot | \dots)}{p^{\theta}_{1|t}(\cdot | \dots)} \Big),
\end{align}
where \((\cdot | \dots)\) is shorthand for \((x_1^i | \rvx_1^{\neq i}, \rvx_t)\).

However, this expected loss formulation involves the true, unknown distribution \(p_{1|t}\) and cannot be directly computed during training when we only have access to samples \(\rvx_1 \sim p_1(\rvx_1)\) (the target data distribution). Therefore, we resort to Monte Carlo estimation, minimizing a loss function evaluated on individual samples \((t, \rvx_1, \rvx_t)\) drawn according to \(\omega(t)\), \(p_1(\rvx_1)\), and \(p_{t|1}(\rvx_t|\rvx_1)\).

Proposition~\ref{thm:tcsm_train_objective} presents the specific form of this practical, per-sample objective that is minimized during training. This form is particularly relevant and aligns directly with the objective derived for the common case of a \emph{factorized model parameterization}, as detailed in \cref{eq:tcsm_score_factorized}. Under factorization, the model assumes \(p_{1|t}^{\theta}(\rvx_1 | \rvx_t) = \prod_{j=1}^L p_{1|t}^{\theta}(x_1^j | \rvx_t)\), which implies \(p_{1|t}^{\theta}(x_1^i | \rvx_1^{\neq i}, \rvx_t) = p_{1|t}^{\theta}(x_1^i | \rvx_t)\). Let \(q(y|\rvx_t) \coloneqq p_{1|t}^{\theta}(y | \rvx_t)\) denote the factorized model's output distribution for any position.

The objective stated in \cref{eq:tcsm_score_factorized} for a single sample \(\rvx_1\) and position \(i\) is:
\begin{equation}
\label{eq:proof_factorized_obj_ref}
  \ell_{\texttt{score}}^i[\text{factorized}] = \left( -\log q(x_1^i | \rvx_t) + \frac{1}{V q(x_1^i |\rvx_t)} \right) + \frac{1}{V} \sum_{y=1}^V \log q(y |\rvx_t).
\end{equation}
Here, \(x_1^i\) is the specific token at position \(i\) in the sampled clean sequence \(\rvx_1\).

Proposition~\ref{thm:tcsm_train_objective} decomposes this per-sample loss into two terms:
\begin{itemize}
    \item \(\ell^i_{\text{pseudo}} = \left( -\log p_{1|t}^{\theta}(x_1^i | \rvx_1^{\neq i}, \rvx_t) + \frac{1}{V p_{1|t}^{\theta}(x_1^i | \rvx_1^{\neq i}, \rvx_t)} \right)\)
    \item \(\ell^i_{\text{entropy}} = \sum_{y_1^i=1}^V \frac{1}{V} \log p_{1|t}^{\theta}(y_1^i | \rvx_1^{\neq i}, \rvx_t)\)
\end{itemize}
When applied to the factorized model where \(p_{1|t}^{\theta}(y_1^i | \rvx_1^{\neq i}, \rvx_t) = q(y_1^i|\rvx_t)\), these terms become:
\begin{itemize}
    \item \(\ell^i_{\text{pseudo}} = \left( -\log q(x_1^i | \rvx_t) + \frac{1}{V q(x_1^i |\rvx_t)} \right)\)
    \item \(\ell^i_{\text{entropy}} = \frac{1}{V} \sum_{y=1}^V \log q(y |\rvx_t)\)
\end{itemize}
Summing these two components precisely recovers the objective \(\ell_{\texttt{score}}^i[\text{factorized}]\) given in \cref{eq:proof_factorized_obj_ref}.

Thus, the objective \(\ell^i_{\text{pseudo}} + \ell^i_{\text{entropy}}\) as presented in Proposition~\ref{thm:tcsm_train_objective} represents the practical, per-sample loss function derived from the \tcsmscore principle using the generalized KL divergence. It is the objective minimized via Monte Carlo estimation when training from data samples, and its structure directly corresponds to the objective used for factorized models. The constant \(C\) represents terms from the full expected GKL divergence (related to the entropy of the true distribution \(p_{1|t}\)) that do not depend on the model parameters \(\theta\) and are therefore omitted during optimization.
\end{proof}

\section{\tcsm Pre-training from data}
\subsection{Experimental Details and Results}
\label{ap:pretrain_mc}

In this section, we present the experimental results obtained from our datasets, followed by a comprehensive analysis and summary of our findings at the conclusion of this section.

\colorbox{gray!20}{\texteight}~
The \texteight dataset is a character-level text dataset featuring a limited vocabulary of 27 tokens, which includes the letters \texttt{a-z} and the \texttt{\_} whitespace token. We adhere to the standard practice of training and evaluating on \texteight in segments of 256 characters without any preprocessing, as outlined by \citet{hoogeboom2021argmax}. Our experiments on the \texteight dataset, a compact character-level language modeling task, follow the network hyperparameters and dataset splits specified by Austin et al. (2021). We compare our results with methods that utilize models of similar size. Consistent with previous studies~\citep{d3pm,sedd}, we trained discrete diffusion models on \texteight and assessed their performance by measuring bits-per-character on the test set.

\colorbox{gray!20}{\owt}~
To assess our approach in large-scale language modeling, we conducted extensive experiments using the \owt dataset. Given that the original WebText dataset used for training GPT-2~\citep{gpt2} is not publicly accessible, we followed the common practice of using \owt.

Our evaluation involved testing \tcsm-trained discrete diffusion models against GPT-2 using zero-shot testing on five standard benchmarks: LAMBADA~\citep{paperno2016lambada}, WikiText~\citep{merity2016pointer}, Penn Tree Bank (PTB)~\citep{marcus1993building}, and One Billion Words~(LM1B). These datasets encompass a wide array of language understanding tasks and were initially employed to assess GPT-2's zero-shot perplexity performance.

For training, we utilized a batch size of 512 and a sequence length of 1024, maintaining the evaluation setup consistent with that of \citet{sedd}.

The results indicate that \tcsm significantly surpasses existing diffusion methods and closely approaches the performance of autoregressive baselines. It is important to note that our evaluation methodology slightly deviates from previous work, as we compute likelihood unconditionally without employing a sliding window, which typically results in higher perplexity values than those reported in earlier studies.

\section{\tcsm Pre-training with Parametric Model \(p_1\)}

\textbf{Experiments}~
To assess the efficacy of parametric target estimation in expediting the training of discrete diffusion models, we conducted extensive experiments on language modeling tasks using the \texteight and \owt datasets. Our empirical findings reveal substantial improvements across all proposed estimation methods.

To explore whether the parametric model \(p_1\) enhances the sample efficiency of discrete diffusion model training, we employed this model to train the discrete diffusion model from scratch on the \owt dataset, processing 26 billion tokens. The results of these experiments are presented in~\cref{fig:owt_ppl_26b}.

The data clearly indicate that our \tcsm framework, incorporating the parametric model \(p_1\), consistently surpasses existing discrete diffusion methodologies. Notably, the hollow transformer variant (TCSM-Hollow) delivered the best performance. Both the BERT-based (TCSM-Bert) and autoregressive-based (TCSM-AR) target estimations also demonstrated strong results. These outcomes signify a significant advancement over previous diffusion methods such as SEDD and MDLM, enhancing both the learning process and sample efficiency.

The robust performance of our \tcsm variants supports our hypothesis that operating within the clean target space and utilizing parametric estimation can significantly improve discrete diffusion model training. Furthermore, the results suggest that different architectural choices for target estimation present various trade-offs between performance and computational efficiency.

\section{\tcsm Post-training with Parametric Model \(p_{1|t}\)}
\label{ap:tcsm_learning_from_dre}

\subsection{Derivation of Density Ratio Estimation Objectives}
\label{ap:tcsm_learning_from_dre_derivation}

This section provides a detailed derivation of the objective functions used for density ratio estimation (DRE) within the \tcsm framework, as outlined in \cref{sec:tcsm_learning_from_dre}. The core idea is to estimate the ratio between the true conditional data distribution $p_{1|t}(x_1^i | \rvx_1^{\neq i}, \rvx_t)$ and a reference distribution $p^{\text{ref}}_{1|t}(x_1^i | \rvx_1^{\neq i}, \rvx_t)$, denoted by $r(x_1^i | \rvx_1^{\neq i}, \rvx_t) \coloneqq \frac{p_{1|t}(x_1^i | \rvx_1^{\neq i}, \rvx_t)}{p^{\text{ref}}_{1|t}(x_1^i | \rvx_1^{\neq i}, \rvx_t)}$. We employ the Bregman divergence for this estimation task, aiming to find the parameters $\phi$ of a model $r^{\phi}(x_1^i | \rvx_1^{\neq i}, \rvx_t)$ that minimize the divergence to the true ratio $r$.

The general Bregman divergence objective for density ratio estimation is given by \citep{sugiyama2012density}:
\begin{equation}
\label{eq:ap_dre_bregman_start}
\min_{\phi} \E_{p_{1|t}^{\text{ref}}(x_1^i | \rvx_1^{\neq i}, \rvx_t)} \left[ \bregd{F}{}{r(x_1^i | \rvx_1^{\neq i}, \rvx_t)}{r^{\phi}(x_1^i | \rvx_1^{\neq i}, \rvx_t)} \right],
\end{equation}
where $F$ is a strictly convex function defining the divergence, $\bregd{F}{}{u}{v} = F(u) - F(v) - F'(v)(u-v)$.

Expanding the Bregman divergence and using the property that $\E_{p_{1|t}^{\text{ref}}} [ F'(r^\phi) r ] = \E_{p_{1|t}} [ F'(r^\phi) ]$, we can derive a practical objective function by omitting terms independent of the model parameters $\phi$. Minimizing \cref{eq:ap_dre_bregman_start} is equivalent to minimizing:
\begin{equation}
\label{eq:ap_dre_practical_obj}
\mathcal{L}_{\text{DRE}}(\phi) = \E_{p_{1|t}^{\text{ref}}(x_1^i | \dots)} \left[ F'(r^\phi(x_1^i | \dots)) r^\phi(x_1^i | \dots) - F(r^\phi(x_1^i | \dots)) \right] - \E_{p_{1|t}(x_1^i|\dots)} \left[ F'(r^\phi(x_1^i | \dots)) \right],
\end{equation}
where $(\dots)$ is shorthand for the conditioning variables $(\rvx_1^{\neq i}, \rvx_t)$. Note that in practice, the expectations are estimated using Monte Carlo sampling from $p_{1|t}$ (using data samples) and $p_{1|t}^{\text{ref}}$ (using the reference model).

We now instantiate this general objective for the specific choices of $F$ mentioned in the main text:

\textbf{Least-Squares Importance Fitting (LSIF):}
Using $F(r) = \frac{(r - 1)^2}{2}$, we have $F'(r) = r - 1$. Substituting into \cref{eq:ap_dre_practical_obj}:
\begin{align}
\mathcal{L}_{\text{LSIF}}(\phi) &= \E_{p_{1|t}^{\text{ref}}} \left[ (r^\phi - 1) r^\phi - \frac{(r^\phi - 1)^2}{2} \right] - \E_{p_{1|t}} [r^\phi - 1] \\
&= \E_{p_{1|t}^{\text{ref}}} \left[ (r^\phi)^2 - r^\phi - \frac{1}{2}((r^\phi)^2 - 2r^\phi + 1) \right] - \E_{p_{1|t}} [r^\phi] + \text{const.} \\
&= \E_{p_{1|t}^{\text{ref}}} \left[ \frac{(r^\phi)^2}{2} - \frac{1}{2} \right] - \E_{p_{1|t}} [r^\phi] + \text{const.} \\
&\propto \E_{p_{1|t}^{\text{ref}}} \left[ \frac{(r^\phi)^2}{2} \right] - \E_{p_{1|t}} [r^\phi]. \quad \text{(Ignoring constants)}
\end{align}

\textbf{Binary Cross-Entropy (BCE) related / KL Divergence:}
The objective associated with BCE often arises from $f$-divergence dual forms rather than directly from this specific $F(r)$ in the Bregman DRE literature. A common choice leading to BCE is related to the Jensen-Shannon divergence. Alternatively, considering the standard GAN objective for distinguishing $p_{1|t}$ (label 1) from $p_{1|t}^{\text{ref}}$ (label 0) using a discriminator $D(x) = \sigma(\log r^\phi(x))$, where $\sigma(z) = 1 / (1 + \exp(-z))$ is the sigmoid function. Maximizing the log-likelihood $\E_{p_{1|t}}[\log D] + \E_{p_{1|t}^{\text{ref}}}[\log(1-D)]$ is equivalent to minimizing:
\begin{equation*}
 \mathcal{L}_{\text{BCE-like}}(\phi) = -\E_{p_{1|t}} [\log(\sigma(\log r^\phi))] - \E_{p_{1|t}^{\text{ref}}} [\log(1 - \sigma(\log r^\phi))].
\end{equation*}
This formulation is commonly used and corresponds to the objective derived from $F(r) = r \log r - (r + 1) \log (r + 1)$ in some DRE contexts via duality.

\textbf{Generalized Kullback-Leibler (Gen. KL):}
Using $F(r) = r \log r - r$, we have $F'(r) = \log r$. Substituting into \cref{eq:ap_dre_practical_obj}:
\begin{align}
\mathcal{L}_{\text{GenKL}}(\phi) &= \E_{p_{1|t}^{\text{ref}}} [ (\log r^\phi) r^\phi - (r^\phi \log r^\phi - r^\phi) ] - \E_{p_{1|t}} [\log r^\phi] \\
&= \E_{p_{1|t}^{\text{ref}}} [ \cancel{r^\phi \log r^\phi} - \cancel{r^\phi \log r^\phi} + r^\phi ] - \E_{p_{1|t}} [\log r^\phi] \\
&= \E_{p_{1|t}^{\text{ref}}} [ r^\phi ] - \E_{p_{1|t}} [\log r^\phi].
\end{align}

These objectives are summarized in Table~\ref{tab:ap_dre_objectives_summary}.

\begin{table}[h!]
\centering
\caption{Objective functions $\mathcal{L}_{\text{DRE}}(\phi)$ derived from minimizing~\cref{eq:ap_dre_practical_obj} for different Bregman divergence choices $F(r)$. Constants independent of $\phi$ are ignored.}
\label{tab:ap_dre_objectives_summary}
\begin{tabular}{ll}
    \toprule
    \textbf{Method} & \textbf{Objective} $\boldsymbol{\mathcal{L}_{\text{DRE}}(\phi)}$ \\
    \midrule
    LSIF ($F(r) = \frac{(r - 1)^2}{2}$) & $\E_{p_{1|t}^{\text{ref}}} \left[ \frac{(r^\phi)^2}{2} \right] - \E_{p_{1|t}} [r^\phi]$ \\
    \addlinespace
    BCE-like (related to JSD/GAN) & $-\E_{p_{1|t}} [\log(\sigma(\log r^\phi))] - \E_{p_{1|t}^{\text{ref}}} [\log(1 - \sigma(\log r^\phi))]$ \\
    \addlinespace
    Gen. KL ($F(r) = r \log r - r$) & $\E_{p_{1|t}^{\text{ref}}} [r^\phi] - \E_{p_{1|t}} [\log r^\phi]$ \\
    \bottomrule
\end{tabular}
\end{table}

\textbf{Implicit Parameterization Strategies}

As discussed in \cref{sec:tcsm_learning_from_dre}, we consider two main strategies for parameterizing the density ratio and the denoising model, where $\theta$ represents the parameters being optimized.

\textbf{(i) Parameterizing Ratio via Model:} Here, we set $\phi \coloneqq \theta$ and define the ratio implicitly through the denoising model $p_{1|t}^{\theta}$ and the reference model $p_{1|t}^{\text{ref}}$:
\begin{equation}
r_{1|t}^{\theta}(x_1^i | \dots) \coloneqq \frac{p_{1|t}^{\theta}(x_1^i | \dots)}{p_{1|t}^{\text{ref}}(x_1^i | \dots)}.
\end{equation}
We substitute this definition of $r^\phi \equiv r^\theta$ into the objectives in Table~\ref{tab:ap_dre_objectives_summary}. For example, the Gen. KL objective becomes $\E_{p_{1|t}^{\text{ref}}} [\nicefrac{p_{1|t}^\theta}{p_{1|t}^{\text{ref}}}] - \E_{p_{1|t}} [\log(\nicefrac{p_{1|t}^\theta}{p_{1|t}^{\text{ref}}})]$.

\textbf{(ii) Parameterizing Model via Ratio:} Here, we directly parameterize the ratio, typically ensuring non-negativity, e.g., $r_{1|t}^{\theta}(x_1^i | \dots) = \exp(f_\theta(x_1^i | \dots))$, where $f_\theta$ is a neural network parameterized by $\theta$. The denoising model is then implicitly defined (up to normalization) as $p_{1|t}^{\theta}(x_1^i | \dots) \propto p_{1|t}^{\text{ref}}(x_1^i | \dots) r_{1|t}^{\theta}(x_1^i | \dots)$. The optimization minimizes the DRE objectives from Table~\ref{tab:ap_dre_objectives_summary} with $r^\phi \equiv r^\theta = \exp(f_\theta)$. For instance, the Gen. KL objective becomes $\E_{p_{1|t}^{\text{ref}}} [\exp(f_\theta)] - \E_{p_{1|t}} [f_\theta]$.

The resulting objectives for both strategies and all three choices of $F$ are compiled in Table~\ref{tab:ap_density_ratio_objectives_combined_final}, which mirrors Table~\ref{tab:density_ratio_objectives} in the main text for consistency.

\begin{table}[h!]
\centering
\caption{Final objective functions for \tcsm post-training via DRE under different Bregman divergences $F(r)$ and parameterization strategies. Here $f_\theta = \log r_{1|t}^\theta$, where $r_{1|t}^\theta$ is the parameterized ratio (explicit in (ii), implicit in (i)), and $\sigma(x)$ is the sigmoid function.}
\label{tab:ap_density_ratio_objectives_combined_final}
\resizebox{\textwidth}{!}{%
\begin{tabular}{lcc}
    \toprule
    $\boldsymbol{F(r)}$ & \textbf{Strategy (i) Objective:} $\boldsymbol{r^\theta = p_{1|t}^\theta / p_{1|t}^{\text{ref}}}$ & \textbf{Strategy (ii) Objective:} $\boldsymbol{p_{1|t}^\theta \propto p_{1|t}^{\text{ref}} \exp(f_\theta)}$ \\
    \midrule
    LSIF: $\nicefrac{(r - 1)^2}{2}$ & $\E_{p_{1|t}^{\text{ref}}} \left[ \frac{1}{2}\left(\nicefrac{p_{1|t}^\theta}{p_{1|t}^{\text{ref}}}\right)^2 \right] - \E_{p_{1|t}} \left[\nicefrac{p_{1|t}^\theta}{p_{1|t}^{\text{ref}}}\right]$ & $\E_{p_{1|t}^{\text{ref}}} \left[ \frac{\exp(2f_\theta)}{2} \right] - \E_{p_{1|t}} [\exp(f_\theta)]$ \\
    \addlinespace
    BCE-like: $r \log r - (r + 1) \log (r + 1)$ & $-\E_{p_{1|t}} [\log(\sigma(\log\nicefrac{p_{1|t}^\theta}{p_{1|t}^{\text{ref}}}))] - \E_{p_{1|t}^{\text{ref}}} [\log(1 - \sigma(\log\nicefrac{p_{1|t}^\theta}{p_{1|t}^{\text{ref}}}))]$ & $-\E_{p_{1|t}} [\log(\sigma(f_\theta))] - \E_{p_{1|t}^{\text{ref}}} [\log(1 - \sigma(f_\theta))]$ \\
    \addlinespace
    Gen. KL: $r \log r - r$ & $\E_{p_{1|t}^{\text{ref}}} \left[\nicefrac{p_{1|t}^\theta}{p_{1|t}^{\text{ref}}}\right] - \E_{p_{1|t}} [\log\nicefrac{p_{1|t}^\theta}{p_{1|t}^{\text{ref}}}]$ & $\E_{p_{1|t}^{\text{ref}}} [\exp(f_\theta)] - \E_{p_{1|t}} [f_\theta]$ \\
    \bottomrule
\end{tabular}%
}
\end{table}

\subsection{Connections to $f$-divergence \tcsm}~
A straightforward method involves independently parameterizing both the density ratio model \(r_{1|t}^{\phi}(\rvx_1| \rvx_t)\) and the denoising model \(p_{1|t}^{\theta}(\rvx_1 | \rvx_t)\). Once the density ratio model is optimized using Bregman divergence minimization, resulting in the optimal model \(r^{\star}(\rvx_1, \rvx_t)\), we face the task of solving the optimization problem
\begin{equation}
\min_{\theta} \mathcal{D}(r^{\star} p^{\text{ref}}, p^{\theta})
\end{equation}
to align \(p^\theta\) with \(p\). However, this two-stage process, alternating between density ratio estimation and divergence minimization, is not stable and is difficult to converge.

As shown in~\citep{uehara2016generative}, minimizing the objective
\begin{equation}
  \E_{p_{1|t}^{\text{ref}}(x_1^i | \rvx_1^{\neq i}, \rvx_t)} \left( F'(r^\phi(x_1^i | \rvx_1^{\neq i}, \rvx_t)) r^\phi(x_1^i | \rvx_1^{\neq i}, \rvx_t) - F(r^\phi(x_1^i | \rvx_1^{\neq i}, \rvx_t)) \right) - \E_{p_{1|t}(x_1^i|\rvx_1^{\neq i}, \rvx_t)} F'(r^\phi(x_1^i | \rvx_1^{\neq i}, \rvx_t))
\end{equation}
for estimating the density ratio model \(r^\phi\) would lead to $f$-divergence maximization, thus such two-stage process will yield GAN-like adversarial training.
This motivates us to parameterize the density ratio model in terms of the denoising model, or vice versa, as shown in~\cref{sec:tcsm_learning_from_dre}.

\textbf{Reference Models}~
With the density ratio model parameterized, the next crucial step is selecting an appropriate reference distribution \(p^{\text{ref}}\). We explore two compelling options.

\colorbox{gray!20}{{Weaker model as reference}}
At each optimization step \(k\), we can set the reference distribution to be the previous step denoising distribution \(p^{\text{ref}} =  p_{1|t}^{\theta_{k-1}}\), and the density ratio model is parameterized as
\begin{equation}
r_{1|t}^{\theta}(x_1^i | \rvx_1^{\neq i}, \rvx_t) = \frac{p_{1|t}^{\theta}(x_1^i | \rvx_1^{\neq i}, \rvx_t)}{p_{1|t}^{\theta_{k-1}}(x_1^i | \rvx_1^{\neq i}, \rvx_t)}.
\end{equation}
This will give us a procedure similar to SPIN~\citep{chen2024self}. Alternatively, we can use the exponential moving average of the denoising distribution as the reference distribution, \(p^{\text{ref}} =  p_{1|t}^{\theta_{\text{ema}}}\). In this case, we naturally use the (i) parameterization strategy for the density ratio model.

\colorbox{gray!20}{{Pre-trained model as reference}}
We can also set the reference distribution to be a pre-trained discrete diffusion model \(p^{\text{ref}}_{1|t}(\rvx_1 | \rvx_t) \coloneqq p_{1|t}^{{\text{pre}}}(\rvx_1 | \rvx_t)\). We can use the (ii) parameterization strategy to parameterize the density ratio model as
\begin{equation}
r_{1|t}^{\theta}(\rvx_1 | \rvx_t) = \frac{p_{1|t}^{\theta}(\rvx_1 | \rvx_t)}{p_{1|t}^{\text{pre}}(\rvx_1 | \rvx_t)}.
\end{equation}
The training objective becomes
\begin{equation}
  \E_{p_{1|t}^{\text{ref}}(x|\rvx_1^{\neq i}, \rvx_t)} \left( F'(r^\theta(x)) r^\theta(x) - F(r^\theta(x)) \right) - \E_{p_{1|t}(x|\rvx_1^{\neq i}, \rvx_t)} F'(r^\theta(x)).
\end{equation}

\begin{algorithm}[h]\caption{\tcsm Post-Training with Density Ratio Estimation}
  \label{alg:density_ratio_estimation}
  \begin{algorithmic}[1]
    \Require Dataset $\rmD \coloneqq \{\rvx_1\}$
    \Require Pre-trained model $p_{1|t}^{\text{pre}}$
    \Require Proposal distribution $h$
    \Require Bregman divergence function $F$
    \Require Density ratio model $r_{1|t}^{\theta} = f_\theta$
    \Require Learning rate $\eta$
    \State $\rvx_1 \sim \rmD$ \textcolor{gray}{\Comment{Sample data point}}
    \State $t \sim \omega(t)$ \textcolor{gray}{\Comment{Sample diffusion time}}
    \State $\rvx_t \sim p_{t|1}(\rvx_t|\rvx_1)$ \textcolor{gray}{\Comment{Sample noisy data}}    
    \State $\rvx_1^{\text{ref}} \gets p_{1|t}^{\text{ref}}(\rvx_1 | \rvx_t)$ \textcolor{gray}{\Comment{Sample from reference distribution}}    
    \If{$F = \text{LSIF}$} \textcolor{gray}{\Comment{Compute density ratio based on Bregman divergence}}
        \State $\mathcal{L} \gets \left( \frac{\exp(2f_\theta(\rvx_1^{\text{ref}}))}{2} \right) - \exp(f_\theta(\rvx_1))$
    \ElsIf{$F = \text{BCE}$}
        \State $\mathcal{L} \gets \log(1 - \sigma(f_\theta(\rvx_1^{\text{ref}}))) + \log(\sigma(f_\theta(\rvx_1)))$
    \ElsIf{$F = \text{Gen. KL}$}
        \State $\mathcal{L} \gets \exp(f_\theta(\rvx_1^{\text{ref}})) - f_\theta(\rvx_1)$
    \EndIf
    \State $\theta \gets \theta - \eta \nabla_\theta \mathcal{L}$ \textcolor{gray}{\Comment{Update parameters}}
  \end{algorithmic}
\end{algorithm}

\subsection{Experimental Details and Results}
We present a thorough empirical evaluation of our density ratio estimation-based post-training methodology within the \tcsm framework.
While \cref{sec:post_training_dpo} investigates parameterization strategy (i), we concentrate here on evaluating parameterization strategy (ii), which characterizes the denoising model through density ratio estimation.

Our experimental framework utilizes a pre-trained GPT2-small model with \tcsmdistrib for language modeling tasks, implementing an absorbing state formulation as outlined in \cref{sec:tcsm_learning_from_data}. Building upon the work of \citet{xu2024energy}, we initialize our density ratio model \(r_{1|t}^{\theta}(\rvx_1 | \rvx_t)\) using the pre-trained diffusion model. The initialization process involves projecting mean-pooled last token embeddings to scalar values, while the partition function is estimated following the methodology proposed by \citet{nowozin2018debiasing}. 

To ensure a comprehensive evaluation, we investigate three distinct Bregman divergence measures for training the density ratio model:
\begin{itemize}
\item Least Squares Importance Fitting (LSIF)
\item Binary Cross-Entropy (BCE)
\item Generalized KL divergence
\end{itemize}
For a complete algorithmic description of our approach, we refer readers to \cref{alg:density_ratio_estimation}.

The comparative performance of these measures is documented in Table~\cref{tab:tcsm_bregman_dre}. Notably, our implementation of \tcsm with BCE shares similarities with the EDLM model - in fact, EDLM~NCE \citep{xu2024energy} can be viewed as a specific case of our framework when BCE serves as the chosen Bregman divergence.

Our experimental analysis yields several significant findings. Most prominently, the post-training approach incorporating density ratio estimation consistently outperforms the pre-trained baseline model, as demonstrated by improved perplexity metrics across all configurations. While both generalized KL divergence and binary cross-entropy achieve particularly strong results, the relatively uniform performance across all tested variants highlights the fundamental robustness of our methodology, regardless of the specific divergence measure employed. This consistency across different mathematical formulations provides strong evidence for the stability and reliability of our approach.

\section{\tcsm Post-training with Reward Function}
\label{ap:tcsm_reward_tuning}

\subsection{Derivation of Objectives for Reward Tuning}
\label{ap:tcsm_reward_tuning_derivation}
In this section, we provide more comprehensive derivations of the \tcsm objectives introduced in \cref{sec:post_training_reward}, with particular focus on their practical implementations.

\textbf{\tcsmscore and \tcsmdistrib with \(\gN^1\)}~
For the score-based \tcsm objective with target distribution \(p^R_1(\rvx_1)\), we can directly apply the formulation from \cref{eq:score_based_tcsm}:

\begin{equation}
  \mathcal{L}_{\texttt{score}}\left(\theta; \gN^1, \gD, h\right) = \E_{t, \rvx_1, \rvx_t}\sum_{i=1}^L \mathcal{D}\left( \left[\frac{p_{1|t}^R(y_1^i, \rvx_1^{\neq i} | \rvx_t)}{p_{1|t}^R(x_1^i, \rvx_1^{\neq i} | \rvx_t)}\right]_{y_1^i=1}^V, \left[\frac{p^{\theta}_{1|t}(y_1^i, \rvx_1^{\neq i} | \rvx_t)}{p^{\theta}_{1|t}(x_1^i, \rvx_1^{\neq i} | \rvx_t)}\right]_{y_1^i=1}^V \right)
\end{equation}

Let us define \(\rvy \coloneq \left[y_1^i, \rvx_1^{\neq i}\right]\) and \(\rvx \coloneq \left[x_1^i, \rvx_1^{\neq i}\right]\), where \(y_1^i \neq x_1^i\). The ratio between reward-modulated conditional probabilities can be expressed as:

\begin{align}
  \frac{p_{1|t}^R(\rvy | \rvx_t)}{p_{1|t}^R(\rvx | \rvx_t)} = \frac{p_1(\rvy) p_{t|1}(\rvx_t | \rvy) \exp(R(\rvy)/\beta)}{p_1(\rvx) p_{t|1}(\rvx_t | \rvx) \exp(R(\rvx)/\beta)} = \frac{p_{1|t}(\rvy | \rvx_t)}{p_{1|t}(\rvx | \rvx_t)} \exp\left(\frac{R(\rvy) - R(\rvx)}{\beta}\right)
\end{align}

Given access to a pre-trained model \(p_{1|t}^{\text{pre}}\) that approximates \(p_{1|t}\), we can reformulate the objective as:

\begin{equation}
  \mathcal{L}_{\texttt{score}}\left(\theta; \gN^1, \gD, h\right) = \E_{t, \rvx_1, \rvx_t}\sum_{i=1}^L \mathcal{D}\left( \left[\frac{p_{1|t}^{\text{pre}}(y_1^i, \rvx_1^{\neq i} | \rvx_t)}{p_{1|t}^{\text{pre}}(x_1^i, \rvx_1^{\neq i} | \rvx_t)} \exp\left(\frac{R(y_1^i, \rvx_1^{\neq i}) - R(x_1^i, \rvx_1^{\neq i})}{\beta}\right)\right]_{y_1^i=1}^V, \left[\frac{p^{\theta}_{1|t}(y_1^i, \rvx_1^{\neq i} | \rvx_t)}{p^{\theta}_{1|t}(x_1^i, \rvx_1^{\neq i} | \rvx_t)}\right]_{y_1^i=1}^V \right)
\end{equation}

For models with factorized denoising parameterizations, this objective simplifies to:

\begin{equation}
  \mathcal{L}_{\texttt{score}}\left(\theta; \gN^1, \gD, h\right) = \E_{t, \rvx_1, \rvx_t}\sum_{i=1}^L \mathcal{D}\left( \left[\frac{p_{1|t}^{\text{pre}}(y_1^i| \rvx_t)}{p_{1|t}^{\text{pre}}(x_1^i| \rvx_t)} \exp\left(\frac{R(y_1^i, \rvx_1^{\neq i}) - R(x_1^i, \rvx_1^{\neq i})}{\beta}\right)\right]_{y_1^i=1}^V, \left[\frac{p^{\theta}_{1|t}(y_1^i| \rvx_t)}{p^{\theta}_{1|t}(x_1^i| \rvx_t)}\right]_{y_1^i=1}^V \right)
\end{equation}

This formulation enables efficient computation of all terms involving \(p_{1|t}^{\text{pre}}\) and \(p^{\theta}_{1|t}\).

For the distribution-based \tcsmdistrib approach, we derive a similar approximation:
\begin{equation}
  p^R_{1|t}(x_1^i | \rvx_1^{\neq i}, \rvx_t) \propto p_{1|t}^{\text{pre}}(x_1^i | \rvx_1^{\neq i}, \rvx_t) \exp(R(x_1^i, \rvx_1^{\neq i})/\beta)
\end{equation}

The detailed implementation is presented in~\Cref{alg:reward_guided_training_n1}.

\textbf{\tcsmdistrib with \(\gN^{\text{full}}\)}~
When employing \(\gN^{\text{full}}\), the \tcsmdistrib objective takes the form:
\begin{equation}
\mathcal{L}_{\texttt{distrib}}(\theta; \gN^{\text{full}}, \gD, h) = \E_{\omega(t) p(\rvx_t)} \ddd{}{p^R_{1|t}(\cdot | \rvx_t)}{p^{\theta}_{1|t}(\cdot |\rvx_t)}
\end{equation}

Using the approximation \(p_{1|t}^{\text{pre}} \approx p_{1|t}\), we can derive:
\begin{align}
&\dkl{}{p^R_{1|t}(\cdot | \rvx_t)}{p^{\theta}_{1|t}(\cdot |\rvx_t)} = \E_{p^R_{1|t}(\rvx_1 | \rvx_t)} \log \frac{p^R_{1|t}(\rvx_1 | \rvx_t)}{p^{\theta}_{1|t}(\rvx_1 | \rvx_t)} \\
&= \sum_{\rvx_1} {p^R_{1|t}(\rvx_1 | \rvx_t)} \log \frac{p_{1|t}^{R}(\rvx_1 | \rvx_t)}{p^{\theta}_{1|t}(\rvx_1 | \rvx_t)} \\
&= \sum_{\rvx_1} \frac{p_{1|t}(\rvx_1 | \rvx_t) \exp(R(\rvx_1)/\beta)}{\sum_{\rvx_1} p_{1|t}(\rvx_1 | \rvx_t) \exp(R(\rvx_1)/\beta)} \log \frac{p_{1|t}^{R}(\rvx_1 | \rvx_t)}{p^{\theta}_{1|t}(\rvx_1 | \rvx_t)} \\
&= \E_{p_{1|t}(\rvx_1 | \rvx_t)} \frac{\exp(R(\rvx_1)/\beta)}{\gZ(\rvx_t)} \log \frac{p_{1|t}^{R}(\rvx_1 | \rvx_t)}{p^{\theta}_{1|t}(\rvx_1 | \rvx_t)}
\end{align}

The complete algorithm is detailed in~\Cref{alg:reward_guided_training}.

\textbf{Connection to Reinforcement Learning}~
An interesting connection emerges when we set \(h_{1|t}(\rvx_1 | \rvx_t) = p_1^{\theta}(\rvx_1 | \rvx_t)\) and use \(\ddd{}{p}{q} \coloneqq \dkl{}{q}{p}\) as the reverse KL divergence. The \tcsmdistrib objective then takes the form of a traditional RL objective:
\begin{align}
&\ddd{}{p^R_{1|t}(\cdot | \rvx_1^{\neq i}, \rvx_t)}{p^{\theta}_{1|t}(\cdot | \rvx_1^{\neq i}, \rvx_t)} = \dkl{}{p^{\theta}_{1|t}(\cdot | \rvx_1^{\neq i}, \rvx_t)}{p^R_{1|t}(\cdot | \rvx_1^{\neq i}, \rvx_t)} \\
&= \E_{p^{\theta}_{1|t}(x_1^i | \rvx_1^{\neq i}, \rvx_t)} \log \frac{p^{\theta}_{1|t}(x_1^i | \rvx_1^{\neq i}, \rvx_t)}{p^R_{1|t}(x_1^i | \rvx_1^{\neq i}, \rvx_t)} \\
&= \E_{p^{\theta}_{1|t}(x_1^i | \rvx_1^{\neq i}, \rvx_t)} \log \frac{p^{\theta}_{1|t}(x_1^i | \rvx_1^{\neq i}, \rvx_t)}{p^{\text{pre}}_{1|t}(x_1^i | \rvx_1^{\neq i}, \rvx_t) \exp(R(x_1^i, \rvx_1^{\neq i})/\beta)} + C \\
&= \dkl{}{p^{\theta}_{1|t}(x_1^i | \rvx_1^{\neq i}, \rvx_t)}{p^{\text{pre}}_{1|t}(x_1^i | \rvx_1^{\neq i}, \rvx_t)} - \frac{1}{\beta} \E_{p^{\theta}_{1|t}(x_1^i | \rvx_1^{\neq i}, \rvx_t)} R(x_1^i, \rvx_1^{\neq i}) + C
\end{align}

This formulation closely resembles the standard RLHF objective, highlighting the theoretical connections between our approach and traditional reinforcement learning methods.

For practical implementation, we employ \(h_{1|t}(\rvx_1 | \rvx_t) = p_1^{\text{pre}}(\rvx_1 | \rvx_t)\) as the proposal distribution. Since the new model \(p_1\) follows a product distribution, its support must necessarily be contained within the support of \(p_1^{\text{pre}}\).

\begin{algorithm}[h]\caption{Reward-Guided Post-Training with $\gN^1$}
  \label{alg:reward_guided_training_n1}
  \begin{algorithmic}[1]
    \Require Pre-trained model $p_{1|t}^{\text{pre}}$, proposal distribution $h$, reward function $R$, temperature $\beta$
    \Require Model parameters $\theta$, learning rate $\eta$, sequence length $L$
    \State Sample diffusion time $t \sim \omega(t)$ \textcolor{gray}{\Comment{Sample diffusion time and generate noisy sequence}}
    \State Sample clean sequence $\rvx_1 \sim h(\cdot|\rvx_t)$
    \State Generate noisy sequence $\rvx_t \sim p(\cdot|\rvx_1)$
    \For{$i = 1$ \textbf{to} $L$} \textcolor{gray}{\Comment{Compute reward-modulated target distribution}}
        \State $p^R_{1|t}(x_1^i | \rvx_1^{\neq i}, \rvx_t) \gets \frac{p_{1|t}^{\text{pre}}(x_1^i | \rvx_1^{\neq i}, \rvx_t) \exp(R(x_1^i, \rvx_1^{\neq i})/\beta)}{\sum_{x'} p_{1|t}^{\text{pre}}(x' | \rvx_1^{\neq i}, \rvx_t) \exp(R(x', \rvx_1^{\neq i})/\beta)}$
    \EndFor
    \State $\mathcal{L} \gets \mathcal{L}_{\texttt{distrib}}(\theta; \gN^1, \gD, h)$ \textcolor{gray}{\Comment{Compute loss and update parameters}}
    \State $\theta \gets \theta - \eta \nabla_{\theta} \mathcal{L}$ \textcolor{gray}{\Comment{Gradient descent step}}
  \end{algorithmic}
\end{algorithm}

\begin{algorithm}[h]\caption{Reward-Guided Training with $\gN^{\text{full}}$}
  \label{alg:reward_guided_training}
  \begin{algorithmic}[1]
    \Require Pre-trained model $p_{1|t}^{\text{pre}}$, proposal distribution $h$, reward function $R$, temperature $\beta$
    \Require Model parameters $\theta$, learning rate $\eta$
    \State $t \sim \omega(t)$ \textcolor{gray}{\Comment{Sample diffusion time}}
    \State $\rvx_t \sim p(\rvx_t)$ \textcolor{gray}{\Comment{Sample noise}}    
    \State Sample mini-batch $\{\rvx_{1,b}\}_{b=1}^B \sim h(\rvx_1|\rvx_t)$ \textcolor{gray}{\Comment{Draw samples from proposal}}
    \State $\gZ \gets \sum_{b=1}^B \exp(R(\rvx_{1,b})/\beta)$ \textcolor{gray}{\Comment{Compute normalization}}
    \State $w_b \gets \exp(R(\rvx_{1,b})/\beta)/\gZ$ for $b=1,\ldots,B$ \textcolor{gray}{\Comment{Importance weights}}
    \State $\gL \gets \sum_{b=1}^B w_b \log \frac{p_{1|t}^{R}(\rvx_{1,b} | \rvx_t)}{p_{1|t}^{\theta}(\rvx_{1,b} | \rvx_t)}$ \textcolor{gray}{\Comment{Weighted objective}}
    \State $\theta \gets \theta - \eta \nabla_{\theta} \gL$ \textcolor{gray}{\Comment{Gradient update}}
  \end{algorithmic}
\end{algorithm}

\subsection{Experimental Details and Results}
\label{ap:tcsm_reward_tuning_results}
\textbf{Synthetic Experiments}~
To assess the effectiveness of our reward function tuning methodology, we conducted experiments using a synthetic dataset. This dataset is structured as a 2D discrete grid, specifically a \(128 \times 128\) grid. Initially, we pre-train a discrete diffusion model, denoted as \( p^{\text{pre}} \), on this grid using the \tcsmdistrib objective with a uniform source distribution. Subsequently, we define a reward function \( R \) designed to eliminate modes located in the right half of the grid. Concretely, we assign \( R(x) = 0 \) for all points \( x \) in the left half, and \( R(x) = -10^5 \) for those in the right half. Following this setup, we fine-tune the model using the \tcsmdistrib objective with \(\gN^{\text{full}}\), adhering to the procedure detailed in~\cref{alg:reward_guided_training}.

The results of this process are illustrated in Figure~\ref{fig:grid_samples}, which displays the intermediate samples generated by the model both before and after fine-tuning. Initially, during the pre-training phase, the model successfully captures all modes present in the data distribution. However, after applying reward-guided fine-tuning, the model effectively suppresses the modes in the right half of the grid, resulting in final samples that exclusively generate the left half of the grid.

\textbf{Toxicity Mitigation}~
A critical challenge in deploying language models is effectively controlling and mitigating toxic content in their outputs. Although toxic generations occur relatively infrequently, their potential negative impact on users and downstream applications makes this an essential area of research~\citep{singhal2025general}. Even a small proportion of toxic outputs can significantly undermine the safety, reliability, and trustworthiness of language models in real-world scenarios.

Our experimental methodology builds upon recent advances in controlled text generation~\citep{zhao2024probabilistic,rector2024steering,singhal2025general}. To ensure reproducibility, we conduct our experiments using a standardized story-beginning prompt: "Once upon a time, there was a". The foundation of our experimental framework is a pre-trained diffusion model developed in~\cref{sec:tcsm_learning_from_data}, which implements \tcsmdistrib with absorbing discrete diffusion. To further enhance the model's capabilities and robustness, we perform comprehensive fine-tuning on the Tinystories dataset~\citep{eldan2023tinystories}. This fine-tuning process utilizes the Adam optimizer with (\(\beta_1 = 0.9\), \(\beta_2 = 0.95\)) and a learning rate of \(1 \times 10^{-4}\), continuing for 100,000 training steps.

For measuring and controlling toxicity, we implement a sophisticated reward function based on a pre-trained RoBERTa classifier~\citep{logacheva2022paradetox}. During our evaluation phase, we employ this classifier as our primary metric for assessing content safety, with outputs scored on a continuous scale from 0 (completely non-toxic) to 1 (highly toxic). This granular scoring system allows for precise measurement of our mitigation strategies' effectiveness.

The results of our comprehensive evaluation are presented in~\cref{fig:toxicity_perplexity}, where we analyze two critical metrics: the toxicity score and the generative perplexity of the samples. To assess the quality and coherence of the generated text, we measure perplexity using GPT-2 Large~\citep{gpt2} as an independent evaluator.

We fine-tune the model using the \tcsmdistrib objective with \(\gN^{\text{full}}\), following the procedure outlined in~\cref{alg:reward_guided_training}. To investigate the impact of sampling density, we conduct experiments with varying numbers of Monte Carlo samples \(N \in \{2, 4, 8, 16\}\) for estimating the importance weights, with results displayed in~\cref{fig:toxicity_perplexity}. For comparative analysis, we include benchmark results from the pre-trained MDLM~\citep{mdlm} model using Best-of-N sampling with \(N \in \{4,8\}\), as reported in~\citep{singhal2025general}.

Our experimental results demonstrate several key findings. First, our approach exhibits superior scaling properties with respect to the number of Monte Carlo samples used for importance weight estimation. Second, our fine-tuning methodology achieves more effective toxicity mitigation compared to the pre-trained MDLM model, even when the latter employs Best-of-N sampling techniques. Notably, since our approach is based on fine-tuning rather than inference-time scaling, it eliminates the need for multiple reward function evaluations during inference, resulting in reduced computational overhead and improved efficiency in practical applications.

\section{\tcsm Post-training with Preference Optimization}
\label{ap:dpo}

\subsection{Detailed Algorithm}

\textbf{Problem Setting}~
We introduce a methodology for fine-tuning pre-trained diffusion models using pairwise preference data, denoted as {\small $\{(\rvq, \rvx^w_1, \rvx^l_1)\}$}. In this formulation, $\rvq$ represents a query or instruction, while $\rvx^w_1$ and $\rvx^l_1$ represent the preferred (winning) and non-preferred (losing) responses, respectively.

The underlying preferences are assumed to emerge from a latent reward model that is not directly observable. Among various approaches for modeling such preferences, we adopt the widely-recognized Bradley-Terry (BT) model~\citep{bradley1952rank}. This model provides an elegant framework for capturing human preference distributions. Specifically, the BT model expresses the probability of one response being preferred over another as:

\begin{equation}
p^*(\rvx^w_1 \succ \rvx^l_1 | \rvq) = \frac{\exp(R^*(\rvq, \rvx^w_1))}{\exp(R^*(\rvq, \rvx^w_1)) + \exp(R^*(\rvq, \rvx^l_1))}
\end{equation}

where $R^*(\rvq, \rvx)$ represents the underlying reward function that quantifies the quality of response $\rvx$ given query $\rvq$.

Building on this foundation, we define our target distribution to emphasize preferred responses. This distribution can be formally expressed as:

\begin{equation}
\ptarget(\rvx_1 | \rvq) \coloneqq p_1(\rvx_1^w | \rvq) \coloneq p_1(\rvx_1 \; \text{is winner} | \rvq) = p_1(\rvx_1 | \rvq) \sum_{\rvy_1} p_1(\rvy_1 | \rvq) p^*(\rvx_1 \succ \rvy_1 | \rvq), 
\end{equation}

For practical implementation, we leverage a pre-trained diffusion model {\small $p_{1|t}^{\text{pre}}(\rvx_1 | \rvq)$} as our reference distribution, which serves as the starting point for our fine-tuning process.

Based on the \tcsm with density ratio estimation approach in~\cref{sec:tcsm_learning_from_dre}, we learn a new diffusion model \inlineeq{p_{1|t}^{\theta}} relative to the pre-trained reference.
The detailed algorithm is shown in~\cref{alg:dpo}, where we use BCE loss to estimate the density ratio as an example.

\begin{algorithm}[H]
\caption{Preference Optimization with \tcsm using BCE loss}
\label{alg:dpo}
\begin{algorithmic}[1]
    \Require Pre-trained diffusion model $p_{1|t}^{\text{pre}}$
    \Require Preference dataset $\mathcal{D} = \{(\rvc, \rvx^w, \rvx^l)\}$
    \Require Model parameters $\theta$, learning rate $\eta$, time distribution $\omega(t)$, coefficient $\beta$
    \For{each training iteration}
        \State $t \sim \omega(t)$ \textcolor{gray}{\Comment{Sample diffusion time}}
        \State $(\rvc, \rvx^w, \rvx^l) \sim \mathcal{D}$ \textcolor{gray}{\Comment{Sample preference triplet}}
        \State $\rvx_t^w \sim p_{t|1}(\cdot | \rvx_1^w)$ \textcolor{gray}{\Comment{Sample noisy sequence for preferred response}}
        \State $\rvx_t^l \sim p_{t|1}(\cdot | \rvx_1^l)$ \textcolor{gray}{\Comment{Sample noisy sequence for non-preferred response}}
        \State \textcolor{gray}{\Comment{Compute density ratios for preferred and non-preferred responses}}
        \State $r_{1|t}^{w}(\rvc) \gets \frac{p_{1|t}^{\theta}(\rvx^w | \rvc)}{\beta p_{1|t}^{\text{pre}}(\rvx^w | \rvc)}$
        \State $r_{1|t}^{l}(\rvc) \gets \frac{p_{1|t}^{\theta}(\rvx^l | \rvc)}{\beta p_{1|t}^{\text{pre}}(\rvx^l | \rvc)}$
        \State \textcolor{gray}{\Comment{Compute loss}}
        \State $\mathcal{L} \gets -\log \frac{r_{1|t}^{w}(\rvc)}{1 + r_{1|t}^{w}(\rvc)} - \log \frac{1}{1 + r_{1|t}^{l}(\rvc)}$
        \State $\theta \gets \theta - \eta \nabla_{\theta} \mathcal{L}$ \textcolor{gray}{\Comment{Update model parameters}}
    \EndFor
\end{algorithmic}
\end{algorithm}

\subsection{Experimental Details and Results}
\label{ap:dpo_experiments}

To evaluate the effectiveness of preference optimization, we employed the IMDB-sentiment dataset~\citep{maas} as our primary evaluation benchmark, with the SiEBERT model~\citep{hartmann2023more} serving as our reward function. For training data, we utilized a carefully curated preference dataset constructed in prior work~\citep{rafailov2024direct,wang2023beyond}. As our foundation model, we used the pre-trained model from~\cref{sec:tcsm_learning_from_data}, which had been extensively trained on the \owt dataset.

The fine-tuning process implemented our density ratio estimation framework, as detailed in~\cref{sec:tcsm_learning_from_dre}, with Binary Cross-Entropy (BCE) loss serving as our optimization objective. We adopted parameterization strategy (i) from~\cref{sec:tcsm_learning_from_dre}, which defines the density ratio as:

\begin{align}
r_{1|t}^{\phi \coloneqq \theta}(x_1^i | \rvx_1^{\neq i}, \rvx_t) = \frac{p_{1|t}^{\theta}(x_1^i | \rvx_1^{\neq i}, \rvx_t)}{\beta p_{1|t}^{\text{ref}}(x_1^i | \rvx_1^{\neq i}, \rvx_t)}
\end{align}

Here, the coefficient \(\beta\) plays a crucial role in balancing two competing objectives: maximizing preference reward optimization while maintaining fidelity to the original pre-trained model. The complete training procedure is outlined in~\cref{alg:dpo}.

Our training protocol consisted of 10 full epochs with a batch size of 256. We employed the Adam optimizer with a learning rate of \(1 \times 10^{-5}\) and weight decay of \(1 \times 10^{-5}\). To ensure stable training, we implemented a linear learning rate warmup for the first 10\% of training steps, with momentum parameters \(\beta_1 = 0.9\) and \(\beta_2 = 0.95\). The noise schedule remained consistent with that of the pre-trained model to maintain continuity in the diffusion process.

To thoroughly investigate the effects of preference optimization, we conducted experiments across a range of \(\beta\) values: \(\{0.1, 0.5, 1, 5\}\). Our evaluation focused on two key metrics: the mean reward achieved by the fine-tuned model and the entropy of generated samples. As shown in~\cref{fig:reward_vs_entropy}, we observed that models with stronger preference optimization (higher \(\beta\) values) achieved both higher mean rewards and lower sample entropy. This suggests that our approach improves alignment with desired preferences but also leads to less diverse generation of preferred samples.

\section{\tcsm Post-training with AR $\to$ Diffusion Distillation}
\label{ap:tcsm_distillation}

\textbf{Problem setting}~
In this case, we assume we have a pre-trained autoregressive model \(p_1^{\texttt{AR}}(\rvx_1)\) trained on the target distribution \(p_1(\rvx_1)\), and we show that we can use \tcsm to distill it to a diffusion model \(p_{1}^{\theta}(\rvx_1)\).
Note that this deviates from the regular diffusion models setting, that we have the knowledge of the target distribution \(p_1(\rvx_1) \approx p^{\texttt{AR}}(\rvx_1)\), and we can use it as a teacher model.
In this section, we set the target distribution to be the AR teacher model distributoin \(p_1(\rvx_1) \coloneqq p_1^{\texttt{AR}}(\rvx_1)\).
And akin to classical knowledge distillation, we are interested in how to distill the knowledge from the AR teacher model to the diffusion student model.

\textbf{\tcsm objectives for distillation}~
We show that our \tcsm objectives can naturally integrate the knowledge of the AR teacher model into the training objective.

We have 
\begin{align}
p_{1|t}(\rvx_1| \rvx_t) = \frac{p_{1}^{\texttt{AR}}(\rvx_1) p_{t|1}(\rvx_t | \rvx_1)}{\sum_{\rvx_1} p_{1}^{\texttt{AR}}(\rvx_1) p_{t|1}(\rvx_t | \rvx_1)}.
\end{align}

We can also use \(\probar_1(\rvx_1)\) to estimate 
\begin{align}
p_{1|t}(x_1^i | \rvx_1^{\neq i}, \rvx_t) = \frac{\probar_1(x_1^i, \rvx_1^{\neq i}) p_{t|1}(\rvx_t | x_1^i, \rvx_1^{\neq i})}{\sum_{y_1^i} \probar_1(y_1^i, \rvx_1^{\neq i}) p_{t|1}(\rvx_t | y_1^i, \rvx_1^{\neq i})}.
\end{align}.

Both score-based and distribution-based \tcsm objectives can be used to distill the AR teacher model to the diffusion student model, we use the distribution-based \tcsm objective in our experiments and assume it is the default setting in following discussions.

\textbf{Efficient estimation of distillation target}~
To optimize the \tcsm objective, we need to compute the distillation target  \(\probar_1(\rvx_1)\).
Naively, this requires $(V - 1) \times L + 1$ likelihood evaluations of the teacher autoregressive model for each sequence \(\rvy \in \mathcal{N}^1(\rvx)\).
Even though that the likelihood evaluation can be done in parallel for the autoregressive model, this procedure is still computationally prohibitive.
To address this challenge, we introduce two approaches to efficiently estimate the target concrete score, \emph{Top-$K$} estimation and \emph{First-order Taylor} estimation.

\colorbox{gray!20}{Top-$K$ approximation}
Our empirical analysis reveals that distribution \(p_{1|t}(x_1^i | \rvx_1^{\neq i}, \rvx_t)\) are naturally sparse.
As illustrated in \cref{fig:token_distributions}, tokens with high density ratios closely resemble the one-hot encoding of original tokens in the simplex space, but enriched with distributional information.
This observation motivates approximating the score vector with only the top-$K$ items, treating the rest as zero, for efficient computation.
We leverage this property to propose an efficient top-$K$ approximation that reduces computational complexity from $O(VL)$ to $O(KL)$ by considering only the $K$ most probable tokens at each position.
This approximation can be efficiently implemented using batched forward passes and proves effective even with $K \leq 128$ - for detailed implementation and the complete algorithm, we refer readers to \cref{alg:topk_log_density_ratio} in the appendix.

\colorbox{gray!20}{First-order Taylor approximation}
We leverage the fact that autoregressive language models, despite operating on discrete tokens, are differentiable functions that can be approximated using Taylor expansion. For sequences that differ by only one position, we can efficiently estimate the likelihood ratio using first-order Taylor approximation:
\(
  \log p_{1|t}(y_1^i, \rvx_1^{\neq i} | \rvx_t) \approx \log{p_{1|t}(x_1^i, \rvx_1^{\neq i} | \rvx_t)} + \nabla_{\rve_{\rvx_1}} \log p_{1|t}(\rvx_1 | \rvx_t)^\top (\rve_{\rvy_1} - \rve_{\rvx_1})
\).
This gradient-based estimation requires just one forward and backward pass through the teacher model; for detailed derivations and implementation, please refer to \cref{alg:ddlm_taylor}.

\begin{figure}[t]
    \centering
    \includegraphics[width=0.8\textwidth]{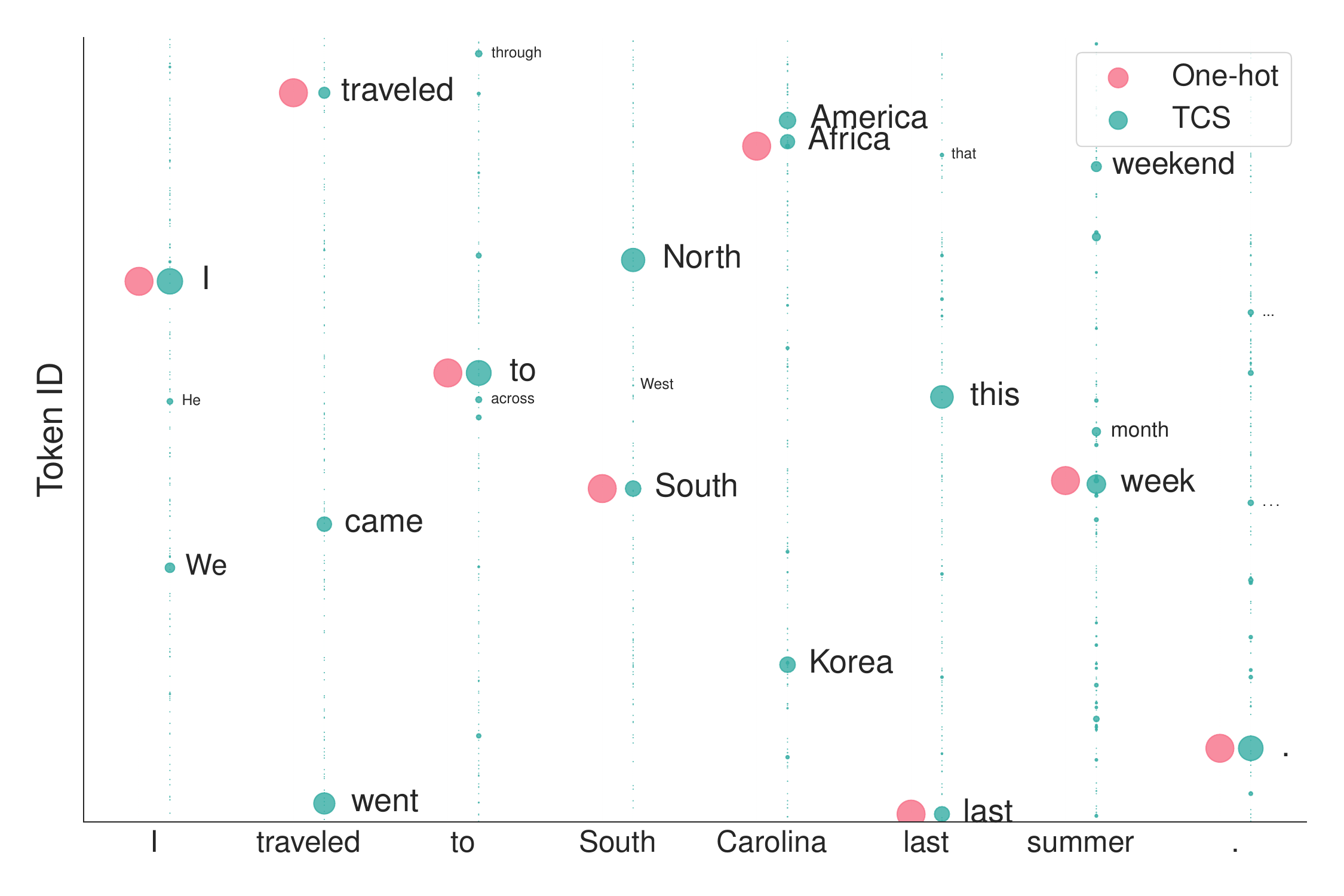}
    \caption{Visualization of the concrete score for sequence "I traveled to South Carolina last summer.". The x-axis represents the position in the sequence, and the y-axis represents the log-probability ratio. The red line represents the original token, and the blue lines represent the top-K tokens with the highest log-probability ratios. The concrete score is highly sparse, with most of the probability mass concentrated on a few tokens.}
    \label{fig:token_distributions}
\end{figure}

\begin{algorithm}[t]\caption{Top-K Estimation}
  \label{alg:topk_log_density_ratio}
  \begin{algorithmic}[1]
  \Procedure{$\mathsf{tcs\_estimate}$}{$\rvx_0, \mathsf{teacher\_model}, L, V, K, \mathsf{tcs}$}
  \State \textcolor{gray}{\Comment{\small  $\rvx_0$: Input tokens; $L$: Sequence length; $V$: Vocabulary size; $K$: Top-$K$ tokens to select; $\mathsf{tcs}$: list}}
  \State $\mathsf{logits} \gets \mathsf{teacher\_model}(\rvx_0) \in \mathbb{R}^{V \times L}; \mathsf{original\_log\_prob} \gets \mathsf{teacher\_model\_log\_prob}(\rvx_0)$

  \For{$l = 1$ to $L$}
    \State Get top-$K$ tokens: $\mathsf{top\_tokens} \gets \mathsf{TopK}(\mathsf{logits}[:, l], K)$
    \State If $\rvx_0[l] \notin \mathsf{top\_tokens}$, add it to $\mathsf{top\_tokens}$
    \State Construct a batch of new sequences $\widehat{\rvx}_0 \gets [\rvx_0^{<l}, \mathsf{top\_tokens}, \rvx_0^{>l}]$
    \State Compute log probability of sequences $\mathsf{log\_prob}$ from $\mathsf{new\_logits} \gets \mathsf{teacher\_model}(\widehat{\rvx}_0)$ \label{line:logprob}
    \State Compute log-density ratio: $\mathsf{log\_density\_ratio} \gets \mathsf{log\_prob} - \mathsf{orig\_log\_prob}$
    \State Append log-density ratio to list: $\mathsf{tcs} \gets \mathsf{tcs} + \mathsf{log\_density\_ratio}$ \label{line:append}
  \EndFor
  \State {\bf return} $\mathsf{tcs}$ \label{line:return}
  \EndProcedure
\end{algorithmic}
\end{algorithm}

\begin{algorithm}[t]\caption{Top-K with N-Gram Estimation}
  \label{alg:topk_ngram_log_density_ratio}
  \begin{algorithmic}[1]
  \Procedure{$\mathsf{tcs\_estimate}$}{$\rvx_1, \mathsf{teacher\_model}, \mathsf{ngram\_model}, L, V, K, \mathsf{tcs}$}
  \State \textcolor{gray}{\Comment{\small  $\rvx_1$: Input tokens; $L$: Sequence length; $V$: Vocabulary size; $K$: Top-$K$ tokens to select; $\mathsf{tcs}$: list}}
  \State $\mathsf{logits} \gets \mathsf{teacher\_model}(\rvx_1) \in \mathbb{R}^{V \times L}; \mathsf{original\_log\_prob} \gets \mathsf{teacher\_model\_log\_prob}(\rvx_1)$

  \For{$l = 1$ to $L$}
    \State Get top-$K$ tokens: $\mathsf{top\_tokens} \gets \mathsf{TopK}(\mathsf{logits}[:, l], K)$ \label{line:topk}
    \State Get N-Gram score for all tokens: $\mathsf{n\text{-}gram\_scores}  \gets \mathsf{ngram\_model}([\x_1^{l+1}, \dots, \x_1^{l+N-1}])$
    \State Add another top-$K$ tokens: $\mathsf{top\_tokens} \gets \mathsf{top\_tokens} + \mathsf{TopK}(\mathsf{n\text{-}gram\_scores}, K)$
    \State If $\rvx_1[l] \notin \mathsf{top\_tokens}$, add it to $\mathsf{top\_tokens}$
    \State Construct a batch of new sequences $\widehat{\rvx}_1 \gets [\rvx_1^{<l}, \mathsf{top\_tokens}, \rvx_1^{>l}]$
    \State Compute log probability of sequences $\mathsf{log\_prob}$ from $\mathsf{new\_logits} \gets \mathsf{teacher\_model}(\widehat{\rvx}_1)$
    \State Compute log-density ratio: $\mathsf{log\_density\_ratio} \gets \mathsf{log\_prob} - \mathsf{orig\_log\_prob}$ \label{line:logratio}
    \State Append log-density ratio to list: $\mathsf{tcs} \gets \mathsf{tcs} + \mathsf{log\_density\_ratio}$
  \EndFor
  \State {\bf return} $\mathsf{tcs}$
  \EndProcedure
\end{algorithmic}
\end{algorithm}

\begin{algorithm}[t]\caption{Concrete Score Estimation with first-order Taylor approximation}
  \label{alg:ddlm_taylor}
  \begin{algorithmic}[1]
  \Procedure{$\mathsf{tcs\_estimate}$}{$\mathsf{teacher\_model}, \mathsf{tokens}, V, \tau$}
  \State \textcolor{gray}{\Comment{\small $\mathsf{tokens}$: Input tokens of shape $(B,L)$; $V$: Vocabulary size; $\tau$: Temperature}}
  \State $\rvx_1 \gets \mathsf{one\_hot}(\mathsf{tokens}, V)$ \textcolor{gray}{\Comment{Convert to one-hot vectors}}
  \State Enable gradient computation for $\rvx_1$
  \State $\mathsf{logits} \gets \mathsf{teacher\_model}(\rvx_1)$
  \State $\mathsf{log\_prob} \gets \mathsf{log\_softmax}(\mathsf{logits})$
  \State $\mathsf{log\_prob} \gets \sum(\rvx_1[:, 1:, :] \cdot \mathsf{log\_prob}[:, :-1, :])$
  \State Compute gradient: $\mathsf{grad\_log\_prob} \gets \nabla_{\rvx_1}\mathsf{log\_prob}$
  \State \textcolor{gray}{\Comment{Compute log-density ratios}}
  \State $\mathsf{log\_prob\_ratio} \gets \mathsf{grad\_log\_prob} - \sum_{\text{dim}=-1}(\rvx_1 \cdot \mathsf{grad\_log\_prob})$
  \State Scale by temperature: $\mathsf{log\_prob\_ratio} \gets \mathsf{log\_prob\_ratio} / \tau$
  \State $\mathsf{prob\_ratio} \gets \exp(\mathsf{log\_prob\_ratio})$
  \State {\bf return} $\mathsf{prob\_ratio}$
  \EndProcedure
\end{algorithmic}
\end{algorithm}

\textbf{Experimental}~
To validate our distillation approach, we conducted comprehensive experiments focusing on language modeling capabilities using the \owt dataset. Our experimental setup involved two key components: a teacher model and a student model. For the teacher, we pre-trained a transformer-based autoregressive model following the architectural configurations described in~\citep{mdlm}. As our student model, we employed an absorbing discrete diffusion model.

The training process utilized our Top-K estimation strategy with $K=128$, training the student model from scratch. To assess performance, we tracked the validation negative log-likelihood (NLL) loss on the \owt dataset, which we visualize in Figure~\cref{fig:owt_loss_curves}. The empirical results demonstrate two significant findings: First, our distillation approach substantially accelerates the student model's learning trajectory compared to standard training. Second, and perhaps more importantly, models trained with our distillation loss consistently achieve lower perplexity scores than baseline approaches throughout the entire training process, indicating improved model quality.

\section{Connection to Continuous Target Score Matching}
\label{sec:connection_to_tsm}

In this section, we elaborate on the relationship between the proposed Target Concrete Score Matching (\tcsm) framework for discrete data and the established Target Score Matching (TSM) objective \citep{tsm} used in continuous diffusion models. We first briefly review TSM in the context of language modeling via continuous diffusion and then demonstrate how \tcsm can be viewed as its discrete analogue under certain approximations.

Continuous diffusion models for language often operate in a continuous embedding space. Let $\rvx_1 = [x_1^1, \dots, x_1^L]$ be a discrete sequence from the vocabulary $\mathcal{X} = \{1, \dots, V\}$. Let $\mathbf{E} \in \mathbb{R}^{d \times V}$ be a word embedding matrix, where $d$ is the embedding dimension. The one-hot vector for token $k$ is $\rve_k \in \{0, 1\}^V$. The embedding for token $x_1^l$ is $\mathbf{E}^\top \rve_{x_1^l}$. The forward noising process typically acts independently on these embeddings:
\begin{equation}
    q_{t|1}(\rvz_t | \rvx_1) = \prod_{l=1}^L q_{t|1}(\rvz_t^l | x_1^l) = \prod_{l=1}^L \mathcal{N}(\rvz_t^l; \alpha_t \mathbf{E}^\top \rve_{x_1^l}, \sigma_t^2 \mathbf{I}_d),
\end{equation}
where $(\rvz_t^l)_{l=1}^L$ forms the sequence of noisy embeddings $\rvz_t \in \mathbb{R}^{L \times d}$, and $\alpha_t, \sigma_t$ are schedule parameters. The goal is to learn the score function $\nabla_{\rvz_t} \log q_t(\rvz_t)$ of the marginal distribution $q_t(\rvz_t) = \int q_{t|1}(\rvz_t | \rvx_1) q_1(\rvx_1) d\rvx_1$.

Target Score Matching (TSM) provides an objective when the score of the clean data distribution, $\nabla_{\rvz_1} \log p_1(\rvz_1)$ (where $\rvz_1$ represents the clean embeddings and $p_1$ is a density over them), is known or can be estimated. The following identity connects the noisy score to the clean score:

\begin{lemma}[Target Score Matching Identity, adapted from \citep{tsm}] \label{lemma:tsmi_appendix}
    Let $q_{t|1}(\rvz_t | \rvz_1) = \mathcal{N}(\rvz_t; \alpha_t \rvz_1, \sigma_t^2 \mathbf{I})$ define the forward process conditioned on clean continuous data $\rvz_1$, and let $p_1(\rvz_1)$ be a differentiable distribution over $\rvz_1$. Then, the score of the noisy marginal $q_t(\rvz_t) = \int q_{t|1}(\rvz_t | \rvz_1) p_1(\rvz_1) d\rvz_1$ is given by:
    \begin{equation}
        \nabla_{\rvz_t} \log q_t(\rvz_t) = \frac{1}{\alpha_t} \mathbb{E}_{q_{1|t}(\rvz_1 | \rvz_t)} \left[ \nabla_{\rvz_1} \log p_1(\rvz_1) \right],
    \end{equation}
    where $q_{1|t}(\rvz_1 | \rvz_t)$ is the posterior distribution.
\end{lemma}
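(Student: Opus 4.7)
The plan is to prove this identity by differentiating the marginal $q_t(\rvz_t)$ through its defining integral, exploiting a symmetry of the Gaussian transition kernel that relates $\nabla_{\rvz_t}$ and $\nabla_{\rvz_1}$ operations, and then performing integration by parts to move the gradient onto $p_1$.

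First, I would write $\nabla_{\rvz_t}\log q_t(\rvz_t) = \nabla_{\rvz_t} q_t(\rvz_t) / q_t(\rvz_t)$ and push the gradient inside the integral defining $q_t$ (justified by dominated convergence, since the Gaussian kernel and its $\rvz_t$-gradient are smooth and integrable against $p_1$). The key algebraic observation is then that for the Gaussian kernel $q_{t|1}(\rvz_t\mid\rvz_1) = \mathcal{N}(\rvz_t;\alpha_t\rvz_1,\sigma_t^2 \mathbf{I})$, direct computation gives
\begin{equation*}
\nabla_{\rvz_t} q_{t|1}(\rvz_t\mid\rvz_1) = -\tfrac{\rvz_t - \alpha_t\rvz_1}{\sigma_t^2}\, q_{t|1}(\rvz_t\mid\rvz_1), \qquad \nabla_{\rvz_1} q_{t|1}(\rvz_t\mid\rvz_1) = \tfrac{\alpha_t(\rvz_t - \alpha_t\rvz_1)}{\sigma_t^2}\, q_{t|1}(\rvz_t\mid\rvz_1),
\end{equation*}
so that $\nabla_{\rvz_t} q_{t|1}(\rvz_t\mid\rvz_1) = -\tfrac{1}{\alpha_t}\nabla_{\rvz_1} q_{t|1}(\rvz_t\mid\rvz_1)$. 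This swap is the heart of the argument: it turns a gradient in the noisy variable into one in the clean variable at the price of a factor $-1/\alpha_t$.

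Next, I would substitute this identity into $\nabla_{\rvz_t} q_t(\rvz_t) = \int \nabla_{\rvz_t} q_{t|1}(\rvz_t\mid\rvz_1)\, p_1(\rvz_1)\, d\rvz_1$ and apply integration by parts in $\rvz_1$ to transfer the $\nabla_{\rvz_1}$ off the kernel and onto $p_1$, yielding $\nabla_{\rvz_t} q_t(\rvz_t) = \tfrac{1}{\alpha_t}\int q_{t|1}(\rvz_t\mid\rvz_1)\, \nabla_{\rvz_1} p_1(\rvz_1)\, d\rvz_1$. Rewriting $\nabla_{\rvz_1} p_1 = p_1 \nabla_{\rvz_1}\log p_1$, dividing by $q_t(\rvz_t)$, and recognizing the Bayes posterior $q_{1|t}(\rvz_1\mid\rvz_t) = q_{t|1}(\rvz_t\mid\rvz_1) p_1(\rvz_1) / q_t(\rvz_t)$ immediately gives the claimed identity as an expectation under $q_{1|t}$.

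The main obstacle is the integration-by-parts step, which implicitly requires the boundary term $[q_{t|1}(\rvz_t\mid\rvz_1)\, p_1(\rvz_1)]$ to vanish at infinity in $\rvz_1$. Since $q_{t|1}$ is Gaussian in $\alpha_t\rvz_1$ for any fixed $\rvz_t$ (with $\alpha_t > 0$), it decays super-polynomially in $\|\rvz_1\|$, so as long as $p_1$ does not grow faster than $e^{\alpha_t^2 \|\rvz_1\|^2/(2\sigma_t^2)}$ the boundary term vanishes; this is a mild regularity assumption on $p_1$ that I would state explicitly, matching the differentiability hypothesis already in the lemma. Everything else is mechanical, so I expect the entire argument to fit in a short self-contained derivation once this regularity condition is in place.
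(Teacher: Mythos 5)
Your proposal is correct. It rests on the same key observation as the paper's proof --- the gradient-swap identity for the Gaussian kernel, $\nabla_{\rvz_t} q_{t|1}(\rvz_t\mid\rvz_1) = -\tfrac{1}{\alpha_t}\nabla_{\rvz_1} q_{t|1}(\rvz_t\mid\rvz_1)$ --- but you then transfer the derivative onto $p_1$ by integration by parts at the level of densities, whereas the paper works with log-densities, invokes Bayes' rule to write $\nabla_{\rvz_1}\log q_{t|1} = \nabla_{\rvz_1}\log q_{1|t} - \nabla_{\rvz_1}\log p_1$, and then kills the posterior-score term using $\mathbb{E}_{q_{1|t}}[\nabla_{\rvz_1}\log q_{1|t}] = \int \nabla_{\rvz_1} q_{1|t}\,d\rvz_1 = 0$. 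These are two bookkeepings of the same computation: the vanishing of your integration-by-parts boundary term for $q_{t|1}(\rvz_t\mid\rvz_1)p_1(\rvz_1)$ is exactly the boundary condition the paper needs for $\int \nabla_{\rvz_1} q_{1|t}\,d\rvz_1$ to vanish, since $q_{1|t}\propto q_{t|1}p_1$. A small merit of your version is that you state this regularity requirement on $p_1$ explicitly (sub-Gaussian-tail growth relative to the kernel), which the paper only gestures at with ``assuming boundary conditions''; a small merit of the paper's version is that the final expectation form appears immediately without having to divide by $q_t$ and recognize the posterior at the end. Both are complete and valid.
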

\begin{proof}
    The proof follows standard arguments, e.g., in \citet{tsm}, adapted for the scaling factor $\alpha_t$. Using the property $\nabla_{\rvz_1} \log q_{t|1}(\rvz_t|\rvz_1) = -\alpha_t \nabla_{\rvz_t}\log q_{t|1}(\rvz_t|\rvz_1)$ and Bayes' rule $q_{t|1}(\rvz_t|\rvz_1) = q_{1|t}(\rvz_1|\rvz_t) q_t(\rvz_t) / p_1(\rvz_1)$, we take gradients w.r.t. $\rvz_1$:
    $\nabla_{\rvz_1}\log q_{t|1}(\rvz_t|\rvz_1) = \nabla_{\rvz_1}\log q_{1|t}(\rvz_1|\rvz_t) - \nabla_{\rvz_1} \log p_1(\rvz_1)$.
    Combining these yields $\nabla_{\rvz_t}\log q_{t|1}(\rvz_t|\rvz_1) = -\frac{1}{\alpha_t} (\nabla_{\rvz_1}\log q_{1|t}(\rvz_1|\rvz_t) - \nabla_{\rvz_1} \log p_1(\rvz_1))$.
    Finally, taking the expectation w.r.t. $q_{1|t}(\rvz_1 | \rvz_t)$:
    $\nabla_{\rvz_t}\log q_t(\rvz_t) = \mathbb{E}_{q_{1|t}(\rvz_1 | \rvz_t)}[\nabla_{\rvz_t}\log q_{t|1}(\rvz_t|\rvz_1)] = -\frac{1}{\alpha_t} \mathbb{E}_{q_{1|t}}[\nabla_{\rvz_1}\log q_{1|t}] + \frac{1}{\alpha_t} \mathbb{E}_{q_{1|t}}[\nabla_{\rvz_1} \log p_1(\rvz_1)]$.
    Since $\mathbb{E}_{q_{1|t}}[\nabla_{\rvz_1}\log q_{1|t}] = \int \nabla_{\rvz_1} q_{1|t}(\rvz_1|\rvz_t) d\rvz_1 = 0$ (assuming boundary conditions), the identity holds.
\end{proof}

Using Lemma \ref{lemma:tsmi_appendix}, a score network $\mathbf{s}_\theta (\rvz_t, t)$ can be trained by minimizing the TSM loss:
\begin{equation}
    \mathcal{L}_{\mathrm{TSM}}(\theta) =  \mathbb{E}_{t \sim U(0,1)} \mathbb{E}_{p_1 (\rvz_1) q_{t|1}(\rvz_t | \rvz_1)} \left\lVert \mathbf{s}_\theta (\rvz_t, t) - \frac{1}{\alpha_t} \nabla_{\rvz_1}\log p_1(\rvz_1) \right\rVert_2^2.
\end{equation}
Alternatively, using the mean prediction parameterization $\vmu_\theta (\rvz_t, t) \approx \mathbb{E}_{q_{1|t}(\rvz_1 | \rvz_t)}[\rvz_1]$, and Tweedie's formula $\mathbb{E}_{q_{1|t}(\rvz_1 | \rvz_t)}[\rvz_1] = \frac{1}{\alpha_t} (\sigma_t^2 \nabla_{\rvz_t}\log q_t(\rvz_t) + \rvz_t)$, the TSM objective becomes equivalent to minimizing (up to scaling by $\lambda_t = \alpha_t^2/\sigma_t^2$):
\begin{equation} \label{eq:tsm_loss_mean_appendix}
     \mathcal{L}_{\mathrm{TSM}}^{\vmu}(\theta) = \mathbb{E}_{t \sim U(0,1)} \mathbb{E}_{p_1 (\rvz_1) q_{t|1}(\rvz_t | \rvz_1)} \left\lVert \vmu_\theta (\rvz_t, t) - \left( \frac{\sigma_t^2}{\alpha_t} \nabla_{\rvz_1} \log p_1(\rvz_1) + \frac{1}{\alpha_t} \rvz_t \right) \right\rVert_2^2.
\end{equation}
Note: The exact form depends slightly on conventions; here we target a scaled version of the clean score plus noise term. Let $\mathbf{T}(\rvz_1, \rvz_t, t) \coloneqq \frac{\sigma_t^2}{\alpha_t} \nabla_{\rvz_1} \log p_1(\rvz_1) + \frac{1}{\alpha_t} \rvz_t$ be the target for the mean predictor.

Now, let's connect this to the discrete \tcsm objective. Consider the log-probability ratio (concrete score component) for the posterior distribution $q_{1|t}(\rvx_1 | \rvz_t)$ in the continuous setting, where $\hat{\rvx}_1$ differs from $\rvx_1$ only at position $i$ (i.e., $\hat{x}_1^i = j \neq x_1^i$, and $\hat{x}_1^l = x_1^l$ for $l \neq i$):
\begin{equation}
    \log \frac{q_{1|t}({\hat{\rvx}_1} | \rvz_t)}{q_{1|t}({\rvx_1} | \rvz_t)} = \log \frac{q_1(\hat{\rvx}_1)}{q_1(\rvx_1)} + \log \frac{q_{t|1}(\rvz_t | \hat{\rvx}_1)}{q_{t|1}(\rvz_t | \rvx_1)}.
\end{equation}
The second term simplifies due to the product structure of $q_{t|1}$:
\begin{align}
    \log \frac{q_{t|1}(\rvz_t | \hat{\rvx}_1)}{q_{t|1}(\rvz_t | \rvx_1)} &= \log \frac{q_{t|1}(\rvz_t^i | \hat{x}_1^i)}{q_{t|1}(\rvz_t^i | x_1^i)} \\
    &\propto -\frac{\| \rvz_t^i - \alpha_t \mathbf{E}^\top \rve_{\hat{x}_1^i} \|^2}{2\sigma_t^2} + \frac{\| \rvz_t^i - \alpha_t \mathbf{E}^\top \rve_{x_1^i} \|^2}{2\sigma_t^2} \\
    &= \frac{\alpha_t}{\sigma_t^2} \langle \rvz_t^i, \mathbf{E}^\top (\rve_{\hat{x}_1^i} - \rve_{x_1^i}) \rangle - \frac{\alpha_t^2}{2\sigma_t^2} \left( \| \mathbf{E}^\top \rve_{\hat{x}_1^i} \|^2 - \| \mathbf{E}^\top \rve_{x_1^i} \|^2 \right). \label{eq:noise_term_ratio}
\end{align}
Let's assume embeddings have similar norms, making the last term negligible, or absorb it into the definition.

For the first term, $\log \frac{q_1(\hat{\rvx}_1)}{q_1(\rvx_1)}$, we use a first-order Taylor approximation in the continuous embedding space $\rvz_1 = [\mathbf{E}^\top \rve_{x_1^1}, \dots, \mathbf{E}^\top \rve_{x_1^L}]$ corresponding to $\rvx_1$. Let $p_1(\rvz_1)$ be the density over these embeddings. Then:
\begin{align}
    \log \frac{p_1(\rvz_{\hat{\rvx}_1})}{p_1(\rvz_{\rvx_1})} &\approx \log p_1(\rvz_{\rvx_1}) + \langle \nabla_{\rvz_1} \log p_1(\rvz_1), \rvz_{\hat{\rvx}_1} - \rvz_{\rvx_1} \rangle - \log p_1(\rvz_{\rvx_1}) \\
    &= \langle \nabla_{\rvz_1} \log p_1(\rvz_1), \rvz_{\hat{\rvx}_1} - \rvz_{\rvx_1} \rangle \\
    &= \langle (\nabla_{\rvz_1} \log p_1(\rvz_1))_i, \mathbf{E}^\top (\rve_{\hat{x}_1^i} - \rve_{x_1^i}) \rangle, \label{eq:marginal_term_ratio}
\end{align}
where $(\cdot)_i$ denotes the gradient block corresponding to the $i$-th position embedding.

Combining \cref{eq:noise_term_ratio} (simplified) and \cref{eq:marginal_term_ratio}, the target concrete score is approximately:
\begin{align}
    \rvr_{q_{1|t}}(\rvx_1 | \rvz_t)_{i,j} &\coloneqq \log \frac{q_{1|t}({\rvx_1^{\neq i}, x_1^i \leftarrow j} | \rvz_t)}{q_{1|t}({\rvx_1} | \rvz_t)} \\
    &\approx \langle (\nabla_{\rvz_1} \log p_1(\rvz_1))_i + \frac{\alpha_t}{\sigma_t^2} \rvz_t^i, \mathbf{E}^\top(\rve_{j} - \rve_{x_1^i}) \rangle. \label{eq:tcs_target_approx}
\end{align}
Now, consider the model prediction $p_\theta(\rvx_1 | \rvz_t)$, often parameterized via logits $\vmu_\theta(\rvz_t, t)$ such that $p_\theta(x_1^i=j | \rvz_t) = \softmax([\vmu_\theta]_{:,i})_j$. The model's concrete score is:
\begin{equation}
    \rvr_{p_\theta}(\rvx_1 | \rvz_t)_{i,j} = [\vmu_\theta]_{j,i} - [\vmu_\theta]_{x_1^i, i} = \langle [\vmu_\theta]_{:,i}, \rve_j - \rve_{x_1^i} \rangle. \label{eq:tcs_model_score}
\end{equation}
The \tcsm objective aims to match $\rvr_{p_\theta}$ to $\rvr_{q_{1|t}}$. The TSM objective (\cref{eq:tsm_loss_mean_appendix}) encourages $\vmu_\theta(\rvz_t, t) \approx \mathbf{T}' \coloneqq \frac{\sigma_t^2}{\alpha_t} \nabla_{\rvz_1} \log p_1(\rvz_1) + \frac{1}{\alpha_t} \rvz_t$. If this holds, then from \cref{eq:tcs_model_score}:
\begin{equation}
    \rvr_{p_\theta}(\rvx_1 | \rvz_t)_{i,j} \approx \langle [\mathbf{T}']_{:,i}, \rve_j - \rve_{x_1^i} \rangle = \langle \left(\frac{\sigma_t^2}{\alpha_t} \nabla_{\rvz_1} \log p_1(\rvz_1)\right)_i + \frac{1}{\alpha_t} \rvz_t^i, \rve_j - \rve_{x_1^i} \rangle.
\end{equation}
Comparing this to the target approximation in \cref{eq:tcs_target_approx}, we see they align (up to scaling factors and potential embedding norm terms) if $\mathbf{E} = \mathbf{I}$. When $\mathbf{E} \neq \mathbf{I}$, the alignment is approximate.

In summary, under the first-order Taylor approximation for the marginal discrete probability ratio and assuming word embeddings $\mathbf{E}$ behave similarly to an identity mapping (or have negligible impact on the inner products compared to the main terms), minimizing the \tcsm objective, which matches discrete concrete scores, serves as an approximation to minimizing the continuous TSM objective. This provides a conceptual link between the two frameworks, highlighting how \tcsm adapts score-matching principles to the discrete domain.

\section{Detailed Model Configurations}
\label{ap:experiment_config_table}

To enhance clarity and facilitate reproducibility, this section provides a comprehensive summary of the specific models, parameterizations, and training objectives used for each experimental result presented throughout the paper. \Cref{tab:model_parameterizations_detailed} details the configuration for each key experiment, linking the reported results (identified by their table or figure number) to the underlying methodological choices, including the prior distribution (source distribution for diffusion), the structure of the denoising model $p_{1|t}^\theta$, the proposal distribution $h_{1|t}(\rvx_1 | \rvx_t)$ used within the loss computation (if applicable), and the specific \tcsm training objective function employed.

\begin{table}[h!]
  \vspace{-1em}
\centering
\label{tab:model_parameterizations_detailed}
\resizebox{\textwidth}{!}{%
\begin{tabular}{|p{3.5cm}|p{1.8cm}|p{1.2cm}|p{4.5cm}|p{2.5cm}|p{3.5cm}|}
\hline
\textbf{Model Variant / Name} \newline (Defining Section/Eq.) & \textbf{Experiment} \newline (Table/Figure) & \textbf{Prior} \newline (Source Dist.) & \textbf{Denoising Model Parameterization $p_{1|t}^\theta$} & \textbf{Proposal distribution $h(\rvx_1 | \rvx_t)$} & \textbf{Training Objective} \newline (Equation / Description) \\
\hline \hline

\rowcolor{gray!10} \multicolumn{6}{|l|}{\textit{Experiments on \texteight (\Cref{tab:text8main})}} \\ \hline

TCSM Uniform \tcsmscore \newline (\cref{sec:tcsm_pretraining_from_p1}) & \cref{tab:text8main} & Uniform & Factorized: \newline $p_{1|t}^{\theta}(\mathbf{x}_1 | \mathbf{x}_t) = \prod_{i=1}^L p_{1|t}^{\theta}(x_{1}^i | \mathbf{x}_{t})$ & $p_{1|t}(\rvx_1|\rvx_t)$ & \tcsmscore with Gen KL \newline (Monte Carlo version: \cref{eq:tcsm_score_factorized}) \\
\hline

TCSM Uniform \tcsmdistrib \newline (\cref{sec:tcsm_pretraining_from_p1}) & \cref{tab:text8main} & Uniform & Factorized (as above) & $p_{1|t}(\rvx_1|\rvx_t)$ & \tcsmdistrib with KL \newline (Cross-Entropy: Factorized version of \cref{eq:tcsm_distrib}) \\
\hline

TCSM Absorb \tcsmscore \newline (\cref{sec:tcsm_pretraining_from_p1}) & \cref{tab:text8main} & Mask (Absorbing) & Factorized (as above) & $p_{1|t}(\rvx_1|\rvx_t)$ & \tcsmscore with Gen KL \newline (Monte Carlo version: \cref{eq:tcsm_score_factorized}) \\
\hline

TCSM Absorb \tcsmdistrib \newline (\cref{sec:tcsm_pretraining_from_p1}) & \cref{tab:text8main} & Mask (Absorbing) & Factorized (as above) & $p_{1|t}(\rvx_1|\rvx_t)$ & \tcsmdistrib with KL \newline (Cross-Entropy: Factorized version of \cref{eq:tcsm_distrib}) \\
\hline

TCSM Absorb \tcsmdistrib \newline (\cref{sec:tcsm_learning_from_dre}) & \cref{tab:text8main} & Mask (Absorbing) & Density Ratio (Strategy ii): \newline $p_{1|t}^{\theta}(\mathbf{x}_1 | \mathbf{x}_t) \propto p_{1|t}^{\text{ref}}(\mathbf{x}_1 | \mathbf{x}_t) \exp(f_\theta(\mathbf{x}_1 | \mathbf{x}_t))$ \newline (Ref = Pre-trained TCSM Absorb \tcsmdistrib) & $p_{1|t}^{\text{ref}} = p_{1|t}^{\text{pre}}$ & \textbf{Post-training phase:} \newline DRE objective using Gen KL (\cref{tab:density_ratio_objectives}, column 3) \\
\hline \hline

\rowcolor{gray!10} \multicolumn{6}{|l|}{\textit{Experiments on \owt (\Cref{tab:owt_zero_shot_ppl}, \Cref{fig:owt_ppl_26b}, \Cref{fig:owt_loss_curves})}} \\ \hline

TCSM Uniform \tcsmscore \newline (\cref{sec:tcsm_pretraining_from_p1}) & \cref{tab:owt_zero_shot_ppl} & Uniform & Factorized (as above) & $p_{1|t}(\rvx_1|\rvx_t)$ & \tcsmscore with Gen KL \newline (\cref{eq:tcsm_score_factorized}) \\
\hline

TCSM Uniform \tcsmdistrib \newline (\cref{sec:tcsm_pretraining_from_p1}) & \cref{tab:owt_zero_shot_ppl} & Uniform & Factorized (as above) & $p_{1|t}(\rvx_1|\rvx_t)$ & \tcsmdistrib with KL \newline (Factorized version of \cref{eq:tcsm_distrib}) \\
\hline

TCSM Absorb \tcsmdistrib \newline (\cref{sec:tcsm_pretraining_from_p1}) & \cref{tab:owt_zero_shot_ppl} & Mask (Absorbing) & Factorized (as above) & $p_{1|t}(\rvx_1|\rvx_t)$ & \tcsmdistrib with KL \newline (Factorized version of \cref{eq:tcsm_distrib}) \\
\hline

TCSM Absorb \tcsmdistrib \newline (\cref{sec:tcsm_learning_from_dre}) & \cref{tab:owt_zero_shot_ppl} & Mask (Absorbing) & Density Ratio (Strategy ii, as above) & $p_{1|t}^{\text{ref}} = p_{1|t}^{\text{pre}}$ & \textbf{Post-training phase:} \newline DRE objective using Gen KL (\cref{tab:density_ratio_objectives}, column 3) \\
\hline

TCSM-Bert \newline (\cref{sec:tcsm_pretraining_from_p1}) & \cref{fig:owt_ppl_26b} & Mask (Absorbing) & Factorized (as above) & $p_{1|t}(\rvx_1|\rvx_t)$ & \tcsmdistrib with KL \newline (Target $p_{1|t}$ uses BERT approx. for $p_1$) \\
\hline

TCSM-AR \newline (\cref{sec:tcsm_pretraining_from_p1}) & \cref{fig:owt_ppl_26b} & Mask (Absorbing) & Factorized (as above) & $p_{1|t}(\rvx_1|\rvx_t)$ & \tcsmdistrib with KL \newline (Target $p_{1|t}$ uses AR approx. for $p_1$) \\
\hline

TCSM-Hollow \newline (\cref{sec:tcsm_pretraining_from_p1}) & \cref{fig:owt_ppl_26b} & Mask (Absorbing) & Factorized (as above) & $p_{1|t}(\rvx_1|\rvx_t)$ & \tcsmdistrib with KL \newline (Target $p_{1|t}$ uses Hollow approx. for $p_1$) \\
\hline

TCSM Distillation \newline (\cref{sec:tcsm_distillation}) & \cref{fig:owt_loss_curves} & Mask (Absorbing) & Factorized (Student Model) & $p_{1|t}(\rvx_1|\rvx_t)$ & \tcsmdistrib with KL \newline (Target $p_{1|t}$ uses AR Teacher via Top-K approx.) \\
\hline \hline

\rowcolor{gray!10} \multicolumn{6}{|l|}{\textit{Density Ratio Estimation Bregman Comparison (\Cref{tab:tcsm_bregman_dre})}} \\ \hline

TCSM BCE (Reimpl.) \newline (\cref{sec:tcsm_learning_from_dre}) & \cref{tab:tcsm_bregman_dre} & Mask (Absorbing) & Density Ratio (Strategy ii) & $p_{1|t}^{\text{ref}} = p_{1|t}^{\text{pre}}$ & DRE objective using BCE \newline (\cref{tab:density_ratio_objectives}, column 3) \\
\hline

TCSM LSIF \newline (\cref{sec:tcsm_learning_from_dre}) & \cref{tab:tcsm_bregman_dre} & Mask (Absorbing) & Density Ratio (Strategy ii, as above) & $p_{1|t}^{\text{ref}} = p_{1|t}^{\text{pre}}$ & DRE objective using LSIF \newline (\cref{tab:density_ratio_objectives}, column 3) \\
\hline

TCSM Gen KL \newline (\cref{sec:tcsm_learning_from_dre}) & \cref{tab:tcsm_bregman_dre} & Mask (Absorbing) & Density Ratio (Strategy ii, as above) & $p_{1|t}^{\text{ref}} = p_{1|t}^{\text{pre}}$ & DRE objective using Gen KL \newline (\cref{tab:density_ratio_objectives}, column 3) \\
\hline \hline

\rowcolor{gray!10} \multicolumn{6}{|l|}{\textit{Post-training Fine-tuning Experiments}} \\ \hline

TCSM Reward Tuning \newline (\cref{sec:post_training_reward}) & \cref{fig:grid_samples} (Synthetic) & Uniform & Standard denoising model $p_{1|t}^\theta$ (Factorized assumed) & $p_{1|t}^{\text{pre}}$ & Weighted KL objective for $p^R_{1|t}$ with $\gN^{\text{full}}$ \newline (\cref{alg:reward_guided_training}, Line 7) \\
\hline

\end{tabular}%
}
\caption{Detailed summary of model configurations for experiments reported in the paper.}
\end{table}

\section{Related Works}

Generative modeling~\citep{goodfellow2014generative,ho2020denoising,d3pm,song2020scoresde,song2019generative,zhai2024normalizing} has seen significant advances through diffusion models, initially developed for continuous data like images. Applying these principles effectively to discrete data, such as text or graphs, presents unique challenges due to the non-differentiable nature of discrete spaces and has spurred several distinct lines of research.

\paragraph{Score Matching and Continuous Diffusion Foundations}
The theoretical underpinning for many modern diffusion models is \textbf{Score Matching} \citep{hyvarinen2009estimation}. This method estimates parameters $\theta$ for models $p(\rvx;\theta) \propto q(\rvx;\theta)$ with intractable normalization constants by minimizing the difference between the model's score function $\nabla_\rvx \log q(\rvx; \theta)$ and the data score $\nabla_\rvx \log p_x(\rvx)$. A key insight by \citet{hyvarinen2009estimation} showed that this objective can be computed using only the model score and its derivatives on data samples, avoiding the need for the true data density or normalization constant. A crucial practical development was \textbf{Denoising Score Matching (DSM)} \citep{vincent2011connection}, which established an equivalence between score matching on noise-perturbed data and training specific denoising autoencoders (DAEs). DSM matches the model's score at a noisy point $\tilde{\rvx}$ to the score of the conditional denoising distribution, avoiding the second derivatives required by original score matching and making score estimation more tractable.

These principles were central to the development of diffusion models. Early work framed diffusion via forward (noising) and reverse (denoising) Markov processes trained with a variational lower bound (VLB) \citep{sohl2015deep}. Subsequently, score-based generative models \citep{song2019generative} directly applied DSM by training a single \textbf{Noise Conditional Score Network (NCSN)} $s_\theta(\rvx, \sigma)$ to estimate scores $\nabla_\rvx \log q_{\sigma_i}(\rvx)$ across multiple noise levels $\{\sigma_i\}$, using annealed Langevin dynamics for sampling. \textbf{Denoising Diffusion Probabilistic Models (DDPM)} \citep{ho2020denoising} refined this, particularly for images, by parameterizing the reverse process to predict the added noise $\epsilon$ and using a simplified VLB-derived objective shown to be equivalent to DSM over multiple noise scales. While highly successful, standard DSM can suffer from high variance at low noise levels. \textbf{Target Score Matching (TSM)} \citep{tsm} addresses this by incorporating knowledge of the clean target score $\nabla\log p(\rvx)$ when available, leading to lower variance estimators in the low-noise regime.

\paragraph{Continuous Diffusion for Discrete Data}
One approach to handle discrete data involves operating within continuous embedding spaces, adapting standard continuous diffusion techniques. This allows leveraging powerful continuous models but requires mapping back to the discrete space. \textbf{Diffusion-LM} \citep{li2022diffusion} applied continuous diffusion to word embeddings, enabling controllable text generation via gradient guidance during sampling. \textbf{Plaid} \citep{plaid} focused on likelihood-based training for text, jointly optimizing embeddings and model parameters using the VLB, categorical reparameterization, an output prior, a learned conditional likelihood $p(x|z_0)$, and self-conditioning. \textbf{CDCD} \citep{dieleman2022continuousdiffusioncategoricaldata} employed a probability flow ODE on embeddings, using score interpolation to jointly train embeddings and a denoising Transformer with a cross-entropy loss, along with time warping. \textbf{Bit Diffusion} \citep{chen2022analog} treated the binary representation of discrete data as continuous "analog bits," enhanced by self-conditioning and asymmetric time intervals. While effective, these methods rely on continuous approximations or embeddings, motivating research into models operating directly on discrete domains. Furthermore, many of these works explore non-autoregressive approaches enabling parallel generation~\citep{bowman2015generating,gu2017non,li2022diffusion,hoogeboom2021argmax,savinov2021step,che2017maligan,zhang2020learning,yu2017seqgan,de2019training,deng2020residual}, contrasting with sequential autoregressive models.

\paragraph{Discrete Diffusion Models}
A parallel line of research develops diffusion processes inherently designed for discrete state spaces, often using Markov chains. Building on early foundations \citep{sohl2015deep, hoogeboom2021argmax}, \textbf{D3PM} \citep{d3pm} generalized discrete diffusion using various structured transition matrices (e.g., uniform, absorbing, Gaussian-like) and trained via a hybrid VLB/cross-entropy loss. \citet{campbell_ctmc} extended this to Continuous-Time Markov Chains (CTMCs), deriving a continuous-time ELBO and proposing efficient sampling methods like tau-leaping and predictor-corrector schemes, leveraging factorization for high-dimensional data.

\paragraph{Score-like Analogues and Masking Mechanisms for Discrete Diffusion}
Instead of direct Markov chain simulation, other works define score-like quantities for discrete diffusion. The concrete score, defined as the ratio of marginal probabilities $p_t(\rvy)/p_t(\rvx)$, acts as a discrete analogue to the continuous score \citep{meng2022concrete, sedd}. \textbf{SEDD} \citep{sedd} trained models using a score entropy objective ($L_{DSE}$) derived from this ratio, connecting it to the ELBO and using Tweedie $\tau$-leaping for sampling. \citet{sun2022score} developed categorical ratio matching within a CTMC framework, learning singleton conditionals $p_t(x^d|\rvx^{\setminus d})$ with a tractable loss and an analytical reverse sampler. Building on this, \citet{ou2024your} showed that for absorbing diffusion, the concrete score factorizes into a time-independent conditional and a time-dependent scalar, simplifying the model (\textbf{RADD}) and yielding the Denoising Cross-Entropy (DCE) loss.

Masked (or absorbing) diffusion, which replaces tokens with a special [MASK] token during the forward process, has proven particularly effective. \textbf{MDLM} \citep{mdlm} introduced a substitution-based parameterization (SUBS) and derived a simplified Rao-Blackwellized ELBO equivalent to weighted Masked Language Modeling (MLM) losses, enabling generative training of encoder-only models. \citet{shi2024simplified} (\textbf{MD4}) further unified this framework, deriving a simple ELBO with SNR invariance properties similar to continuous diffusion and generalizing to state-dependent masking schedules.

Further research has refined the parameterization and mechanisms of discrete diffusion. \textbf{Reparameterized Discrete diffusion Models (RDM)} \citep{zheng2023reparameterized} identified an underlying route-and-denoise mechanism, simplifying the objective to cross-entropy on noisy tokens and enabling adaptive routing during sampling. \citet{liu2024think} proposed \textbf{Discrete Diffusion with Planned Denoising (DDPD)}, factorizing the reverse process into a planner (predicting corruption) and a denoiser, allowing adaptive sampling via the Gillespie algorithm guided by the planner.

Discrete Flow Matching offers another generalization pathway. \citet{discrete_flow_matching} defined probability paths interpolating discrete distributions and derived corresponding probability velocities, analogous to continuous flow matching, providing a unified sampling theory. \citep{campbell2024generative} formulated discrete flows using CTMCs, learning scores via cross-entropy and enabling inference-time flexibility by adjusting the rate matrix family without retraining, also unifying multimodal generation. Discrete diffusion principles have also been applied to structured data, such as graphs in \textbf{DiGress} \citep{vignac2022digress}, using specific noise transitions, auxiliary features, and classifier guidance.

\paragraph{Scaling and Adapting Pre-trained Models for Diffusion Language Modeling}
Significant recent effort has focused on scaling diffusion models for language generation, often by adapting large pre-trained autoregressive (AR) or masked language models (MLMs). \textbf{DiffusionBERT} \citep{he2022diffusionbert} integrated BERT into an absorbing-state diffusion framework, leveraging pre-trained weights and exploring novel noise schedules and time conditioning. \citet{ye2023diffusion} adapted pre-trained MLMs (like XLM-R) for generative tasks by finetuning with an RDM objective, enabling instruction-following capabilities. \textbf{AR2Diff} \citep{han2024ar2diff} proposed converting pre-trained AR models to diffusion models by enabling bidirectional attention and continuing training with a diffusion objective. \textbf{DiffuLLaMA} \citep{gong2024scaling} presented a continual pre-training method to adapt AR models (like LLaMA) into time-embedding-free diffusion models using attention mask annealing. \textbf{LLaDA} \citep{nie2025large} developed a large masked diffusion model trained with a masking objective, adapting standard pre-training and SFT pipelines for this non-autoregressive paradigm. These works demonstrate the potential of leveraging existing large model architectures and weights to build capable diffusion language models.

\paragraph{Guidance and Control in Discrete Diffusion}
Controlling the generation process of discrete diffusion models is vital for their application. Several approaches modify the sampling procedure or the model itself. \citet{nisonoff2024unlocking} introduced Discrete Guidance (DG), a principled framework for guidance in CTMC-based models, offering exact predictor guidance (PG), predictor-free guidance (PFG), and an efficient Taylor-Approximated Guidance (TAG) variant by exploiting tractable normalization constants during inference. \textbf{FK-steering} \citep{singhal2025general} provides a general inference-time steering approach using Feynman-Kac interacting particle systems, applicable even with non-differentiable rewards via parallel simulation and resampling. An alternative strategy involves finetuning the model itself to incorporate guidance. \citet{rector2024steering} proposed \textbf{Discrete Denoising Posterior Prediction (DDPP)}, a framework for steering pre-trained Masked Diffusion Models (MDMs) according to a reward function $R(\rvx_1)$. DDPP reframes steering as learning an amortized sampler (via finetuning the MDM) for a target posterior distribution proportional to $p_\theta^{\text{pre}}(\rvx_1) R(\rvx_1)$. By exploiting the relationship between the target denoising posterior, the pre-trained model's posterior, and the reward, DDPP derives several simulation-free training objectives, offering a scalable approach to bake reward-based control into the model. Other methods include informed corrector steps based on confidence scores combined with architectural changes and novel training objectives for masked diffusion \citep{zhao2024informed}, and adaptations of standard classifier-free or classifier-based guidance for discrete domains, sometimes coupled with improved ELBO formulations suitable for guidance \citep{schiff2024simple}.

\paragraph{LLM Distillation}
Our work also relates to LLM distillation \citep{xu2024survey}, which focuses on transferring capabilities from large teacher models to smaller student models. Common techniques involve distribution matching, specialized loss functions (e.g., \textbf{MiniLLM} \citep{gu2024minillm}, \textbf{DistiLLM} \citep{ko2024distillm}), using rationales \citep{hsieh2023distilling}, or dynamic data selection \citep{liu2024evolving}. While most existing methods distil knowledge between autoregressive models, our research explores knowledge transfer from powerful AR teachers to bidirectional diffusion students. This presents distinct challenges, particularly regarding the mismatch between the teacher's sequential generation process and the student's non-autoregressive, iterative refinement process, but potentially benefits from similar underlying principles aimed at effective knowledge transfer and mitigating distribution discrepancies.

\end{document}